\declaretheorem{corollary}
\tikzstyle{vecArrow} = [thick, decoration={markings,mark=at position
\pgfplotsset{compat=1.13}
\definecolor{COCcolor}{HTML}{1F77B4}
\definecolor{WLcolor}{HTML}{FDB863}
\definecolor{SLcolor}{HTML}{E66101}
\definecolor{WRcolor}{HTML}{B2ABD2}
\definecolor{SRcolor}{HTML}{5E3C99}
\newcommand{\pref}{\prettyref}
\newcommand{\eqdef}{\buildrel \mbox{\tiny\textrm{def}} \over =}
\newcommand{\restrname}{restriction domain of interest}
\newcommand{\restrsname}{restriction domains of interest}
\newcommand{\Restrsname}{Restriction Domains of Interest}
\newcommand{\restr}[2]{{#1}_{\restriction#2}}
\newcommand{\extendname}{\textsc{Extend}}
\newcommand{\fullyconnected}{\textsc{FullyConnected}}
\newcommand{\conv}{\textsc{2DConvolution}}
\newcommand{\bnorm}{\textsc{BatchNorm}}
\newcommand{\relu}{\textsc{ReLU}}
\newcommand{\maskrelu}{Masking \relu{}}
\newcommand{\mrelu}{\textsc{MReLU}}
\newcommand{\maxpool}{\textsc{MaxPool}}
\newcommand{\prer}[3]{\textsc{Pre}(\restr{#1}{#2}, #3)}
\newcommand{\weakprer}[3]{\textsc{WPre}(\restr{#1}{#2}, #3)}
\newcommand{\post}[2]{\textsc{Post}(#1, #2)}
\newcommand{\strongpost}[2]{\textsc{SPost}(#1, #2)}
\newcommand{\trimmedacas}[1]{\includegraphics[width=\linewidth]{#1}}
\DeclareMathOperator*{\argmax}{arg\,max}
\DeclareMathOperator*{\DPPre}{DPPre}
\newcommand{\subsubsubsection}[1]{\smallskip\noindent\textbf{\emph{#1}}\enspace}
\renewcommand{\hat}[1]{\widehat{#1}}
\newcommand{\hatr}[2]{\widehat{\restr{#1}{#2}}}
\newcommand{\habnetfirstlayer}[1]{
\begin{bmatrix}
    -1 & 0.25 & 1 \\
    1 & 0.5 & 1 \\
    0 & 1 & 0 \\
    0.5 & 0.5 & 2 \\
\end{bmatrix}
#1
+
\begin{bmatrix}
    1 \\
    -1 \\
    -1 \\
    -5 \\
\end{bmatrix}
}
\newcommand{\habnetsecondlayer}[1]{
\begin{bmatrix}
    -2 & 1 & 1  & 1 \\
    1  & 2 & -1 & 2 \\
\end{bmatrix}
#1
+
\begin{bmatrix}
    1 \\
    0 \\
\end{bmatrix}
}
\begin{document}
\title[A Symbolic Neural Network Representation and Applications]{A Symbolic
Neural Network Representation and its Application to 
Understanding, Verifying, and Patching Networks}

\author{Matthew Sotoudeh}
\affiliation{
  \department{Computer Science}
  \institution{University of California, Davis}
  \city{Davis}
  \state{California}
  \postcode{95616}
  \country{United States of America}
}
\email{masotoudeh@ucdavis.edu}

\author{Aditya V.\ Thakur}
\affiliation{
  \position{Assistant Professor}
  \department{Computer Science}
  \institution{University of California, Davis}
  \city{Davis}
  \state{California}
  \postcode{95616}
  \country{United States of America}
}
\email{avthakur@ucdavis.edu}

\begin{abstract}
    Analysis and manipulation of trained neural networks is a challenging and
    important problem. We propose a symbolic representation for
    piecewise-linear neural networks and discuss its efficient computation.
    With this representation, one can translate the problem of analyzing a
    complex neural network into that of analyzing a finite set of affine
    functions. We demonstrate the use of this representation for three
    applications. First, we apply the symbolic representation to computing
    weakest preconditions on network inputs, which we use to exactly visualize
    the advisories made by a network meant to operate an aircraft collision
    avoidance system. Second, we use the symbolic representation to compute
    strongest postconditions on the network outputs, which we use to perform
    bounded model checking on standard neural network controllers. Finally, we
    show how the symbolic representation can be combined with a new form of
    neural network to perform patching; i.e., correct user-specified behavior
    of the network.
\end{abstract}

\begin{CCSXML}
<ccs2012>
<concept>
<concept_id>10011007.10011006.10011008</concept_id>
<concept_desc>Software and its engineering~General programming languages</concept_desc>
<concept_significance>500</concept_significance>
</concept>
<concept>
<concept_id>10003456.10003457.10003521.10003525</concept_id>
<concept_desc>Social and professional topics~History of programming languages</concept_desc>
<concept_significance>300</concept_significance>
</concept>
</ccs2012>
\end{CCSXML}

\ccsdesc[500]{Software and its engineering~General programming languages}
\ccsdesc[300]{Social and professional topics~History of programming languages}

\maketitle

\section{Introduction}
\label{sec:Introduction}

\subsection{Introduction to Neural Networks}

The past decade has seen the rise of deep neural networks
(DNNs)~\cite{Goodfellow:DeepLearning2016} to solve a variety of problems,
including image recognition~\cite{Szegedy:CVPR2016,Krizhevsky:CACM2017},
natural-language processing~\cite{BERT:CoRR2018}, and autonomous vehicle
control~\cite{julian2018deep}. Typically, DNNs are trained using large data
sets, validated on a test set, and then deployed. As they permeate more and
more systems, there is growing need for tools to more deeply analyze and modify
such networks once they have been trained. This paper presents techniques for
understanding, verifying, and patching (correcting) trained neural networks.
These various applications rely on a new symbolic representation of neural
networks introduced in this paper.

In this work, we focus on the major class of \emph{piecewise-linear neural
networks}, which can be decomposed into a set of affine functions
(\pref{def:PWL}). For example, the function $\max(x, 0)$ is piecewise-linear;
when $x \leq 0$ it takes the affine form $x \mapsto 0$, and when $x > 0$ it
takes the affine form $x \mapsto x$. Although this may seem like a restrictive
definition, we will show in~\pref{sec:SymbolicRepresentation} that neural
networks using the most common building blocks (like \conv{}, \relu{}, and
\maxpool{} layers) are in fact piecewise-linear. Notably, this work deals with
\emph{trained} networks (we assume that the training has been already been
completed), and makes no restriction on the training process.

\subsection{Contributions}
\label{sec:Contributions}

\subsubsubsection{A symbolic representation for deep neural networks.}
In \pref{sec:SymbolicRepresentation}, we propose a \emph{symbolic
representation for neural networks} which expresses a complex,
highly-non-linear neural network in terms of a set of affine functions, each of
which fully captures the behavior of the network for a particular subspace of
the input domain. We refer to this subspace as the \restrname{}, and consider
in this work two-dimensional \restrsname{}, which we show
in~\pref{sec:Evaluation} leads to many important applications. Fundamentally,
this representation allows us to translate questions about the
highly-non-linear neural network into a series of questions about finitely-many
affine functions, which are one of the best-studied class of functions in
modern mathematics. As we will see, this symbolic representation enables
understanding, verifying, and patching trained neural networks.

\subsubsubsection{Understanding network behavior using weakest precondition.}
In \pref{sec:WeakestPrecondition}, we show how the symbolic representation can
be used to compute \emph{weakest preconditions} on neural networks; i.e., all
points in the input space which are mapped to a particular subset of the output
space by the network. We show that prior work can be adapted to compute
preconditions, but they are not guaranteed to produce the \emph{weakest}
precondition (i.e., they produce a subset of the points).  This
weakest-precondition primitive allows us to understand the behavior of a neural
network by \emph{exactly visualizing decision boundaries}.  In particular, we
apply this technique to the aircraft collision avoidance system ACAS Xu
\cite{julian2018deep}, comparing the use of the proposed symbolic
representation to prior work based on an (over-approximating) abstraction of
the network function.

In~\pref{sec:Evaluation-Precondition} we use the preconditions to plot network
decision boundaries, and compare against the preconditions found using a
representation from prior work. We find that the representations in prior work
are not precise enough to compute particularly useful preconditions, whereas
the representation used in this work can find weakest preconditions in a matter
of seconds.

\subsubsubsection{Bounded model checking of safety properties using strongest postcondition.}
In \pref{sec:StrongestPostcondition}, we show how the symbolic representation
can be adapted to compute \emph{strongest post-conditions} on the network
output, i.e. all outputs reachable by the network from a set of inputs.

In~\pref{sec:Evaluation-Postcondition}, we use this to perform \emph{bounded
model checking} \cite{biere2009bounded} on three reinforcement learning
controller models, comparing against the state-of-the-art neural network SMT
solver ReluPlex~\cite{reluplex:CAV2017}. We find that the ability to directly
handle disjunctions as well as re-use information across multiple verification
steps allows the proposed technique to verify significantly more steps in the
same time limit (including finding a counter-example to the safety
specification for one of the models). 

\subsubsubsection{Patching deep neural networks.}
In \pref{sec:PatchingNetworks}, we consider the problem of \emph{patching a
neural network}: changing a small number of weights in the network to precisely
manipulate the network decision boundaries (eg. to fix erroneous or undesired
behavior).  Three features in particular make this a challenging problem:
\begin{enumerate}
    \item Neural networks are usually high-dimensional and highly non-linear,
        pushing the bounds of traditional SMT solvers such as Z3
        \cite{TACAS:deMB08}.
    \item The behavior we would like to patch occurs over entire
        \emph{polytopes} (containing infinitely many points) in the input
        region, making it difficult to apply gradient-descent-based methods
        which assume a finite set of training points.
    \item One may not have the original training or test data for a network
        (eg. due to user privacy concerns), making it difficult to ensure that
        applying a patch to fix one set of erroneous behavior does not result
        in corrupting the behavior for another set of inputs.
\end{enumerate}

We discuss prior work that relies on over-approximations and/or differentiable
relaxations, but find them sub-optimal for this particular problem. We then
define a new class of neural networks termed \emph{Masking Networks}
(\pref{sec:MaskingNetworks}), and show how standard networks can be transformed
into equivalent Masking Networks.  Finally, we show how our symbolic
representation can be used on such networks to \emph{lower the problem of
patching on polytopes to that of patching on a finite set of vertices},
intuitively similar to how the simplex algorithm lowers optimization over a
polytope to optimization over a finite set of vertices. These changes allow us
to rephrase the problem of patching as solving a finite MAX SMT problem. We
discuss a number of approaches to this problem, including using the Z3 theorem
prover \cite{TACAS:deMB08}, which we find to be too slow for our purposes. We
then discuss an algorithm that can exactly and efficiently solve such problems
when only a single weight is changed. Finally, we greedily apply that solver to
find better solutions by changing multiple weights.

In~\pref{sec:Evaluation-Netpatch} we test this network patching technique on an
aircraft collision-avoidance network with three patch specifications. We
show quantitatively and qualitatively that patching is effective even with
relatively few iterations. We also find that patches made to one \restrname{}
can have beneficial effects on others (i.e., patches \emph{generalize}).

\pref{sec:Overview} presents an overview of the symbolic representation and its
applications, while \pref{sec:RelatedWork} describes related work and
\pref{sec:Conclusion} summarizes the main results of this paper.

\section{Overview}
\label{sec:Overview}

\subsection{Deep Neural Networks and Piecewise-Linear Functions}
\label{sec:Overview-DNNs}
Deep neural networks \cite{Goodfellow:DeepLearning2016} generally consist of
multiple \emph{layers} (which are themselves vector-valued functions) applied
sequentially to an input vector. 
Thus, a DNN can be thought of as a function $f = f_n \circ f_{n-1} \circ
\cdots \circ f_1$, where $f_i$ is function representing the $i$th layer.
Four layer types are particularly common in
feed-forward neural networks:

\begin{enumerate}
    \item \fullyconnected{} layers correspond to arbitrary affine maps, where
        the particular map is determined by the \emph{weight matrix} of the
        layer.
    \item \conv{} layers correspond to a restricted subset of affine maps
        particularly well-suited to image recognition and parameterized by a
        \emph{filter tensor}.
    \item \relu{} layers effectively enforce a lower-bound on the value of
        each coefficient in the output vector. If the output of the previous
        layer was $x$, then $\relu{}(x)_i = 0$ when $x_i \leq 0$ and $x_i$ when
        $x_i > 0$ (where $\relu{}(x)_i$ is the $i$th component of the layer's
        output vector). For example, $\relu{}((10, -2)) = (10, 0)$.
    \item \maxpool{} layers condense their input vectors by taking only the
        maximum value in each of a number of coefficient groups.
    \item \bnorm{} layers normalize each component given a fixed mean and
        variance.
\end{enumerate}
Of particular importance to our paper, all of these layers are 
examples of \emph{piecewise-linear functions}:

\begin{definition}
    \label{def:PWL}
    A function $f : A \to B$ is referred to as a \emph{piecewise-linear
    function} if its input domain $A$ can be partitioned by a finite set of
    (possibly unbounded) convex polytopes $\{ P_1, P_2, \ldots, P_n \}$ such
    that, within any partition $P_i \subseteq A$, there exists an \emph{affine}
    map $F_i : P_i \to B$ satisfying $f(x) = F_ix$ for any $x \in P_i$.
\end{definition}

In other words, each layer function can be broken up into finitely many affine
functions, with one of those functions being applied depending on the location
in the input space of $x$.
For example, the \relu{} function, when restricted to any one orthant (i.e.,
where the signs of all input coefficients are constant), corresponds to a
single affine map that zeros out entries with a non-positive sign.  Because
(1) there are finitely many orthants that together partition the input space,
(2) each orthant can be expressed as a convex polytope, and (3) within each
orthant $\relu{}(x)$ is affine, we can say that $\relu{}$ is a
\emph{piecewise-linear} function.

\subsection{\Restrsname{}}
\label{sec:RestrictionDescription}

\begin{figure}
    \centering
    \newcommand{\Depth}{2}
\newcommand{\Height}{2}
\newcommand{\Width}{2.3}
\begin{tikzpicture}
\coordinate (O) at (0,0,0);
\coordinate (A) at (0,\Width,0);
\coordinate (B) at (0,\Width,\Height);
\coordinate (C) at (0,0,\Height);
\coordinate (D) at (\Depth,0,0);
\coordinate (E) at (\Depth,\Width,0);
\coordinate (F) at (\Depth,\Width,\Height);
\coordinate (G) at (\Depth,0,\Height);

\coordinate (P) at (0,1.3,0);
\coordinate (Q) at (0,1.3,\Height);
\coordinate (R) at (\Depth,.4,\Height);
\coordinate (S) at (\Depth,.4,0);
\coordinate (XC) at (.6,.6,0); %
\coordinate (AC) at (-.2,2.5,0); %

\coordinate (ArrowStart) at (\Depth-1.1,.6,0);
\coordinate (ArrowEnd) at (4-.1,.6,0);

\coordinate (fArrowStart) at (\Depth-.5,1.3,0);
\coordinate (fArrowEnd) at (4-.5,1.3,0);

\coordinate (P1) at (3.5+.1,1,0);
\coordinate (Q1) at (3.5,0.5,0);
\coordinate (R1) at (3.5+\Depth-1,0,0);
\coordinate (S1) at (3.5+\Depth-.1,1,0);
\coordinate (T1) at (3.5+\Depth-.5,1.5,0);
\coordinate (U1) at (3.5+\Depth-1.5,.8,0);
\coordinate (XC1) at (4.5,.6,0); %

\draw (O) -- (C) -- (G) -- (D) -- cycle;%
\draw (O) -- (A) -- (E) -- (D) -- cycle;%
\draw (O) -- (A) -- (B) -- (C) -- cycle;%
\draw[blue,fill=red!20,opacity=0.8] (P) -- (Q) -- (R) -- (S) -- cycle;%
\draw (D) -- (E);
\draw[blue,fill=red!20,opacity=0.8] (P1) -- (Q1) -- (R1) -- (S1) -- (T1) -- (U1) -- cycle;%
\node at (XC1) {\small $\restr{f}{X}(x)$};
\draw[vecArrow] (ArrowStart) to (ArrowEnd);
\node[text width=5.7cm] at (5,.85,0) {\tiny $\restr{f}{X} \colon X \to B$};
\draw[vecArrow] (fArrowStart) to (fArrowEnd);
\node[text width=5.7cm] at (5,1.5,0) {\tiny $f \colon A \to B$};

\draw (E) -- (F) -- (G) -- (D);%
\draw (A) -- (B) -- (F) -- (E) -- cycle;%
\draw (C) -- (B) -- (F) -- (G) -- cycle;%

\node at (XC) {$X$};
\node at (AC) {$A$};

\coordinate (O1) at (3.5+0,0,0);
\coordinate (A1) at (3.5+0,\Width-.5,0);
\coordinate (BC) at (3.5-.1,\Width-.4,0);
\coordinate (D1) at (3.5+\Depth,0,0);
\coordinate (E1) at (3.5+\Depth,\Width-.5,0);
\draw (O1) -- (A1) -- (E1) -- (D1) -- cycle;%
\node at (BC) {$B$};

\end{tikzpicture}
    \caption{An illustration of restriction domain of interest.  The function
    $f$ has a three-dimensional domain and $\restr{f}{X}$ is $f$ restricted to
    a particular two-dimensional \restrname{} $X$.}
    \label{fig:restrict-domain}
\end{figure}

One particular insight which we will utilize throughout the paper is that, when
analyzing networks, one is usually only interested in a particular \emph{domain
of interest}. For example, suppose a network $f : \mathbb{R}^2 \to
\mathbb{R}^1$ predicts the probability of acquiring cancer in the next five
years, taking as input a vector with two components, $x = (x_1, x_2)$, with
$x_1$ being the age of the patient and $x_2$ being the percent of their
immediate family that has died of cancer. Then, although $f$ is theoretically
able to make predictions about, say, $1,000$-year-old individuals with $110\%$
of their immediate family having died of cancer, such scenarios are in practice
impossible. Furthermore, in particular scenarios, we may be concerned with an
even smaller subspace of the input domain, for example if we only want to
understand how the network classifies children under the age of $10$.

We call such restricted subsets of the input domain the \emph{\restrname{}} for
the particular analysis being performed, usually denoted by $X$, and to
indicate that we only intend to consider the behavior over the \restrname{}, we
will often write $\restr{f}{X}$ (read ``$f$ restricted to the \restrname{}
$X$''). This is illustrated in~\pref{fig:restrict-domain}, where a function
with a three-dimensional domain is restricted to a particular two-dimensional
\restrname{}. In~\pref{sec:Algorithms-ReLU2D} we will focus on two-dimensional
\restrsname{}, which we will show can help significantly improve the efficiency
of our analysis.

\begin{definition}
    \label{def:Restriction}
    A function $f : A \to B$ \emph{restricted} to a particular
    \emph{\restrname{}} $X \subseteq A$, denoted $\restr{f}{X}$, is a new
    function $\restr{f}{X} : X \to B$ such that $\restr{f}{X}(x) = f(x)$ for
    any $x \in X$.
\end{definition}

\subsection{A Symbolic Representation for Deep Neural Networks}
\label{sec:Overview-SymbolicRepresentation}

In this work, we develop a \emph{symbolic representation for a deep neural
network} $f = f_n \circ f_{n-1} \circ \cdots \circ f_1$, denoted $\hat{f}$, that
enables precise and efficient analyses of $f$. The symbolic representation
partitions the input domain of $f$ such that, within each partition, the output
of $f(x)$ is affine with respect to $x$.  That is, $\hat{f} = \{ (P_1, F_1),
(P_2, F_2), \ldots, (P_n, F_n) \}$, where the set $\{P_1, P_2, \ldots, P_n \}$
partitions the domain of $f$ and each $F_i$ is an affine map that exactly
matches the output of $f$ on any points in $P_i$.  The existence of
such a partitioning is guaranteed for most common neural networks
by~\pref{thm:PWL-Hat} and the fact that compositions of piecewise-linear
functions are themselves piecewise-linear.  

Our key insight is that \emph{the symbolic
representation $\hat{f}$ allows us to translate problems related to a single
highly-non-linear network $f$ into a series of problems dealing with
finitely-many affine functions.} The efficiency of this symbolic representation
can be further improved by noting that we are usually only concerned about
$\restr{f}{X}$ for some restriction domain of interest $X$
(\pref{def:Restriction}). Thus, instead of computing the full $\hat{f}$, we only
need to compute $\hatr{f}{X}$. In this paper, we consider two-dimensional
\restrsname{} $X$.

Consider the following neural network $N_1$ defined as:
\begin{equation}
    \label{eq:habitability-network}
    f(x) = \habnetsecondlayer{
        \relu{}\left(
            \habnetfirstlayer{
                \begin{bmatrix}
                    x_1 \\
                    x_2 \\
                    x_3 \\
                \end{bmatrix}
            }
        \right)
    }
\end{equation}
This network $N_1$ transforms points in a three-dimensional
space (coordinates $x_1, x_2, x_3$) to points in a two-dimensional space
(coordinates $y_1, y_2$). The output of this network may be interpreted,
perhaps, to be a prediction as to whether a particular insect can survive in a
certain spot. We define two \emph{classification regions}, the first is $H = \{
    y \mid y_2 > y_2 \}$, which we may interpret as the network predicting a
spot is ``habitable,'' while the second is $U = \{ y \mid y_1 > y_2 \}$, which
we might interpret as the network predicting a spot as ``uninhabitable.'' Note
that, \emph{in general}, classification regions need not span the entire output
space---for example, in this scenario, we do not define the classification of
the network when $y_1 = y_2$. Such a network might have been trained using
gradient descent and a set of labeled training points collected by surveying a
number of points in the (``real-world'') space of interest.

Suppose we are interested in how the network $N_1$ behaves on a
particular plot of land at sea-level, $X = \Hull(\{ (0, 0, 0), (0, 3, 0), (3,
0, 0), (3, 3, 0) \})$, where $\Hull(S)$ is the smallest convex set that
contains the set of points $S$.  We can compute $\hatr{f}{X}$ for the $f$ in
\pref{eq:habitability-network}, which is visualized
in~\pref{fig:habit-partitions}.  Each colored region in
\pref{fig:habit-partitions} shows a partitions $P_i$ of $X$ such that
$\restr{f}{P_i} = F_i$ can be written as an affine map.  The particular
$\hatr{f}{X} = \{ (P_1, F_1), (P_2, F_2), \ldots, (P_6, F_6)\}$ is listed
below:
\begingroup %
\setlength\arraycolsep{3pt}
{\small
\begin{align*}
    \hatr{f}{X} = \biggr\{
    \biggr(&\Hull(\{ (0.5, 1, 0), (1, 0, 0), (1.25, 1, 0) \}), 
    \hspace{1ex} x \mapsto \begin{bmatrix} 3 & 0 & -1\\ 1 & 1.25 & 3\\ \end{bmatrix}x + \begin{bmatrix} -2\\ -1\\ \end{bmatrix}\biggr), \\
    \biggr(&\Hull(\{ (0, 2, 0), (0, 3, 0), (0.5, 1, 0), (1.25, 1, 0), (1.75, 3, 0) \}), 
    \hspace{1ex} x \mapsto \begin{bmatrix} 3 & 1 & -1\\ 1 & 0.25 & 3\\ \end{bmatrix}x + \begin{bmatrix} -3\\ 0\\ \end{bmatrix}\biggr), \\
    \biggr(&\Hull(\{ (0, 1, 0), (0, 2, 0), (0.5, 1, 0) \}), 
    \hspace{1ex} x \mapsto \begin{bmatrix} 2 & 0.5 & -2\\ -1 & -0.75 & 1\\ \end{bmatrix}x + \begin{bmatrix} -2\\ 2\\ \end{bmatrix}\biggr), \\
    \biggr(&\Hull(\{ (0, 0, 0), (0, 1, 0), (0.5, 1, 0), (1, 0, 0) \}), 
    \hspace{1ex} x \mapsto \begin{bmatrix} 2 & -0.5 & -2\\ -1 & 0.25 & 1\\ \end{bmatrix}x + \begin{bmatrix} -1\\ 1\\ \end{bmatrix}\biggr), \\
    \biggr(&\Hull(\{ (1, 0, 0), (1.25, 1, 0), (3, 0, 0), (3, 1, 0) \}), 
    \hspace{1ex} x \mapsto \begin{bmatrix} 1 & 0.5 & 1\\ 2 & 1 & 2\\ \end{bmatrix}x + \begin{bmatrix} 0\\ -2\\ \end{bmatrix}\biggr), \\
    \biggr(&\Hull(\{ (1.25, 1, 0), (1.75, 3, 0), (3, 1, 0), (3, 3, 0) \}), 
    \hspace{1ex} x \mapsto \begin{bmatrix} 1 & 1.5 & 1\\ 2 & 0 & 2\\ \end{bmatrix}x + \begin{bmatrix} -1\\ -1\\ \end{bmatrix}\biggr)
    \biggr\}
\end{align*}
}
\endgroup

\begin{figure}[t]
    \centering
    \begin{subfigure}{.48\linewidth}
        \includegraphics[width=\linewidth]{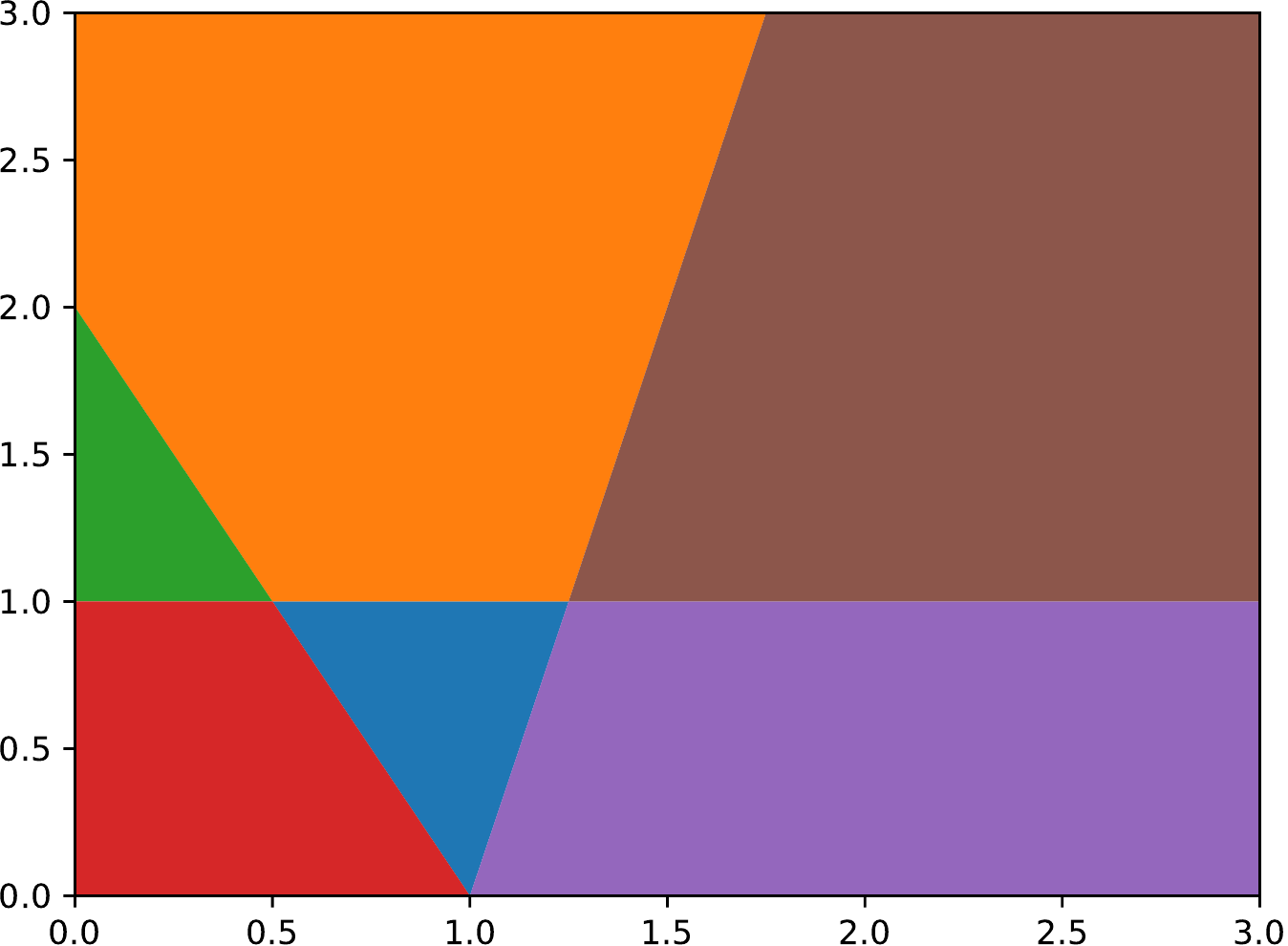}
        \caption{Linear partitions in $\hatr{f}{X}$ for the network}
        \label{fig:habit-partitions}
    \end{subfigure}
    \begin{subfigure}{.48\linewidth}
        \includegraphics[width=\linewidth]{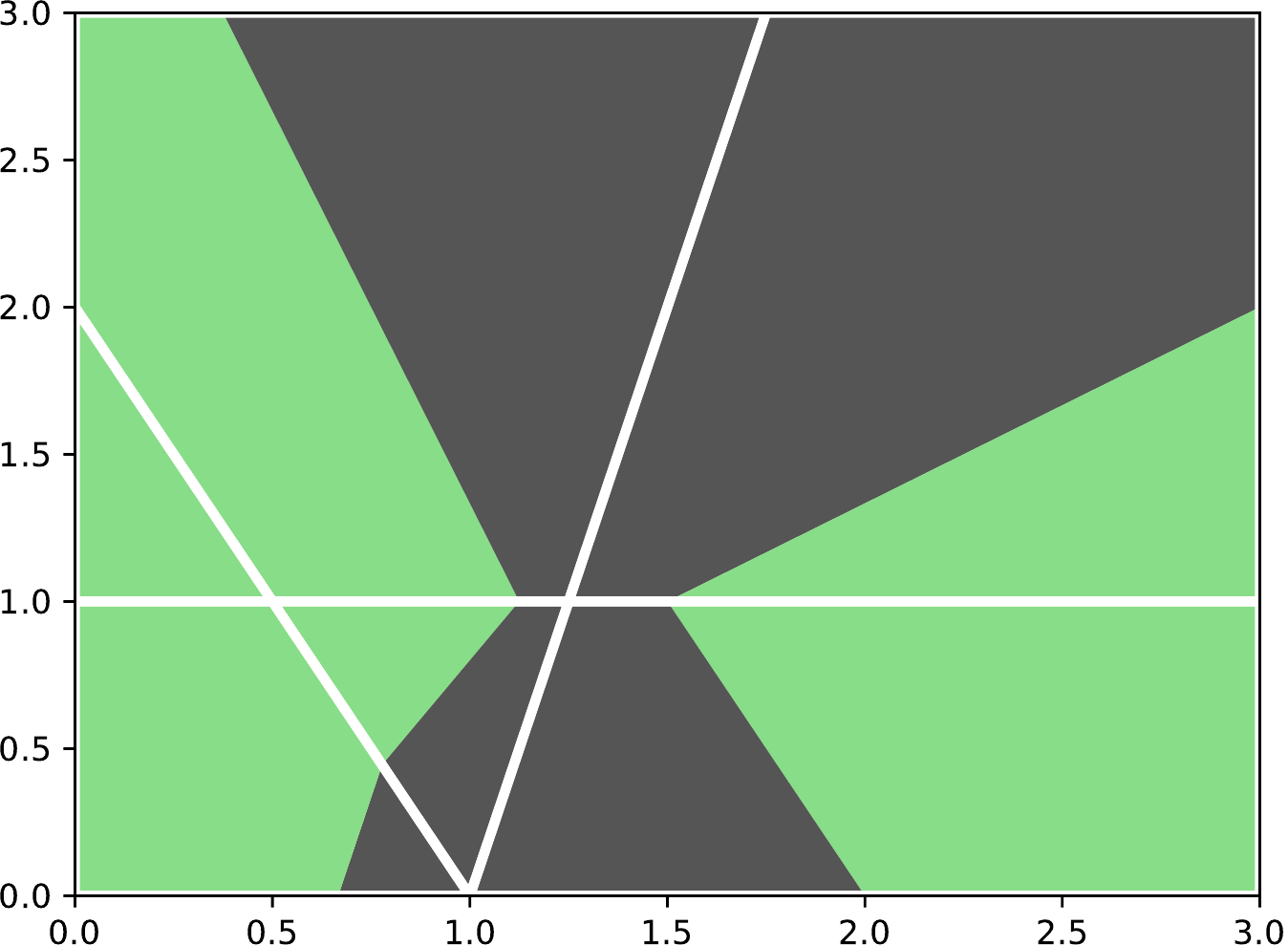}
        \caption{Classification regions of the network}
        \label{fig:habit-classes}
    \end{subfigure}
    \caption{Symbolic representation of neural network $N_1$ defined by function $f$ in \pref{eq:habitability-network}}
    \label{fig:habit-net}
\end{figure}

\subsection{Understanding Network Behavior using Weakest Precondition}
\label{sec:Overview-WeakestPrecondition}

A first natural question to ask about a neural network is \emph{what inputs
result in a particular set of outputs?}Answering such questions corresponds to computing the \emph{weakest
precondition}, i.e. all points in $X$ that are mapped to $Y$ by $\restr{f}{X}$:
$\weakprer{f}{X}{Y} = \{ x \in X \mid f(x) \in Y \}$.
\pref{sec:WeakestPrecondition-WithHat} shows how $\hatr{f}{X}$ can be used to
efficiently compute this set, which may in general be non-convex.

For example, in our running scenario, we may wish to ask: \emph{which areas does
the network $N_1$ predict will be habitable?} The polytope $H = \{ y \mid y_2 >
y_1 \}$ denotes the set of output points that are habitable, similarly $U$ for
uninhabitable points. Using $\hatr{f}{X}$ (\pref{fig:habit-partitions}), we can
compute $\weakprer{f}{X}{H}$ and $\weakprer{f}{X}{U}$, which represent the set
of \emph{all} habitable and uninhabitable points in $X$, respectively.
\pref{fig:habit-classes} plots these precondition sets on a two-dimensional axis
with each set assigned a different color. Thus,  we can \emph{precisely and
exactly} visualize the \emph{decision boundaries} of a deep neural network using
$\weakprer{f}{X}{Y}$.

Effectively, within each linear partition in~\pref{fig:habit-classes}
(delineated with white borders), the network's output is affine, and thus the
decision boundary is linear --- notice that the decision boundary is a single
straight line in any given linear partition in~\pref{fig:habit-classes}, even
though the overall decision boundaries have ``corners.'' This allows us to
quickly and precisely determine decision boundaries within each linear partition
using standard linear algebra techniques
(\pref{sec:WeakestPrecondition-WithHat}). An evaluation of $\weakprer{f}{X}{Y}$
on a large network is presented in \pref{sec:Evaluation-Precondition}, where we
produce figures like~\pref{fig:overview-acas} that show how the advisory made by
an aircraft collision-avoidance network depends on the location of an intruder.

\subsection{Bounded Model Checking of Safety Properties using Strongest Postcondition}
\label{sec:Overview-StrongestPostcondition}

Consider another neural network, which controls a motor attached to an \emph{inverted
pendulum} which is in turn described by its current position $\theta$ and
angular velocity $\omega$. The network takes $\theta$ and $\omega$ as inputs,
and produces as output the angular acceleration to apply to the pendulum by the
motor. The network is trained to keep the pendulum inverted, but it has not
been formally verified that the network correctly accomplishes this goal for
\emph{all} valid starting conditions.

In~\pref{sec:StrongestPostcondition}, we address the problem of \emph{bounded
model checking} for such control networks.  In particular, we show how the
symbolic representation enables us to compute the \emph{strongest
postcondition}, i.e. the set of all possible outputs given that the input is in
a particular region: $\strongpost{f}{X} = \{ f(x) \mid x \in X \}$.
Consequently, we show how to verify the following claim for the inverted
pendulum controller:
\emph{Starting from any valid initial condition and applying the network
controller to the system for $K$ time-steps, there is no timestep for which the
pendulum becomes non-inverted (i.e., dips below the horizontal).}

In \pref{sec:Evaluation-Postcondition}, we evaluate our approach against one
using ReluPlex \cite{reluplex:CAV2017} on the pendulum model and two other
standard neural network controller models from \citet{DBLP:conf/pldi/ZhuXMJ19}.

\begin{figure}[t]
    \centering
    \begin{subfigure}[t]{.35\linewidth}
        \trimmedacas{figures/netviz/fhat-all}
        \caption{Network visualization.}
        \label{fig:overview-acas}
    \end{subfigure}
    \hspace{2cm}
    \begin{subfigure}[t]{.35\linewidth}
        \trimmedacas{figures/netpatch/spec_bands/patch_005}
        \caption{Patched network visualization (plane icons removed to better
        show decision boundaries).}
        \label{fig:overview-acas-patched}
    \end{subfigure}
    \\
    Legend: \textcolor{COCcolor}{\rule[.2\baselineskip]{1em}{2pt}} Clear-of-Conflict,
    \textcolor{WRcolor}{\rule[.2\baselineskip]{1em}{2pt}} Weak Right, 
    \textcolor{SRcolor}{\rule[.2\baselineskip]{1em}{2pt}} Strong Right, 
    \textcolor{SLcolor}{\rule[.2\baselineskip]{1em}{2pt}} Strong Left, 
    \textcolor{WLcolor}{\rule[.2\baselineskip]{1em}{2pt}} Weak Left.
    \caption{Visualizing preconditions for and patching an aircraft collision
    avoidance network.}
    \label{fig:overview-acas-all}
\end{figure}

\subsection{Patching Deep Neural Networks}
\label{sec:Overview-PatchingNetworks}

Finally, we introduce the problem of \emph{patching} a neural network. Patching
a network $N$ entails fixing the behavior of $N$ in a particular region $R$ of
the input space by modifying the weights of $N$.  For example,
in~\pref{fig:overview-acas}, we see a visualization of the actions suggested by
an aircraft collision avoidance system when an intruder is at different places
relative to the ownship. For the most part, the network's outputs seem
reasonable; when the intruder is reasonably far away, it reports that the plane
is ``clear of conflict'' (blue), when the intruder is nearby but on the left of
the plane it instructs to make a ``hard right'' (dark purple), etc. However, we
can also see that there are a number of regions in the input space where the
network seems to make unsafe suggestions, eg. when the intruder is
behind-and-to-the-left of the plane, we see a ``band'' of light orange that
indicates the network has instructed a ``weak left turn,'' which would actually
turn the ownship \emph{towards} the intruder.  We may want to \emph{patch this
behavior}, i.e. modify the weights of the network such that the network instead
instructs ``weak right,'' which would turn the plane away from the intruder. A
\emph{patch specification} describes which regions of the input space we want to
change the classification over and which we want the classification to stay the
same. 

With this patch specification in hand, we can formulate the problem as a MAX-SMT
instance that can, in theory, be solved with an SMT solver such as
Z3~\cite{TACAS:deMB08}. However, the inherently non-linear and high-dimensional
nature of our networks makes this infeasible. To remedy this, we transform the
network into an equivalent \emph{Masking Network}, described
in~\pref{sec:MaskingNetworks}.  Then, we show in~\pref{thm:KeyPoints} how to use
$\hat{f}$ to translate this patch specification into a set of (finitely many)
\emph{key points} in the input space, for which patching on the key points is
equivalent to patching on the regions. This allows us to lower the problem of
patching on infinite regions to patching on finitely-many points, similar to how
the simplex algorithm lowers optimization on polytopes to optimization on
vertices. If we utilize these key points and restrict the number of weights we
change at any one iteration, we show how to translate the previously-non-linear
MAX-SMT instance into a linear one.  Finally, we discuss a greedy MAX-SMT solver
that can quickly find updates to individual weights that will maximize the
number of key points with the correct classification. The result of this process
is shown in~\pref{fig:overview-acas-patched}, where after changing only five
weights the offending region has been removed.

In~\pref{sec:Evaluation-Netpatch}, we perform this process for the ACAS Xu
network for a variety of different patch specifications, finding that it is
able to effectively patch network behavior and that behavior patched in one
region of the input space generalizes well to other regions.

\section{A Symbolic Representation for Deep Neural Networks}
\label{sec:SymbolicRepresentation}

In this section, we will introduce our symbolic function representation and
discuss a number of immediate theoretical results relating to its definition.

In all below discussion, we will use the term ``polytope'' or ``(bounded)
polytope'' to refer to the convex hull of finitely many points while we will
use ``(potentially unbounded) polytope'' to refer to the intersection of
finitely many half-spaces.

Given a function $f : A \to B$ where $A \subseteq \mathbb{R}^a, B \subseteq
\mathbb{R}^b$, we represent our \emph{symbolic representation $\hat{f}$} as a
set $\hat{f} = \{ (P_1, F_1), (P_2, F_2), \ldots, (P_n, F_n) \}$, where:

\begin{enumerate}
    \item The set $\{ P_1, P_2, \ldots, P_n \}$ should have each $P_i \subseteq
        A$ and, further, should partition $A$ (the domain of $f$), except
        possibly for overlapping boundaries.
    \item Given any such $(P_i, F_i)$ pair, with $F_i : P_i \to B$, the
        following holds: $\forall x \in P_i: f(x) = F_i(x)$.
\end{enumerate}

These two requirements describe a \emph{symbolic representation of $f$} by
breaking its input into regions $P_i$ where the behavior of $f$ is captured by
a (possibly simpler) function $F_i$.

However, as described, there is no reason to believe that the constituent $F_i$s
will be more analysis-friendly than the original function $f$; indeed, $\hat{f}
= \{ (A, f) \}$ meets the constraints listed above but clearly gives us no
additional insight into $f$.

To remedy this, we further restrict our description to a form which noticeably
improves our ability to analyze $f$:

\begin{enumerate}
    \setcounter{enumi}{2}
    \item Each $P_i$ is a convex polytope.
    \item Each $F_i : P_i \to B$ is affine.
\end{enumerate}

Indeed, the first condition ensures that the partitioning is efficiently
representable and manipulatable, while the second condition ensures that we can
use standard linear algebra techniques and algorithms to perform the desired
analysis within any given partition.

\begin{definition}
    \label{def:fhat}
    Given a function $f : A \to B$ where $A \subseteq \mathbb{R}^a$ and $B
    \subseteq \mathbb{R}^b$, we define the \emph{symbolic representation of
    $f$}, written $\hat{f}$, to be a set of tuples
    $\hat{f} = \{ (P_1, F_1), \ldots, (P_n, F_n) \}$,
    such that:
    \begin{enumerate}
        \item The set $\{ P_1, P_2, \ldots, P_n \}$ partitions the domain
            of $f$, except possibly for overlapping boundaries.
        \item Given any such $(P_i, F_i)$ pair, the following holds: $\forall x
            \in P_i: f(x) = F_i(x)$.
        \item Each $P_i$ is a convex polytope.
        \item Each $F_i : P_i \to B$ is affine.
    \end{enumerate}
\end{definition}

As discussed in~\pref{sec:RestrictionDescription}, we can often improve the
efficiency of the symbolic representation by only computing it over the
restricted domain $X$, which we call the \emph{symbolic representation of $f$
restricted to $X$} and write $\hatr{f}{X}$. Notably,~\pref{def:fhat} still
holds; we have simply explicitly stated that the domain $A$ is the set we have
defined as $X$. Thus, in definitions we will usually only use the explicit
syntax $\hatr{f}{X}$ when we wish to place restrictions on the possible values
of the domain $X$ (for example, to state that an algorithm only works when the
domain is bounded).

Finally, we define another primitive, denoted by the operator $\otimes$ and
sometimes referred to as $\extendname$, such that $\extendname{}(h, \hat{g}) =
h \otimes \hat{g} = \hat{h \circ g}$.
This is the primitive which we will implement in our algorithms, as it enables
easy composition across multiple layers in a neural network. For example,
suppose one wishes to compute $\hatr{f}{X}$ where $f = f_n \circ f_{n-1} \circ
\cdots \circ f_1$, and has access to algorithms for computing
$\extendname{}(f_i, \cdot)$ for each composed function $f_i$.

Then, we can initially define the \emph{identity map} $I : x \mapsto x$, which
is affine across its entire input space, so we always have $\hatr{I}{X} = \{
    (X, I) \}$
Then, by the definition of $\extendname{}(f_1, \cdot)$, we can compute:
$f_1 \otimes \hatr{I}{X} = \hatr{(f_1 \circ I)}{X} = \hatr{f_1}{X}$ (The final
equality holds as, by the definition of the identity map $I$, $g \circ I = g$
holds for any function $g$.)

We can then iteratively apply this procedure to inductively compute
$\hatr{(f_i \circ f_{i-1} \circ \cdots f_1)}{X}$ from $\hatr{(f_{i-1} \circ
\cdots f_1)}{X}$ like so:
$f_i \otimes \hatr{(f_{i - 1} \circ \cdots \circ f_1)}{X} = \hatr{(f_i \circ
f_{i - 1} \circ \cdots \circ f_1)}{X}$
until we have computed
$\hatr{(f_n \circ f_{n-1} \circ \cdots \circ f_1)}{X} = \hatr{f}{X}$.

A number of properties follow immediately from these definitions, which we
summarize here and prove in full in
Appendices~\ref{app:no-fhat}---\ref{app:network-hat}.
First, we find a number of \emph{negative results}, describing functions (or
classes of functions) for which $\hat{f}$ and $\hatr{f}{X}$ cannot be computed.

\begin{restatable}{theorem}{ThmNoFHat}
    \label{thm:no-fhat}
    There exists a continuous, fully-differentiable function $f$ such that
    $\hat{f}$ does not exist.
\end{restatable}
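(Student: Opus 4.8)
The plan is to exhibit a concrete smooth function whose graph is nowhere affine, so that no finite partition of its domain into convex polytopes with affine pieces can exist. The natural candidate is the one-dimensional parabola $f : \mathbb{R} \to \mathbb{R}$ given by $f(x) = x^2$. This $f$ is continuous and $C^\infty$ (in particular, fully differentiable), so the only thing left to establish is that $\hat{f}$, in the sense of \pref{def:fhat}, cannot exist. Note that by the equivalence between \pref{def:PWL} and the existence of $\hat{f}$, this amounts to showing $x^2$ is not piecewise-linear.

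I would argue by contradiction. Suppose $\hat{f} = \{(P_1, F_1), \ldots, (P_n, F_n)\}$ satisfies the four conditions of \pref{def:fhat}. Each $P_i$ is a convex polytope in $\mathbb{R}$, hence a (possibly unbounded) closed interval, and $\bigcup_{i} P_i = \mathbb{R}$. First I would observe that at least one $P_i$ must contain a nondegenerate interval $[a, b]$ with $a < b$: otherwise every $P_i$ is a single point, and then $\bigcup_{i} P_i$ is a finite set, contradicting $\bigcup_{i} P_i = \mathbb{R}$. Fix such an index $i$ and such $a < b$. By condition (2) of \pref{def:fhat}, $f$ agrees on $[a, b]$ with the affine map $F_i$, say $F_i(x) = cx + d$. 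Then $x^2 = cx + d$ for every $x \in [a, b]$; but $x^2 - cx - d$ is a polynomial with leading coefficient $1$, hence nonzero, and so has at most two roots, which is incompatible with its vanishing on the infinite set $[a, b]$. Equivalently, the three points $(a, a^2)$, $((a+b)/2, ((a+b)/2)^2)$, and $(b, b^2)$ on the graph of $f$ are not collinear, so $f$ is not affine on $[a, b]$. This contradiction shows $\hat{f}$ does not exist.

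There is essentially no hard step here: the argument is just that strict convexity is incompatible with being affine on any set of nonempty interior, combined with the observation that finitely many convex polytopes cover $\mathbb{R}$ only if one of them is nondegenerate. The only point requiring a moment of care is ruling out that degenerate case (all putative pieces being single points), and reading the final step as ``three graph points are not collinear'' makes the ``not affine on $[a,b]$'' conclusion robust and avoids any appeal to polynomial-root counting if one prefers. I would remark that the same proof works verbatim for any strictly convex $C^\infty$ function; one could equally take $f(x) = e^x$ or $f(x) = \sin(x)$, the latter additionally exhibiting infinitely many distinct tangent slopes, which by itself precludes a finite affine partition.
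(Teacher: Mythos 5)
Your proposal is correct and follows essentially the same route as the paper: both take $f(x) = x^2$, observe that a finite partition of $\mathbb{R}$ into convex polytopes forces some piece to contain more than one point, and then derive a contradiction from $f$ agreeing with an affine map there. The only cosmetic difference is the final step (your polynomial-root-counting/non-collinearity argument versus the paper's midpoint computation forcing $x_1 = x_2$), which does not change the substance.
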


\begin{restatable}{theorem}{ThmNoRestrictedFHat}
    \label{thm:no-restricted-fhat}
    There exists a continuous, fully-differentiable function $f$ such that
    $\hatr{f}{X}$ does not exist for any non-singleton and non-empty choice of
    $X$.
\end{restatable}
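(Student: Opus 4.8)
The plan is to exhibit one everywhere-smooth, strictly convex function and show it resists any finite affine partition on every non-degenerate domain; the natural candidate (and presumably the one already used for \pref{thm:no-fhat}) is $f : \mathbb{R} \to \mathbb{R}$, $f(x) = e^{x}$ --- and, if one prefers the domain of $f$ to match an $a$-dimensional network, $f(x) = \sum_{i=1}^{a} e^{x_i}$ works verbatim, since the argument is insensitive to the ambient dimension. This $f$ is continuous and fully differentiable, so all that remains is to show $\hatr{f}{X}$ does not exist for the relevant class of $X$. I would read ``$X$'' here as a convex polytope with more than one point, matching its use throughout the paper; I think it is worth stating explicitly that some such reading is needed, because if $X$ were permitted to be an arbitrary two-point set $\{p, q\}$ then $\{(\{p\}, x \mapsto f(p)), (\{q\}, x \mapsto f(q))\}$ already witnesses $\hatr{f}{X}$, so the real content of the theorem concerns domains containing a segment of positive length.

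The first step I would carry out is a single analytic fact: \emph{$f$ agrees with an affine map on no line segment of positive length}. For a segment $S = \{a + tv : t \in [0,1]\}$ with $v \neq 0$, let $g(t) = f(a + tv)$; then $g''(t) = v^{2} e^{a + tv} > 0$ in the one-dimensional case (and $g''(t) = \sum_i v_i^{2} e^{a_i + t v_i} > 0$ in the higher-dimensional case, using $v \neq 0$). A one-variable function with everywhere-positive second derivative is strictly convex, hence cannot coincide with any affine function on an interval of positive length. This is the calculus core of the proof.

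I would then argue by contradiction. Suppose $\hatr{f}{X} = \{(P_1, F_1), \ldots, (P_n, F_n)\}$ exists for some polytope $X$ containing distinct points $p \neq q$. By convexity of $X$, the segment $S = \{p + t(q - p) : t \in [0,1]\}$ lies in $X$ and has positive length. The sets $P_i \cap S$ are convex, finitely many, and cover $S$; inside the affine line through $p$ and $q$ each is a subinterval of $S$. Finitely many points cannot cover a positive-length interval, so some $P_i \cap S$ is a subinterval $S'$ of positive length. On $S' \subseteq P_i$ we have $f = F_i$ with $F_i$ affine, so $f$ restricted to $S'$ is affine --- contradicting the analytic fact. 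Therefore no such $\hatr{f}{X}$ exists, proving the claim.

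The step I expect to be the only genuine obstacle is not any computation but the bookkeeping around the quantifier: pinning down which $X$ the phrase ``any non-singleton and non-empty choice of $X$'' is meant to range over, so that the statement is simultaneously true and captures the intended phenomenon (that every honestly one- or two-dimensional restriction domain is already too rich for a finite affine description of this $f$). Once that is fixed to ``polytope with more than one point'' (or, equivalently here, ``contains a positive-length segment''), the rest --- reducing from a polytope to a segment it contains, plus the strict-convexity observation --- is routine.
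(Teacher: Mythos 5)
Your proposal is correct and follows essentially the same route as the paper: the paper also picks a smooth strictly convex witness (it reuses $f(x)=x^2$ from \pref{thm:no-fhat}), notes that a non-singleton, non-empty \emph{polytope} $X$ contains infinitely many points so some convex partition piece contains at least two distinct points, and derives the contradiction from affineness there (via a midpoint identity rather than your second-derivative/segment argument). Your explicit remark that $X$ must be read as a polytope (ruling out, e.g., a two-point $X$ with singleton partitions) matches the reading the paper's own proof silently adopts.
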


Both of these theorems can be understood by considering the function $f(x) =
x^2$. The key insight is that $f$ behaves non-linearly on \emph{its entire
input domain}; there are no two $x_1 < x_2$ such that $f(x + x') = f(x) +
f(x')$ and $f(cx) = cf(x)$ for all $x_1 \leq x, x' \leq x_2$.
However, as the next theorem shows, sometimes restricting the function to a
particular input domain can make the symbolic representation computable.

\begin{restatable}{theorem}{ThmOnlyRestrictedFHat}
    \label{thm:only-restricted-fhat}
    There exists a continuous, fully-differentiable function $f$ and a
    non-singleton, non-empty polytope $X$ such that $\hat{f}$ does not exist
    but $\hatr{f}{X}$ does.
\end{restatable}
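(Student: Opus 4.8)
The plan is to build a single scalar function that is perfectly affine on part of its domain but genuinely non-linear everywhere on the rest, and then take $X$ to lie entirely inside the affine part. Concretely, I would set $f : \mathbb{R} \to \mathbb{R}$ with $f(x) = 0$ for $x \le 0$ and $f(x) = x^2$ for $x > 0$. This $f$ is continuous, and it is differentiable at every point: away from the origin it is obviously smooth, while at $0$ both one-sided derivatives vanish, so $f'(0) = 0$ and $f'$ is continuous. If ``fully-differentiable'' is to be read as $C^\infty$, I would instead replace the branch $x > 0$ by the standard flat-to-infinite-order piece $e^{-1/x}$; it glues to the constant $0$ smoothly at the origin and, like $x^2$, has non-vanishing second derivative on $[1,2]$, so nothing downstream changes.

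First I would check that $\hatr{f}{X}$ exists for $X = [-1, 0]$, which is the convex hull of the two points $-1$ and $0$ and hence a non-empty, non-singleton polytope. On $X$ we have $f \equiv 0$, an affine map, so the one-element set $\{ ([-1,0], x \mapsto 0) \}$ satisfies all four clauses of \pref{def:fhat}: its single region is $X$ itself (which partitions the restricted domain trivially), $X$ is a convex polytope, and the affine map $x \mapsto 0$ agrees with $f$ pointwise on $X$. Hence $\hatr{f}{X}$ exists.

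The only substantive step is to show that $\hat{f}$ does not exist, and here I would mirror the argument behind \pref{thm:no-fhat}. Suppose for contradiction that $\hat{f} = \{(P_1,F_1),\ldots,(P_n,F_n)\}$ is a valid symbolic representation of $f$ on all of $\mathbb{R}$. The regions $P_i$ are one-dimensional convex polytopes (intervals) and together they cover $[1,2]$; since only finitely many of them meet $[1,2]$, at least one region $P_j$ must contain two distinct points $a < b$ of that interval, and by convexity the non-degenerate interval $[a,b]$ lies inside $P_j \subseteq (0,\infty)$. The defining property of $\hat{f}$ then forces $x^2 = f(x) = F_j(x)$ for all $x \in [a,b]$ with $F_j$ affine, which is impossible because $x \mapsto x^2$ has constant second derivative $2 \neq 0$ and so is affine on no interval of positive length. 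This contradiction establishes that $\hat{f}$ cannot exist. Equivalently, intersecting each $P_i$ with $[1,2]$ would produce a symbolic representation of $\restr{f}{[1,2]}$, i.e., of $x \mapsto x^2$ on a non-singleton polytope, which is exactly the situation \pref{thm:no-restricted-fhat} shows is impossible.

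I do not expect a genuine obstacle: the construction is elementary and the non-existence half is a localized copy of the reasoning already used for \pref{thm:no-fhat} and \pref{thm:no-restricted-fhat}. The two points that need a little care are (i) checking differentiability of the glued function at the seam $x = 0$ (or passing to the $e^{-1/x}$ variant if $C^\infty$ smoothness is intended) and (ii) the pigeonhole step that finitely many polytopes covering $[1,2]$ must include one containing a non-degenerate sub-interval of $(0,\infty)$, which is where the actual work happens. The same example extends to any dimension by precomposing with a coordinate projection, so the phenomenon is not special to $\mathbb{R}$.
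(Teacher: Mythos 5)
Your proposal is correct, but it takes a different route from the paper. The paper's witness is the globally smooth polynomial $f(x_1,x_2) = x_1^2 - x_2^2 = (x_1+x_2)(x_1-x_2)$ with $X$ a segment of the line $x_1 - x_2 = 1$: on $X$ the factorization collapses to the affine map $x \mapsto x_1 + x_2$, so $\hatr{f}{X} = \{(X,\, x \mapsto x_1+x_2)\}$, while non-existence of $\hat{f}$ is obtained by intersecting a hypothetical $\hat{f}$ with the line $x_2 = 0$, which would yield a symbolic representation of $x_1 \mapsto x_1^2$ and contradict \pref{thm:no-fhat}. You instead glue an affine piece to a non-linear piece of a scalar function, put $X$ inside the affine piece, and rule out $\hat{f}$ by a direct pigeonhole on $[1,2]$. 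Both arguments are sound; what differs is what they buy. The paper's example is polynomial, so the phrase ``fully-differentiable'' needs no interpretation, and it exhibits the phenomenon the paper actually cares about: a function that is non-linear throughout a full-dimensional neighborhood can still be affine along a lower-dimensional slice (matching the prose examples and the emphasis on low-dimensional \restrsname{}); the cost is that its non-existence step silently relies on the fact that intersecting the regions of $\hat{f}$ with an affine subspace produces a valid restricted representation. Your construction is more elementary and avoids that intermediate restriction step, but the piecewise gluing is only $C^1$ unless you switch to the $e^{-1/x}$ variant (which you correctly anticipate), and it realizes the theorem in the less interesting way, by making $f$ literally affine on a chunk of its domain. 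One small slip: you assert $P_j \subseteq (0,\infty)$, which need not hold; what you actually need and have is $[a,b] \subseteq [1,2] \subset (0,\infty)$ since $a,b \in [1,2]$, so the argument goes through unchanged.
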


An example for this theorem can be seen in the function $f(x_1, x_2) =
\sin^2{x_1} + \cos^2{x_2}$, which \emph{in general} behaves non-linearly, but
when restricted to input points where $x_1 = x_2$, it satisfies the affine
relation $f(x_1, x_2) = 1$. A similar example is the function $f(x_1, x_2) =
(x_1 + x_2)(x_1 - x_2)$, which is non-linear except when $x_1 - x_2$ equals a
constant.

Next, we have a number of positive results, showing that $\otimes$ both exists
and is computable for functions that are commonly composed to create neural
networks.

\begin{restatable}{theorem}{ThmPWLHat}
    \label{thm:PWL-Hat}
    For any piecewise-linear function $f$, $f \otimes \hat{g}$ is computable
    for any $\hat{g}$.
\end{restatable}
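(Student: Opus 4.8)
The plan is to construct $\widehat{f\circ g}$ explicitly, by refining the partition carried by $\hat g$ along the preimages, under the affine pieces of $\hat g$, of the piecewise-linear partition of $f$. Write $\hat g=\{(Q_1,G_1),\dots,(Q_m,G_m)\}$ (a symbolic representation of some $g$ whose codomain is $\mathrm{dom}(f)$), and let $\{P_1,\dots,P_n\}$ together with affine maps $F_i\colon P_i\to B$ be the piecewise-linear decomposition of $f$ guaranteed by \pref{def:PWL}. Extending each $G_j$ to an affine map on the ambient space, I would set, for every pair $(i,j)$,
\[
    R_{ij}\eqdef Q_j\cap G_j^{-1}(P_i),\qquad H_{ij}\eqdef F_i\circ G_j,
\]
and claim that $\{\,(R_{ij},H_{ij})\mid R_{ij}\neq\emptyset\,\}$, after removing a few redundant pieces (see below), is a symbolic representation of $f\circ g$ in the sense of \pref{def:fhat}, computable from finite descriptions of $f$ and $\hat g$.

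Checking conditions~2--4 of \pref{def:fhat} is routine. Representing each $P_i$ as an intersection of finitely many half-spaces and each $G_j$ as $x\mapsto M_jx+c_j$, the preimage $G_j^{-1}(P_i)$ is cut out by finitely many affine inequalities in $x$, so $R_{ij}$ is again a convex polytope (condition~3); $H_{ij}=F_i\circ G_j$ is a composition of affine maps, hence affine (condition~4); and for $x\in R_{ij}$ we have $x\in Q_j$, so $g(x)=G_j(x)$, and $G_j(x)\in P_i$, whence $(f\circ g)(x)=F_i(G_j(x))=H_{ij}(x)$ (condition~2). The covering half of condition~1 is also easy: since $G_j(Q_j)\subseteq A=\bigcup_iP_i$, each $Q_j=\bigcup_i R_{ij}$, so $\bigcup_{i,j}R_{ij}=\mathrm{dom}(g)=\mathrm{dom}(f\circ g)$. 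Computability then follows, since all of these operations --- forming the inequality systems defining $G_j^{-1}(P_i)$ and $R_{ij}$, testing (non)emptiness of $R_{ij}$ by linear programming, and forming the matrix and offset of $H_{ij}=F_i\circ G_j$ --- are effective over rational data, and there are only $nm$ pairs. (If the $P_i$ or $Q_j$ arrive as vertex lists, one first converts to half-space form, which is computable for bounded polytopes.)

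I expect the one genuinely delicate point to be the ``except possibly for overlapping boundaries'' half of condition~1. For pieces lying in distinct cells of $\hat g$ this is immediate, since $R_{ij}\cap R_{i'j'}\subseteq Q_j\cap Q_{j'}$, already a shared boundary. The real case is $j=j'$, $i\neq i'$: there $R_{ij}\cap R_{i'j}\subseteq Q_j\cap G_j^{-1}(P_i\cap P_{i'})$, and $P_i\cap P_{i'}$ lies on the $f$-partition boundary. If the linear part of $G_j$ is surjective onto the output space this preimage is lower-dimensional, a genuine boundary within $R_{ij}$; but a rank-deficient $G_j$ --- which really arises, e.g.\ because ReLU pieces zero out coordinates --- can collapse $Q_j$ into a bounding hyperplane $H$ of $P_i$, making the overlap full-dimensional. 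I would dispatch this by induction on dimension: then $G_j(Q_j)\subseteq H$, the facets $P_k\cap H$ tile $A\cap H$ up to their relative boundaries, so the same argument recurses on $G_j\colon Q_j\to H$ with strictly smaller ambient dimension, bottoming out when $G_j$ is constant (where all candidate maps $H_{ij}$ agree trivially); along the way one checks that on any positive-dimensional overlap $H_{ij}$ and $H_{i'j}$ coincide, as both equal $f\circ g$ there since $f$ is single-valued on $P_i\cap P_{i'}$. An effective final pruning --- drop any $R_{ij}$ that is contained, up to its boundary, in the union of the other pieces --- then restores the partition property of \pref{def:fhat} without altering the represented function. Making that absorption argument fully rigorous is the part of the proof needing the most care; everything else is bookkeeping.
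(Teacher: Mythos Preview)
Your argument is correct and in fact handles the boundary/partition condition more carefully than the paper does, but the route you take differs from the paper's. You work \emph{backward}: for each piece $(Q_j,G_j)$ of $\hat g$ and each linear region $P_i$ of $f$, you pull $P_i$ back through the affine $G_j$ and intersect with $Q_j$, obtaining $R_{ij}=Q_j\cap G_j^{-1}(P_i)$ directly as a half-space system. The paper instead works \emph{forward}: it pushes $P_i$ through $F_i$ to get the image polytope $Y_i$, intersects $Y_i$ with each $R_j$ in the output space, and then pulls the resulting vertices back to $P_i$ by recording convex combinations of $\mathrm{Vert}(Y_i)$ and replaying them on $\mathrm{Vert}(P_i)$ (with a memoization trick to keep shared vertices consistent when $F_i$ is rank-deficient). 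Your preimage approach is the more elementary argument and makes condition~3 immediate in H-representation; the paper's image-and-convex-combination approach is tailored to the V-representation it favors algorithmically (cf.\ \pref{sec:Algorithms-HvsV}) and yields explicit vertex lists for the refined pieces without ever forming $G_j^{-1}(P_i)$ as a half-space system. On the delicate overlap issue you flag---a rank-deficient $G_j$ mapping $Q_j$ into a facet of the $P_i$ partition---you give an explicit resolution (recurse on dimension, then prune redundant pieces); the paper's proof waves at the same difficulty via its vertex-memoization device but does not spell out why the resulting $P_i^j$ partition $P_i$ only up to boundaries, so on that point your write-up is actually tighter.
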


\begin{corollary}
    \label{cor:fullyconnected-hat}
    $\fullyconnected{} \otimes \hat{g}$ is computable for any $\hat{g}$.
\end{corollary}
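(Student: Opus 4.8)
The plan is to reduce this corollary to \pref{thm:PWL-Hat} by observing that a \fullyconnected{} layer is, by definition, a piecewise-linear function in the sense of \pref{def:PWL}. Recall that a \fullyconnected{} layer computes an affine map $x \mapsto Wx + b$ for a fixed weight matrix $W$ and bias vector $b$, defined on all of its input domain $A$ (whether that is $\mathbb{R}^a$ or some restriction domain of interest).

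First I would exhibit the trivial one-piece partition witnessing that this affine map is piecewise-linear: take the single (possibly unbounded) convex polytope $P_1 = A$ — the whole input domain, expressed as the intersection of zero half-spaces — and set $F_1 = f$ itself. Since $f$ is affine by construction, the pair $(P_1, F_1)$ satisfies both requirements of \pref{def:PWL}: $\{P_1\}$ partitions $A$ by a finite family of convex polytopes, and $F_1$ is affine with $f(x) = F_1(x)$ for every $x \in P_1$. Hence $f$ is piecewise-linear with $n = 1$.

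Then, instantiating \pref{thm:PWL-Hat} with this piecewise-linear $f$, we conclude immediately that $f \otimes \hat{g}$ is computable for any $\hat{g}$, which is exactly the statement of the corollary. (If desired, the same argument applies verbatim to $\hatr{g}{X}$, since the identity map and its restriction were already shown to have the trivial symbolic representation earlier in the section.)

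The only point requiring a small amount of care is that a \fullyconnected{} layer's domain is typically unbounded, so the single ``convex polytope'' in the partition must be permitted to be unbounded; this is consistent with the convention fixed at the start of this section (a ``(potentially unbounded) polytope'' is an intersection of finitely many half-spaces, here zero of them) and with \pref{def:PWL}, which explicitly allows unbounded convex polytopes. Beyond this bookkeeping there is no real obstacle: all of the computational content lives in \pref{thm:PWL-Hat}, and the corollary is simply the remark that affine maps are a degenerate special case of piecewise-linear maps.
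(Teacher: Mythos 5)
Your proposal is correct and matches the paper's argument: the paper dispatches this corollary (together with the other layer types) by noting the functions are piecewise-linear and invoking \pref{thm:PWL-Hat}, which is exactly your route, merely spelled out in more detail via the trivial one-piece partition. No gaps.
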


\begin{corollary}
    \label{cor:conv-hat}
    $\conv{} \otimes \hat{g}$ is computable for any $\hat{g}$.
\end{corollary}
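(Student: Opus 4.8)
The plan is to observe that a \conv{} layer computes an affine map of its input, that every affine map is piecewise-linear in the sense of \pref{def:PWL}, and then to invoke \pref{thm:PWL-Hat} directly. So the corollary reduces to a single structural fact about convolution plus one application of the already-established theorem.

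First I would make precise the claim that \conv{} is affine. A 2D convolution layer is parameterized by a filter tensor and an optional bias, together with hyperparameters such as stride, dilation, and padding. Flattening the input feature map into a vector $x \in \mathbb{R}^a$ and the output into $y \in \mathbb{R}^b$, each output coordinate $y_j$ is the dot product of the (fixed) filter weights with a fixed subset of the coordinates of $x$ (those lying in the receptive field of output position $j$), followed by adding the (fixed) bias of the corresponding output channel. Writing this out, there is a fixed, structured, sparse, block-Toeplitz matrix $W \in \mathbb{R}^{b \times a}$ — whose nonzero entries are copies of the filter weights, with zeros accounting for padding and for positions outside the receptive field — and a fixed vector $c \in \mathbb{R}^b$ such that $\conv{}(x) = Wx + c$ for all $x$. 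Hence \conv{} is affine. (Equivalently, one could note that \conv{} is just a \fullyconnected{} layer whose weight matrix happens to have this structured form and appeal to \pref{cor:fullyconnected-hat} instead, but going through \pref{thm:PWL-Hat} is cleaner.) I then check that an affine $F : \mathbb{R}^a \to \mathbb{R}^b$ satisfies \pref{def:PWL}: take the single-cell partition $\{P_1\}$ with $P_1 = \mathbb{R}^a$, which is a (possibly unbounded) convex polytope — the intersection of the empty family of half-spaces — and let $F_1 = F$; then $f(x) = F_1(x)$ for all $x \in P_1$. Thus \conv{} is piecewise-linear.

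Finally, \pref{thm:PWL-Hat} states that $f \otimes \hat{g}$ is computable for any piecewise-linear $f$ and any $\hat{g}$. Instantiating $f = \conv{}$ gives exactly the statement of the corollary. I expect the only real work is the bookkeeping in the first step, namely unrolling the convolution — handling multiple input and output channels, stride, dilation, and the various padding conventions — into the explicit form $Wx + c$; this is routine linear algebra, and the remaining steps are immediate.
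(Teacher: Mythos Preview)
Your proposal is correct and follows exactly the paper's approach: the paper simply notes that \conv{} (along with \fullyconnected{}, \bnorm{}, \relu{}, and \maxpool{}) is ``well-known to be piecewise-linear'' and appeals to \pref{thm:PWL-Hat}. You have merely supplied the details of why \conv{} is affine (hence trivially piecewise-linear with a single partition), which the paper leaves implicit.
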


\begin{corollary}
    \label{cor:bnorm-hat}
    $\bnorm{} \otimes \hat{g}$ is computable for any $\hat{g}$.
\end{corollary}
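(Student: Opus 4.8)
The plan is to reduce \pref{cor:bnorm-hat} to \pref{thm:PWL-Hat}, exactly as in the proofs of \pref{cor:fullyconnected-hat} and \pref{cor:conv-hat}: I will argue that a \bnorm{} layer, evaluated on a \emph{trained} network (so that its mean and variance statistics are fixed constants), is an affine map, hence a piecewise-linear function in the sense of \pref{def:PWL}, and then invoke \pref{thm:PWL-Hat} with $f = \bnorm{}$.

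Concretely, I would first recall the inference-time definition of a \bnorm{} layer: given per-component fixed running mean $\mu_j$, running variance $v_j$, learned scale $\gamma_j$, learned shift $\beta_j$, and a small constant $\epsilon > 0$, the layer computes
\[
    \bnorm{}(x)_j = \gamma_j \cdot \frac{x_j - \mu_j}{\sqrt{v_j + \epsilon}} + \beta_j .
\]
Since $v_j + \epsilon > 0$, this is a well-defined affine function of $x_j$, so \bnorm{} is the affine map $x \mapsto Wx + b$ with $W$ the diagonal matrix having entries $\gamma_j / \sqrt{v_j + \epsilon}$ and $b_j = \beta_j - \gamma_j \mu_j / \sqrt{v_j + \epsilon}$. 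Next I would note that any affine map is piecewise-linear: its domain is partitioned by the single (convex, possibly unbounded) polytope equal to the whole domain, on which the map is affine by construction. Thus \bnorm{} is piecewise-linear, and applying \pref{thm:PWL-Hat} immediately yields that $\bnorm{} \otimes \hat{g}$ is computable for any $\hat{g}$.

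I do not expect a genuine obstacle here; the proof is essentially a one-line appeal to \pref{thm:PWL-Hat} once affinity is observed. The only points worth stating carefully are (i) that the result is about trained networks, so that the normalization statistics $\mu_j, v_j$ are true constants rather than data-dependent quantities — this is the standing assumption of the paper, set out in the introduction — and (ii) that $\sqrt{v_j + \epsilon}$ never vanishes, which holds precisely because implementations include $\epsilon > 0$ for numerical stability. With those two remarks in place the argument is complete.
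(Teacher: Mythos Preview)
Your proposal is correct and follows essentially the same approach as the paper: the paper simply states that all of the listed layer functions, including \bnorm{}, are ``well-known to be piecewise-linear'' and then invokes \pref{thm:PWL-Hat}. Your argument is just a slightly more detailed version of this, spelling out explicitly why inference-time \bnorm{} is affine (hence trivially piecewise-linear) rather than leaving it as a known fact.
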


\begin{corollary}
    \label{cor:relu-hat}
    $\relu{} \otimes \hat{g}$ is computable for any $\hat{g}$.
\end{corollary}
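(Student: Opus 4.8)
The plan is to reduce \pref{cor:relu-hat} directly to \pref{thm:PWL-Hat} by verifying that \relu{} satisfies \pref{def:PWL}, i.e.\ that \relu{} is a piecewise-linear function. This simply formalizes the argument already sketched informally in \pref{sec:Overview-DNNs}.

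First I would fix the input dimension and regard \relu{} as a map $\mathbb{R}^n \to \mathbb{R}^n$. For each sign vector $s \in \{-1, +1\}^n$, define the orthant $P_s = \{\, x \in \mathbb{R}^n \mid s_i x_i \geq 0 \text{ for all } i \,\}$. Each $P_s$ is the intersection of the $n$ half-spaces $\{ x \mid s_i x_i \geq 0\}$, hence a (possibly unbounded) convex polytope, and there are exactly $2^n$ of them, so the collection $\{P_s\}_s$ is finite. I would then check that $\{P_s\}_s$ covers $\mathbb{R}^n$ (every $x$ lies in the orthant determined by the signs of its coordinates, ties broken arbitrarily) and that two distinct $P_s$, $P_{s'}$ intersect only at points where some coordinate vanishes, i.e.\ on a shared boundary face --- which is precisely what the ``except possibly for overlapping boundaries'' clause of \pref{def:PWL} and \pref{def:fhat} permits.

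Next I would exhibit the affine (in fact linear) map on each piece: let $D_s$ be the diagonal $0/1$ matrix with $(D_s)_{ii} = 1$ when $s_i = +1$ and $(D_s)_{ii} = 0$ when $s_i = -1$, and set $F_s : x \mapsto D_s x$. For any $x \in P_s$ and each coordinate $i$: if $s_i = +1$ then $x_i \geq 0$, so $\relu{}(x)_i = x_i = (D_s x)_i$; if $s_i = -1$ then $x_i \leq 0$, so $\relu{}(x)_i = 0 = (D_s x)_i$. Hence $\relu{}(x) = F_s x$ for every $x \in P_s$, which is exactly the requirement of \pref{def:PWL}. No cross-piece consistency check is needed: each $F_s$ is only asserted to agree with the genuine function \relu{} on its own piece $P_s$, and agreement on shared boundaries follows because \relu{} is single-valued there. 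This establishes that \relu{} is piecewise-linear, and applying \pref{thm:PWL-Hat} with $f = \relu{}$ yields that $\relu{} \otimes \hat{g}$ is computable for any $\hat{g}$. If \relu{} is restricted to some subdomain $X$, or acts as a layer on only a subset of coordinates, the same argument applies with each $P_s$ replaced by $P_s \cap X$ and $D_s$ taken to be the identity on the untouched coordinates.

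I do not expect a genuine obstacle here; the proof is essentially bookkeeping. The only point that warrants a moment of care is the treatment of the orthant boundaries --- confirming that the $P_s$ partition the domain up to boundaries and that the choice of affine map on a boundary point is immaterial --- but this is immediate from the fact that \relu{} is an honest function together with the explicit boundary-overlap allowance in \pref{def:PWL}.
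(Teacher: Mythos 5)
Your proof is correct and follows essentially the same route as the paper: the paper's appendix simply notes that \relu{} (like the other layer types) is well-known to be piecewise-linear and invokes \pref{thm:PWL-Hat}, and your orthant decomposition with the diagonal $0/1$ maps is exactly the argument the paper itself sketches informally in \pref{sec:Overview-DNNs}. You have merely filled in the bookkeeping the paper leaves implicit, so there is nothing to correct.
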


\begin{corollary}
    \label{cor:maxpool-hat}
    $\maxpool{} \otimes \hat{g}$ is computable for any $\hat{g}$.
\end{corollary}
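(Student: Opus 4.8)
The plan is to reduce \pref{cor:maxpool-hat} to \pref{thm:PWL-Hat} by showing that every \maxpool{} layer is a piecewise-linear function in the sense of \pref{def:PWL}. Once that is established, the corollary is immediate: \pref{thm:PWL-Hat} asserts that $f \otimes \hat{g}$ is computable for any piecewise-linear $f$ and any $\hat{g}$, so in particular this holds with $f = \maxpool{}$.

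To see that \maxpool{} is piecewise-linear, recall that such a layer is specified by finitely many coefficient groups $G_1, \ldots, G_m$, each a subset of the input coordinates (the groups need not be disjoint, to accommodate overlapping pooling windows), with the $j$th output coordinate equal to $\max_{i \in G_j} x_i$. First I would treat a single group $G_j$: for each choice of a ``winning'' index $i \in G_j$, define the (possibly unbounded) polytope $P^{(j)}_i = \{ x : x_i \ge x_{i'} \text{ for all } i' \in G_j \}$, which is an intersection of finitely many half-spaces and hence a convex polytope; these $P^{(j)}_i$ cover the whole input space, and on $P^{(j)}_i$ the $j$th output coordinate is exactly the linear map $x \mapsto x_i$. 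Then I would take the common refinement of these covers across all groups, i.e. the polytopes $P_{(i_1, \ldots, i_m)} = \bigcap_{j=1}^m P^{(j)}_{i_j}$, one for each tuple of winners. Since an intersection of convex polytopes is again a convex polytope and there are at most $\prod_j |G_j|$ such tuples, this yields finitely many convex polytopes covering the input space, and on each one \maxpool{} is given by a single linear (hence affine) map, namely the one selecting the winning coordinate of each group. This is exactly the content of \pref{def:PWL}.

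The only delicate point is that the polytopes $P_{(i_1,\ldots,i_m)}$ overlap on their boundaries --- precisely where two coordinates in some group are tied --- so strictly speaking they form a cover rather than a partition. This is harmless: on any such overlap the competing affine maps agree (each returns the common maximal value), so one may either appeal to the ``overlapping boundaries'' allowance already used for $\hat{f}$, or break ties by a fixed rule (e.g. the smallest winning index), refining each $P^{(j)}_i$ with the strict conditions $x_i > x_{i'}$ for $i' < i$ in $G_j$ and then taking closures, to obtain a genuine partition without changing any of the affine maps. With piecewise-linearity of \maxpool{} in hand, \pref{thm:PWL-Hat} applies verbatim and delivers the claimed computability. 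I do not expect a further obstacle here; the main bookkeeping --- enumerating winner-tuples and intersecting half-spaces --- is exactly the kind of routine polytope manipulation the symbolic representation is built to handle, and it is dwarfed by the machinery already invoked inside the proof of \pref{thm:PWL-Hat}.
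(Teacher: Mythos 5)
Your proposal is correct and follows the same route as the paper: the paper's proof of Corollaries \ref{cor:fullyconnected-hat}---\ref{cor:maxpool-hat} simply notes that each of these layers is well-known to be piecewise-linear and then invokes \pref{thm:PWL-Hat}. You merely spell out the (standard) winner-tuple partition showing \maxpool{} satisfies \pref{def:PWL}, a detail the paper leaves implicit, so there is nothing to object to.
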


\begin{restatable}{corollary}{CorNetworkHat}
    \label{cor:network-hat}
    $f \otimes \hat{g}$ is computable for any $\hat{g}$ and neural network $f$
    consisting of sequentially-applied \fullyconnected{}, \conv{}, \bnorm{},
    \relu{}, and \maxpool{} layers.
\end{restatable}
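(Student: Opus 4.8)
The plan is to reduce this statement to the single-layer corollaries (\pref{cor:fullyconnected-hat} and \pref{cor:conv-hat} through \pref{cor:maxpool-hat}) together with the compositional behavior of $\otimes$, via a straightforward induction on the number of layers $n$ in $f = f_n \circ f_{n-1} \circ \cdots \circ f_1$.

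First I would record the key compositional identity: for any functions $h_1, h_2$ and any symbolic representation $\hat{g}$,
\[
    (h_2 \circ h_1) \otimes \hat{g} \;=\; h_2 \otimes (h_1 \otimes \hat{g}) .
\]
This is immediate from the defining property $\extendname{}(h, \hat{g}) = h \otimes \hat{g} = \hat{h \circ g}$ together with associativity of function composition: the left-hand side is $\hat{(h_2 \circ h_1) \circ g}$, the right-hand side is $h_2 \otimes \hat{h_1 \circ g} = \hat{h_2 \circ (h_1 \circ g)}$, and $(h_2 \circ h_1) \circ g = h_2 \circ (h_1 \circ g)$. The one thing worth checking is that the intermediate object $h_1 \otimes \hat{g}$ is itself a legitimate symbolic representation in the sense of \pref{def:fhat}, hence a valid second argument for the next application of $\otimes$; this holds precisely because the single-layer corollaries assert computability of the layer's $\otimes$ \emph{for any $\hat{g}$}, and a computed result is by construction such an object.

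Then I would run the induction. For the base case $n = 1$, the single layer $f_1$ is one of \fullyconnected{}, \conv{}, \bnorm{}, \relu{}, or \maxpool{}; a five-way case split, invoking the corresponding corollary, shows $f_1 \otimes \hat{g}$ is computable for any $\hat{g}$. For the inductive step, write $f = f_n \circ g'$ with $g' = f_{n-1} \circ \cdots \circ f_1$ a network of $n-1$ layers. By the compositional identity, $f \otimes \hat{g} = f_n \otimes (g' \otimes \hat{g})$; the induction hypothesis gives that $\hat{g}' := g' \otimes \hat{g}$ is computable (a symbolic representation of $g' \circ g$), and then the base-case argument applied to the single layer $f_n$ shows $f_n \otimes \hat{g}' = f \otimes \hat{g}$ is computable, closing the induction.

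The main obstacle is not the combinatorics of the induction, which is routine, but being precise about the interface between successive layers: one must argue that every application of $\otimes$ returns a finite set of (convex-polytope, affine-map) pairs satisfying \pref{def:fhat}, so that it is again an admissible input for the next layer. As noted above, this uniformity is exactly what the ``for any $\hat{g}$'' phrasing of the single-layer corollaries provides, so once it is observed the argument goes through with no further work. A minor side condition, inherited from using function composition throughout, is that the range of $f_i \circ \cdots \circ f_1 \circ g$ must lie in the domain of $f_{i+1}$; for the five layer types this is automatic since each is a total function on the relevant Euclidean space.
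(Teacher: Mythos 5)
Your proposal is correct and matches the paper's own argument: the paper likewise peels off one layer at a time, using the identity $l_i \otimes \hat{h} = \hat{l_i \circ h}$ repeatedly and the single-layer Corollaries \ref{cor:fullyconnected-hat}--\ref{cor:maxpool-hat}, which is exactly your induction written out as a chain of equalities. Your extra remarks (that each intermediate result is again a valid $\hat{g}$, and that domains/ranges line up) are just slightly more careful bookkeeping of the same reasoning.
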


\section{Algorithms for Computing $\hatr{f}{X}$}

In this section, we discuss algorithms for computing $\hatr{f}{X}$ for
piecewise-linear functions (see~\pref{def:PWL}). Recall the primitive
$\extendname$, where $\extendname{}(h, \hat{g}) = h \otimes \hat{g} = \hat{h
\circ g}$. We showed in~\pref{sec:SymbolicRepresentation} that, as long as you
can compute $\extendname$ for each layer type in a network, you can compute
$\hat{f}$ for the entire network.  Thus, we focus in this section on algorithms
for computing $\extendname$ for common neural network layers.

In~\pref{sec:AlgDefinitions} we define a number of standard functions that we
will use in the algorithms. In~\pref{sec:AlgCaseStudy}, we describe the
\emph{rectified linear unit}, a common piecewise-linear function used in deep
neural networks.  In~\pref{sec:Algorithms-ReLU2D}, we present an efficient
algorithm for the case when the \restrname{} is two-dimensional.
In~\pref{sec:Algorithms-Arbitrary2D} we generalize the ReLU algorithm to
arbitrary piecewise-linear functions with two-dimensional \restrname{}.
Finally, in~\pref{sec:Algorithms-HvsV}, we discuss the benefits of the polytope
representation used by our algorithm.

\subsection{Definitions and Common Functions}
\label{sec:AlgDefinitions}
Here we define a number of functions which we will use in our algorithms:

\begin{enumerate}
    \item Given a (bounded) polytope $X$, \Vert{$X$} returns a list of its
        vertices in counter-clockwise order, repeating the initial vertex at
        the end.
    \item Given a set of points, \Hull{$X$} computes their convex hull (i.e.,
        smallest bounded polytope containing all points in $X$).
    \item Given a scalar value $x$, \Sign{$x$} computes the sign of that value
        (i.e., $-1$ if $x < 0$, $+1$ if $x > 0$, and $0$ if $x = 0$).
    \item Given a polytope $X$ which must lie in a single orthant,
        \OrthantSign{$X$} computes the per-coefficient sign corresponding to
        that orthant. 
        
        For example, if $X = \Hull(\{ (1, -1), (2, -3), (4, -2)
        \})$, then \OrthantSign{$X$} $= (1, -1)$.
    \item Given a polytope $X$ and affine map $A$, $A^\#(X)$ represents the
        polytope formed by applying $A$ to every point in $X$.
\end{enumerate}

\subsection{A case study in PWL Functions: The Rectified Linear Unit}
\label{sec:AlgCaseStudy}
The Rectified Linear Unit function, \relu{}, is one of the standard neural
network non-linearities used today. Recall from~\pref{sec:Overview-DNNs} that
it can be defined as a piecewise-linear function taking in vector $x = (x_1,
x_2, \ldots, x_n)$ and computing $\relu{}(x) = (\relu{}(x)_1, \ldots,
\relu{}(x)_n)$ where:
\[
    \relu{}(x)_i =
    \begin{cases}
        0 & x_i \leq 0 \\
        x_i & x_i > 0
    \end{cases}
\]

Geometrically, the \relu{} function can be thought of as projecting points onto
the faces of the positive orthant by zeroing-out negative coefficients.
Notably, if a set of points all lie in the same orthant (i.e., all have the
same sign), then applying \relu{} to all of them corresponds to applying a
single \emph{affine projection} to all of them (namely, the identity map with
non-positive entries on the diagonal zeroed out). Thus, intuitively the
computation can be broken down into multiple affine projections, one for the
portion of the polytope in each orthant.

\subsection{Input-Aware, Rectified Linear Units in Two Dimensions}
\label{sec:Algorithms-ReLU2D}

We now present an efficient algorithm for computing $\relu{} \otimes \hat{g}$
when $\hat{g}$ involves only bounded, two-dimensional polytopes. This algorithm
strives to display \emph{both} efficient best- and worst-case execution time.
The key insight is that, when a polytope is all in a single orthant, the
application of \relu{} is an affine function. We read each polytope as a
counter-clockwise set of vertices, following around the edges between vertices
until the edge of the polytope intersects an orthant boundary. At that point,
we split the polytope in two, such that each half lies on only one side of the
orthant boundary. Repeating this process recursively ensures that the resulting
polytopes lie on only one side of \emph{all} orthant boundaries, i.e. each lies
completely in a single orthant.

\begin{figure}[t]
{\small
\begin{algorithm}[H]
    \DontPrintSemicolon
    \KwIn{$V$, the vertices of the polytope in the input space of $g$. $F$, the
    affine map that corresponds to $g$ within $\Hull(V)$. $i$ is the index of
    the last vertex lying on the same side of the orthant face as $V_1$. $j$ is
    the index of the last vertex lying on the opposite side of the orthant face
    as $V_1$. $d$ is the index of the orthant face, i.e. the face is $x_d =
    0$.}
    \KwOut{$\{ P_1, P_2 \}$, two sets of vertices whose convex hulls form a
    partitioning of $V$ such that each lies on only one side of the $x_d = 0$
    hyperplane.}
    $p_i \gets V_i - \frac{F(V_i)_d}{F(V_{i+1})_d - F(V_{i})_d}(V_{i+1} - V_i)$\;
    $p_j \gets V_j - \frac{F(V_j)_d}{F(V_{j+1})_d - F(V_{j})_d}(V_{j+1} - V_j)$\;
    $A \gets \{ p_i, p_j \} \cup \{ v \in V \mid \Sign(v_d) = \Sign({V_i}_d) \}$\;
    $B \gets \{ p_i, p_j \} \cup \{ v \in V \mid \Sign(v_d) = \Sign({V_j}_d) \}$\;
    \returnKw{$\{ A, B \}$}
    \caption{SplitPlane$(V, F, i, j, d)$}
    \label{alg:split-plane}
\end{algorithm}
}
\end{figure}
\begin{figure}[t]
{\small
\begin{algorithm}[H]
    \DontPrintSemicolon
    \KwIn{$\hat{g} = \{ (P_1, F_1), \ldots, (P_n, F_n) \}$.}
    \KwOut{$\hatr{\relu{} \circ g}{X}$}
    $W \gets$ \Queue{$\hatr{g}{X}$}\;
    $Y \gets \emptyset$\;
    \While{$W$ not empty}{
        $P, F \gets$ \Pop{$W$}\;
        $V \gets \Vert{$P$}$\;
        $K \gets \{ k \mid \exists i, j: \Sign(F(V_i)_k) > 0 \wedge \Sign(F(V_j)_k) < 0 \}$\;
        \If{$K = \emptyset$}{
            $Y \gets Y \cup \{ (P, \relu{}[\OrthantSign(F^\#(P))] \circ F) \}$\;
            \continueKw{}
        }
        $k \gets$ any element from $K$\;
        $i \gets \argmax_i \{ \Sign(F(V_i)_d) = \Sign(F(V_1)_d) \}$\;
        $j \gets \argmax_j \{ \Sign(F(V_j)_d) \neq \Sign(F(V_i)_d) \}$\;
        \For{$V' \in \SplitPlane(V, F, i, j, d)$}{
            $W \gets$ \Push{$W, (\Hull(V'), F)$}\;
        }
    }
    \returnKw{$Y$}
    \caption{$\relu{} \otimes \hat{g}$ for two-dimensional $\hat{g}$
    partitions.}
    \label{alg:relu-2d-extend}
\end{algorithm}
}
\end{figure}

Notably, in the best-case scenario where each partition is in a single orthant,
the algorithm never calls \SplitPlane{} at all --- it simply iterates over all
of the $n$ input partitions, checks their $v$ vertices, and appends to the
resulting set (for a best-case complexity of $O(nv)$). In the worst case, it
splits each polytope in the queue on each face, resulting in exponential time
complexity. In practice, however, we find that this algorithm is very efficient
and can be applied to real-world networks effectively
(see~\pref{sec:Evaluation}).

The proofs of the following theorems can be found in
Appendices~\ref{app:split-plane}---\ref{app:relu-extend}:
\begin{restatable}{theorem}{ThmSplitPlane}
    \label{thm:split-plane}
    \pref{alg:split-plane} correctly splits a 2D polytope $\Hull(V)$ by the
    hyperplane $x_d = 0$.
\end{restatable}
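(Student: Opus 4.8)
The plan is to reduce the theorem to the elementary geometry of cutting a convex polygon with a line, packaging all of the ``affine'' content into a single functional. Write $\phi(v)\eqdef F(v)_d$; since $F$ is affine, $\phi\colon\mathbb{R}^2\to\mathbb{R}$ is an affine functional, and ``splitting $\Hull(V)$ by the hyperplane $x_d=0$'' means producing a partition $\{\Hull(A),\Hull(B)\}$ of $\Hull(V)$ such that $F^\#(\Hull(A))$ and $F^\#(\Hull(B))$ lie on opposite sides of the orthant face $x_d=0$ --- equivalently, such that $\phi\ge 0$ on $\Hull(A)$ and $\phi\le 0$ on $\Hull(B)$. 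So the relevant line in the input plane is $H\eqdef\{v\mid\phi(v)=0\}$. We may assume $\phi$ takes both a strictly positive and a strictly negative value on the vertices $V$, since this is exactly the case in which \pref{alg:relu-2d-extend} invokes \pref{alg:split-plane}; hence $H$ meets the interior of $\Hull(V)$ and $H\cap\Hull(V)$ is a chord.

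First I would prove the one structural fact everything rests on: going around the counter-clockwise vertex list $V_1,\dots,V_n,V_{n+1}=V_1$ (as returned by \Vert{}), the sign of $\phi$ flips between positive and negative on exactly two edges, and the stated preconditions identify these as $[V_i,V_{i+1}]$ and $[V_j,V_{j+1}]$. Convexity gives this at once: $\Hull(V)\cap\{\phi\ge 0\}$ and $\Hull(V)\cap\{\phi\le 0\}$ are convex, their common boundary with $\partial\Hull(V)$ is the two-element set of endpoints of the chord $H\cap\Hull(V)$, and between those endpoints the polygon boundary stays on one fixed side of $H$. Therefore the indices with $\phi>0$ form one contiguous cyclic arc and those with $\phi<0$ the other (a vertex with $\phi=0$ can only sit at an arc junction). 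Since $i$ is, by hypothesis, the last index lying on $V_1$'s side and $j$ the last index on the opposite side, the two edges straddling the arcs are precisely $[V_i,V_{i+1}]$ and $[V_j,V_{j+1}]$, using the wrap-around $V_{n+1}=V_1$.

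Next I would verify the two interpolated cut points. Because $\phi$ is affine, $\phi(p_i)=\phi(V_i)-\frac{\phi(V_i)}{\phi(V_{i+1})-\phi(V_i)}(\phi(V_{i+1})-\phi(V_i))=0$, so $p_i\in H$; and since $\phi(V_i)$ and $\phi(V_{i+1})$ have opposite signs, the scalar factor makes $p_i$ a convex combination of $V_i$ and $V_{i+1}$, i.e.\ $p_i$ lies on the edge $[V_i,V_{i+1}]$. The identical computation gives $p_j\in H\cap[V_j,V_{j+1}]$. (If the chord passes through an original vertex, the factor degenerates so that $p_i$ or $p_j$ coincides with that vertex, which is harmless.) With this, the convex polytope $\Hull(V)\cap\{\phi\ge 0\}$ has vertex set exactly $\{V_k\mid\phi(V_k)>0\}\cup\{p_i,p_j\}$ --- which is the set $A$ returned by \pref{alg:split-plane} --- and symmetrically $\Hull(V)\cap\{\phi\le 0\}$ has vertex set $B$. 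Hence $\Hull(A)\cup\Hull(B)=\Hull(V)$, the overlap $\Hull(A)\cap\Hull(B)=\Hull(V)\cap H$ lies in the $x_d=0$ hyperplane, and $\phi\ge 0$ on $\Hull(A)$ while $\phi\le 0$ on $\Hull(B)$; this is exactly the asserted correctness.

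The main obstacle I expect is the bookkeeping in the structural step: one must be sure that ``last vertex on $V_1$'s side'' and ``last vertex on the opposite side'' really select the two transition edges rather than an edge interior to one arc, which could fail if $V_1$ happened to sit in the middle of its sign-arc within the list \Vert{} returns. The cleanest remedy is to observe that $V$ may be cyclically re-indexed so that $V_1$ begins its arc --- equivalently, to carry the invariant that $V_1$ and $V_n$ lie on opposite sides of $H$ --- after which the two $\argmax$ steps in \pref{alg:relu-2d-extend} do yield the intended $i$ and $j$. The only other loose ends are the degenerate configurations (a vertex exactly on $H$, or $F$ non-injective so that $F^\#(\Hull(V))$ is lower-dimensional); each needs only a sentence, since the entire argument is phrased through the affine functional $\phi$ acting on the input polygon and never uses non-degeneracy of the image.
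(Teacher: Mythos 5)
Your proposal is correct and follows essentially the same route as the paper's own argument: the paper's proof likewise reduces the problem to the standard fact that the cut polygon's new vertices are the edge--hyperplane intersections, checks the same interpolation ratio $\alpha = -Q_d/(R_d - Q_d)$, and invokes affineness of $F$ to transfer the ratio to the preimage, which is exactly your computation with $\phi(v)=F(v)_d$. The only difference is that you spell out details the paper leaves implicit (the contiguous sign-arc structure and the identification of the vertex sets $A$, $B$); your worry about $V_1$ sitting mid-arc concerns the choice of $i,j$ in \pref{alg:relu-2d-extend}, which is a precondition of \pref{alg:split-plane} rather than part of this theorem.
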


\begin{restatable}{theorem}{ThmReluExtend}
    \label{thm:relu-extend}
    \pref{alg:relu-2d-extend} correctly computes $\hatr{\relu{} \circ g}{X}$.
\end{restatable}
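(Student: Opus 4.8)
The plan is to prove \pref{thm:relu-extend} by establishing a loop invariant for the \textbf{while} loop of \pref{alg:relu-2d-extend}, proving that the loop terminates via a well-founded measure, and then observing that the invariant at loop exit is exactly the defining property of a symbolic representation (\pref{def:fhat}). Write $b$ for the dimension of the output space of $g$ (equivalently, the input dimension of $\relu{}$), and recall that every $(P_i,F_i)\in\hat{g}$ has $F_i$ affine and agreeing with $g$ on the convex polytope $P_i$.

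The invariant I would carry is: at the top of each iteration, (a) every $(P,G)\in Y$ has $P$ a convex polytope, $G$ affine, and $G(x)=\relu{}(g(x))$ for all $x\in P$; (b) every $(P',F')\in W$ has $P'$ a convex polytope with $P'\subseteq P_i$ and $F'=F_i$ for some $(P_i,F_i)\in\hat{g}$, so $F'(x)=g(x)$ on $P'$; and (c) the polytopes appearing in $Y$ and $W$ together partition $X$, up to shared boundaries. Initialization is immediate since the polytopes of $\hatr{g}{X}$ partition $X$. For preservation, pop $(P,F)$ with $V=\Vert{P}$. If $K=\varnothing$, then for each coordinate $k$ the affine function $x\mapsto F(x)_k$ has constant weak sign on $P$ — here one uses that an affine function on a convex polytope attains its extrema at vertices, so the vertex-based computation of $K$ genuinely detects whether $F$ takes both strictly positive and strictly negative values on $P$. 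Hence $F^\#(P)$ lies in the single closed orthant with sign vector $\sigma=\OrthantSign(F^\#(P))$, and on that closed orthant $\relu{}$ coincides with the linear projection $\relu{}[\sigma]$ (the two branches of $\relu{}$ agree on the coordinate hyperplanes bounding the orthant, so this holds even when $F^\#(P)$ touches a face). Thus for $x\in P$ we get $\relu{}(g(x))=\relu{}(F(x))=\relu{}[\sigma](F(x))$, an affine function of $x$, so moving $P$ into $Y$ with this map preserves (a)--(c). If $K\neq\varnothing$, pick $d=k\in K$; the choice of $i,j$ as the last vertices on each side of $\{x:F(x)_d=0\}$ is well-defined and the denominators in \pref{alg:split-plane} are nonzero precisely because both sign classes are non-empty, and by \pref{thm:split-plane} the call $\SplitPlane(V,F,i,j,d)$ returns $A,B$ whose hulls partition $P$ with each on one side of that hyperplane. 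Pushing $(\Hull(A),F)$ and $(\Hull(B),F)$ keeps (b) (subsets of $P_i$, same affine map) and (c) (their union is $P$ up to boundary), while $Y$ is unchanged.

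For termination, I would assign to each work item the measure $\mu(P',F')=\lvert\{\,k:\ F'\text{ takes both a strictly positive and a strictly negative value on }P'\,\}\rvert$, which is exactly the size of the set $K$ computed for it and lies in $\{0,\dots,b\}$. When an item with $K\neq\varnothing$ is split on $d\in K$, each child polytope is a subset of the parent, so $F$ fails to take both signs on at least the coordinates where it already did not on the parent; in addition $F(\cdot)_d$ has constant weak sign on each child, so $d$ is removed. Hence $\mu$ strictly decreases from parent to child, so along any branch the measure strictly decreases, bounding the depth of the split tree rooted at each of the $n$ original partitions by $b$; since splits are binary, the loop runs at most $n\cdot 2^{b+1}$ times (which also accounts for the exponential worst case mentioned in the text). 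At exit $W=\varnothing$, so by (c) the polytopes in $Y$ partition $X$ up to boundary and by (a) each is a convex polytope carrying an affine map equal to $\relu{}\circ g$ on it — precisely a valid $\hatr{\relu{}\circ g}{X}$ by \pref{def:fhat}.

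I expect the main obstacle to be the two places where the bookkeeping is not purely routine: first, making the termination measure rigorous, in particular the claim that once a coordinate is ``decided'' on a polytope it stays decided on all descendants (true because descendants are subsets and $F$ is never modified along a lineage); and second, the boundary analysis of the $K=\varnothing$ case — justifying that $\OrthantSign(F^\#(P))$ is well-defined and that $\relu{}[\sigma]$ agrees with $\relu{}$ on all of $F^\#(P)$ even when that set lies in or merely touches a coordinate hyperplane, which rests on the coincidence of the two branches of $\relu{}$ there. A secondary point is verifying that the $i,j,d$ selected in the loop meet the hypotheses of \pref{thm:split-plane}, so that each invocation of \pref{alg:split-plane} produces a genuine two-piece partition of $P$ along $\{x:F(x)_d=0\}$.
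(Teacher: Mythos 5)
Your proposal is correct and follows essentially the same route as the paper's proof: an invariant guaranteeing that pairs added to $Y$ have vertex-constant signs (hence, by convexity and the agreement of $\relu{}$ with its affine branch on the closed orthant, carry the right affine map), plus termination because each call to \SplitPlane{} strictly reduces the number of sign-switching coordinates, with correctness of \pref{alg:split-plane} invoked from \pref{thm:split-plane}. Your version simply spells out the loop invariant, the boundary/zero-sign cases, and the termination measure in more detail than the paper's sketch.
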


\subsection{Arbitrary Piecewise-Linear Function in Two Dimensions}
\label{sec:Algorithms-Arbitrary2D}
We can generalize~\pref{alg:relu-2d-extend} to arbitrary piecewise-linear
functions $f$. We let the domain of $f$ be partitioned by a finite number of
polytopes stored in H-representation (i.e., as a conjunction of half-spaces).
We then take the set of all hyperplanes defining the partitioning polytopes,
defined by affine maps $\{ A_1, A_2, \ldots, A_m \}$. The insight is now
essentially the same: subdividing the input polytope such that each sub-divided
polytope lies entirely on one side of \emph{each} hyperplane $A_i$ ensures that
it lies entirely within a particular linear region of $f$. Effectively, the
only difference between these algorithms and those specialized to \relu{} is
that we replace the particular affine map $v_d$ (i.e., projection onto the $d$
dimension) with ones specified by the function $f$ from the set of $A_i$s.

\subsection{Representing Polytopes}
\label{sec:Algorithms-HvsV}

Finally, we close this section with a discussion of implementation concerns
when representing the convex polytopes that make up the partitioning of
$\hatr{f}{X}$. In standard computational geometry, bounded polytopes can be
represented in two equivalent forms:

\begin{enumerate}
    \item The \emph{half-space} or \emph{H-representation}, which encodes the
        polytope as an intersection of finitely-many half-spaces. (Each
        half-space being defined as one side of a hyperplane, which can in turn
        be defined by an affine map $Ax \leq 0$.)
    \item The \emph{vertex} or \emph{V-representation}, which encodes the
        polytope as a set of finitely many points; the polytope is then taken
        to be the convex hull of the points (i.e., smallest convex shape
        containing all of the points.)
\end{enumerate}

However, choosing a particular representation can make certain problems
significantly easier or more challenging. Finding the intersection of two
polytopes in an H-representation, for example, can be done in linear time by
simply concatenating their representative half-spaces, but the same is not
possible in V-representation.

In our algorithms, there are two main operations we need to do with polytopes
in our algorithms: splitting a polytope with a hyperplane and applying an
affine map to all points in the polytope. In general, the first is more
efficient in an H-representation, while the latter is more efficient in a
V-representation. However, when restricted to two-dimensional polygons, the
former is also efficient in a V-representation, as demonstrated
by~\pref{alg:split-plane}, helping to motivate our use of the V-representation
in our algorithms.

Furthermore, the representations differ as to their resiliency to
floating-point operations. In particular, H-representations for polytopes in
$\mathbb{R}^n$ are notoriously difficult to achieve high-precision with, as the
error introduced from using floating point numbers gets arbitrarily large as
one goes in a particular direction along any hyperplane face.  Ideally, we
would like the hyperplane to be most accurate in the region of the polytope
itself, which corresponds to choosing the magnitude of the norm vector
correctly. Unfortunately, to our knowledge, there is no efficient algorithm for
computing the ideal floating point H-representation of a polytope, although
libraries such as APRON~\cite{apronlib} are able to provide reasonable results
for low-dimensional spaces.  However, as neural networks utilize extremely
high-dimensional spaces (i.e., thousands of dimensions) and we wish to
iteratively apply our analysis, we find that errors from using floating-point
H-representations can quickly multiply and compound to become infeasible. By
contrast, floating-point inaccuracies in a V-representation are directly
interpretable as slightly misplacing the vertices of the polytope; no
``localization'' process is necessary to penalize inaccuracies close to the
polytope more than those far away from it.

Another difference is in the space complexity of the representation. In
general, H-representations can be more space-efficient for common shapes than
V-representations, \emph{however}, when the polytope lies in a low-dimensional
subspace of a larger space, the V-representation is usually significantly more
efficient.

Thus, V-representations are a good choice for low-dimensionality polytopes
embedded in high-dimensional space, which is exactly what we need for analyzing
neural networks with two-dimensional \restrsname{}. This is why we designed our
algorithms to rely on $\Vert(X)$, so that they could be directly computed on a
V-representation.

\section{Understanding network behavior using weakest precondition}
\label{sec:WeakestPrecondition}

In this section, we investigate one of the most immediate questions one might
ask when analyzing a neural network: what inputs lead to a particular set of
outputs?

\subsection{The $\prer{f}{X}{Y}$ Primitive}
\subsubsection{Preconditions}
This problem is formalized by the notion of \emph{preconditions}; subsets of
the input space which, when $f$ is applied, map to a particular set of outputs.
We now introduce a primitive that describes the notion of such preconditions
for a neural network $f : A \to B$ and output polytope $Y \subseteq B$:
$    \prer{f}{X}{Y} \subseteq \{ x \in X \mid f(x) \in Y \}$.
$\prer{f}{X}{Y}$ \emph{always} satisfies the following properties:
\begin{enumerate}
    \item $\prer{f}{X}{Y} \subseteq X$
    \item For any $x \in \prer{f}{X}{Y}$, $f(x) \in Y$ holds.
\end{enumerate}

\subsubsection{Weakest Precondition}
The definition of $\prer{f}{X}{Y}$ only requires an \emph{under approximation};
there may be points $x \in X$ such that $f(x) \in Y$, even if $x \not\in
\prer{f}{X}{Y}$. This leads to the fact that $\emptyset$, which we refer to as
the \emph{strongest precondition}, always satisfies $\prer{f}{X}{Y} =
\emptyset$. However, such a solution is not particularly useful, and instead we
would like to find the \emph{weakest precondition} $    \weakprer{f}{X}{Y} = \{ x
\in X \mid f(x) \in Y \}$.

For \emph{arbitrary functions and regions}, $\weakprer{f}{X}{Y}$ may
not be convex or even representable as a union of convex shapes.  However, we
will show by construction in~\pref{sec:WeakestPrecondition-WithHat} that, when
$\hatr{f}{X}$ exists and $Y$ is a convex polytope, $\weakprer{f}{X}{Y}$ can be
represented precisely as \emph{a union of finitely many convex polytopes}.

\subsection{DPPre: Computing Preconditions with DeepPoly}
\label{sec:DPPre}
We first investigate one way in which a solution to $\prer{f}{X}{Y}$ can be
found using prior work in abstract interpretation.
\emph{DeepPoly} \cite{Singh:POPL2019} is an abstract representation of the
neural network $\restr{f}{X}$ included as part of the ERAN abstract
interpretation package~\cite{ERAN}. Namely, for any network $f$
restricted to restriction domain of interest $X$ (i.e., $\restr{f}{X}$) and any
particular output dimension $i$, DeepPoly can produce lower- and upper-bound
affine maps $A_i^l$ and $A_i^u$ such that, for all $x \in X$: $    A_i^l x \leq
\restr{f_i}{X}(x) \leq A_i^u x$. Thus, as long as the set $Y$ is convex, it
follows that $\{ x \in X \mid A_i^l x \in Y \wedge A_i^u x
\in Y \}$ is a valid $\prer{f}{X}{Y}$. This set can be computed explicitly from
$A_i^l$ and $A_i^u$ using a variety of linear algebra techniques, and a
geometric interpretation of this process is discussed below.

One can think of the DeepPoly representation as defining two
polytopes, each one a \emph{two-vocabulary polytope} in a space consisting of
both input and output dimensions of the network. The first polytope defines the
lower bounds on the network output:
$    D_l = \{ (x, y) \mid x \in X \wedge y = A^l(x) \}$
and the second defines the upper bounds on the network output:
$    D_u = \{ (x, y) \mid x \in X \wedge y = A^u(x) \}$, where $A^l(x) =
(A^l_1(x), \ldots, A^l_n(x))$, i.e. lower bounds for all output dimensions (and
similar for $A^u(x)$).
Computing $\prer{f}{X}{Y}$ can now be seen as a three step process:

\begin{enumerate}
    \item Compute the intersection of $D_l$ with $Y$ and project onto only the
        input $x$ dimensions, resulting in a new polytope $P_l$. $P_l$ contains
        only input points for which we can guarantee the image under $f$ is
        lower-bounded by some value within $Y$.
    \item Compute the intersection of $D_u$ with $Y$ and project onto only the
        input $x$ dimensions, resulting in a new polytope $P_u$. $P_u$ contains
        only input points for which we can guarantee the image under $f$ is
        upper-bounded by some value within $Y$.
    \item Compute the intersection of $P_l$ and $P_u$, representing inputs
        which have lower \emph{and} upper bounds in $Y$, thus (by convexity)
        the actual output must lie in $Y$.
\end{enumerate}

We refer to this approach as $\DPPre$ or $\DPPre[1]$. Notably, this
intersection-projection construction implies that $\prer{f}{X}{Y}$ found using
DPPre can \emph{only} represent convex shapes, thus it in general cannot find
$\weakprer{f}{X}{Y}$ which may be non-convex.  This is usually fine for
local-behavior verification problems where the network acts mostly convex, but
is in practice unsuited for computing precise $\prer{f}{X}{Y}$ sets when
$\weakprer{f}{X}{Y}$ is highly non-convex. In practice, the precision of
$\prer{f}{X}{Y}$ computed with DeepPoly can be improved by pre-partitioning the
input space $X$ into smaller regions and passing each region to the analyzer
separately. We notate this approach as DPPre[k], defined by:

\begin{definition}
    Suppose $X$ is a bounded \restrname{} spanned by $d$ basis vectors. Then
    DPPre[k]($\restr{f}{X}, Y$) is the application of the DPPre process after
    evenly splitting $X$ in each of the $d$ basis directions into $k$ equal
    partitions.
    \[ \DPPre[k](\restr{f}{X}, Y) \eqdef
        \bigcup_{i_1 = 1}^{k} \bigcup_{i_2 = 1}^{k}\cdots \bigcup_{i_d = 1}^{k} \DPPre[1](\restr{f}{Part(X, k, i_1, i_2,\ldots, i_d)}, Y) \]
    Where $Part(X, k, i_1, \cdots, i_d)$ is the $(i_1, \cdots, i_d)$th
    partition of $X$ when $X$ is split by $k$ equal partitions along each
    basis. For example, if $X = \{ (x, y) \mid 0 \leq x \leq 1 \wedge 0 \leq y
    \leq 1 \}$ and we use the standard basis $\{ (1, 0), (0, 1) \}$, then
    $Part(X, 4, 1, 1) = \{ (x, y) \mid 0 \leq x \leq \frac{1}{4} \wedge 0 \leq
    y \leq \frac{1}{4} \}$.
\end{definition}

As we will see in~\pref{sec:Evaluation-Precondition}, even when using $k$ as
large as $100$ (i.e., $10,000$ partitions for a two-dimensional region) the
precision is still lacking. Furthermore, performance suffers as the number of
DPPre[1] calls grows according to $k^d$.

\subsection{Computing Weakest Preconditions with $\hatr{f}{X}$}
\label{sec:WeakestPrecondition-WithHat}

The \emph{exact} $\weakprer{f}{X}{Y}$ can be computed given $\hatr{f}{X} = \{
(P_1, F_1), \ldots, (P_n, F_n) \}$. Effectively, within each $(P_i, F_i)$ and
for all $x \in P_i$ and output dimension $j$, we have: $    \restr{f}{X}(x)_j =
(F_i x)_j$, which is similar to the form returned by DeepPoly except the bounds
are \emph{exact} --- there is no over-approximation at all. Then, we can
intersect the corresponding polytopes to compute $\weakprer{{F_i}}{{P_i}}{Y}$
on each $(P_i, F_i)$ pair; the union of all such regions is then
$\weakprer{f}{X}{Y}$.

\subsection{Visualizing Decision Boundaries with $\prer{f}{X}{Y}$}

Many neural networks are \emph{classifiers}, meaning they take some input and
produce one of a small (finite) set of outputs. An example of such a network is
the ACAS Xu aircraft avoidance network~\cite{julian2018deep}, which has five
inputs describing the position and velocity of an ``ownship'' and ``intruder,''
transformed through five ``hidden layers'' with 50 dimensions each, then
produces five real-valued outputs $y_1, \ldots, y_5$. The network's output
space is to be interpreted as having five \emph{classification regions},
partitioning the output space depending on which output dimension is
maximal.\footnote{The original ACAS Xu networks take the minimal output
dimension; this can be shown equivalent by inverting the weights and biases of
the final layer.} For example, the network may be said to advise a ``strong
right'' turn when $f(x) \in R_5$, with $R_5$ (``the fifth classification
region'') defined as $    R_5 = \{ y \mid y_5 > y_1 \wedge \cdots \wedge y_5 >
y_4 \}$.

Suppose we want to \emph{visualize the policy learned by the network.} We could
fix all inputs (eg. the velocity and heading) other than the position of the
attacking ship, resulting in a restriction domain of interest denoted $X$.
Then, we could compute $\weakprer{f}{X}{R_5}$ to get the set of all such input
positions for which the network advises to make a ``strong right,'' and use
standard computer graphics tools to plot these points on a graph in a
particular color (say, orange). We may then repeat the process, until we have
plotted $\weakprer{f}{X}{R_1}$ through $\weakprer{f}{X}{R_5}$ on the same plot
in separate colors. The process is shown in~\pref{sec:Evaluation-Precondition},
and the resulting plot can provide precise and immediate insight into the
behavior of an ownship controlled by the network.

\section{Bounded model checking of safety properties using strongest postcondition}
\label{sec:StrongestPostcondition}

We saw in the preceding section how $\hatr{f}{X}$ can be used to compute the
\emph{weakest precondition} of a network's input given conditions on the output,
then described how that primitive could be applied to visualizing decision
boundaries of a neural network. In this section, we consider a complementary
primitive called the \emph{strongest postcondition}, and show how it can be used
to perform bounded model checking~\cite{biere2009bounded} of neural-network
based controllers.

\subsection{The $\post{f}{X}$ Primitive}
We define $\post{f}{X}$ to be a \emph{set containing all output points that can
be mapped to under $f$ given some input in $X$}. Formally, for a neural network
$f : A \to B$ and input polytope $X \subseteq A$, we have
$    \{ f(x) \mid x \in X \} \subseteq \post{f}{X} \subseteq B$.
$\post{f}{X}$ \emph{always} satisfies the following properties:
\begin{enumerate}
    \item $\post{f}{X} \subseteq B$
    \item For any $x \in X$, $f(x) \in \post{f}{X}$.
\end{enumerate}

\subsubsection{Strongest Postcondition}
\label{sec:StrongestPostSub}
Notably, this definition only requires an \emph{over approximation}; there may
be points $y \in \post{f}{X}$ such that there \emph{is not} any $x \in X$
satisfying $f(x) = y$. This leads to the fact that $B$ (the range of $f$),
which we refer to as the \emph{weakest postcondition}, always satisfies
$\post{f}{X} = B$.  However, such a solution is not particularly useful, and
instead in general we would like to find the \emph{strongest postcondition},
denoted $\strongpost{f}{X}$, which exactly satisfies
    $\strongpost{f}{X} = \{ f(x) \mid x \in X \}$.

For \emph{arbitrary functions and regions}, $\strongpost{f}{X}$ may
not be convex or even representable as a union of convex shapes.  However, we
will show by construction in~\pref{sec:StrongestPost-WithHat} that, when
$\hatr{f}{X}$ exists and $X$ is a convex polytope, $\strongpost{f}{X}$ can be
represented precisely as \emph{a union of finitely many convex polytopes}.

\subsection{Computing $\strongpost{f}{X}$ with $\hatr{f}{X}$}
\label{sec:StrongestPost-WithHat}
Suppose we wish to compute $\strongpost{f}{X}$, and know that $\hatr{f}{X} = \{
    (P_1, F_1), $ $\ldots,$ $(P_n, F_n) \}$. We first note that
$\strongpost{f}{X} = \bigcup_{(P_i, F_i) \in \hatr{f}{X}} \strongpost{f}{P_i} =
\bigcup_{(P_i, F_i) \in \hatr{f}{X}} \strongpost{F_i}{P_i}$.  The first
equality holds because the $P_i$s partition $X$ and the second equality holds
because, given any $x \in P_i$, $f(x) = F_i(x)$.

Thus, it suffices to compute $\strongpost{F}{P}$ for a polytope $P$ and affine
function $F$. We note that, in computational geometry field, this corresponds
to \emph{transforming polytope $P$ under affine map $F$}, which in turn can be
shown to correspond to transforming the vertices of $P$ and then taking the
convex hull of the resulting vertices (due to the convexity of $P$ and $F$). In
other words, we have
$ \strongpost{F}{P} = \{ F(x) \mid x \in P \} = \Hull(\{ F(v) \mid v \in \Vert(P) \})$,
giving us, in total
$    \strongpost{f}{X} = \bigcup_{(P_i, F_i) \in \hatr{f}{X}} \Hull(\{ F_i(v)
\mid v \in \Vert(P_i) \})$.

\subsection{Inverted Pendulum Model}
In this section, we will consider a model taken
from~\citet{DBLP:conf/pldi/ZhuXMJ19}, which describes an \emph{inverted
pendulum} with a motor at the base which can apply angular acceleration to the
pendulum. The motor is controlled by a neural network, which has been trained to
keep the pendulum upright. The network takes as input two variables representing
the state of the system, namely the current position of the pendulum and the
current angular velocity. It then produces one output, namely the angular
acceleration it wishes to apply. We call this network an \emph{actor} and
denote it $f$. Notably, we replaced $\tanh$ non-linearities with their
piecewise-linear counterpart the \emph{hard tanh} $ g(x) = \mathrm{min}(\max(x,
-1), 1)$~\cite{collobert2004large}.

In the model used for training and verifying the network, the effect of
applying a particular acceleration $a$ at time $t$ to state $x_t$ to form new
state $x_{t+1}$ is modeled with an \emph{affine function} that takes as input
the previous state $x_t$ and the action $a$ and produces a new state $x_{t+1}$.
We call this function an \emph{environment model} and denote it $E$.

We can compose $E$ with $f$ to produce a new ``transition function''
$T(x_t) = E(x_t, f(x_t)) = x_{t+1}$, which accepts as input a state $x_t$ and
produces as output the corresponding state of the system after applying the
acceleration prescribed by the network for one timestep. Because $f$ is
piecewise-linear and $E$ is affine, it follows that $T$ itself is
piecewise-linear. With this notation, we are now ready to describe a
\emph{correctness specification} for the network.

\begin{definition}
    Given a function $f : A \to B$, we define a \emph{correctness
    specification} $C$ to be a set of tuples $\{ (X_1, Y_1), \ldots, (X_n, Y_n)
    \}$ where each $X_i \subseteq A$ and each $Y_i \subseteq B$.

    We say \emph{$f$ meets specification $C$} if, for all $(X_i, Y_i)$ pairs
    and all $x \in X_i$, $f(x) \in Y_i$ is satisfied.

    In contrast, we say \emph{$f$ violates specification $C$} if there exists
    some $(X_i, Y_i)$ pair and some $x \in X_i$ such that $f(x) \not\in Y_i$.
\end{definition}

In the pendulum example, we will consider three sets of states:
\begin{enumerate}
    \item The set of \emph{initial states} $S_I = \{ (\nu, \omega) \mid -0.35
        \leq \nu \leq 0.35 \wedge -0.35 \leq \omega \leq 0.35 \}$ is the set of
        states the pendulum can be in before applying the network.
    \item The set of \emph{safe states} $S_S = \{ (\nu, \omega) \mid -0.5 \leq
        \nu \leq 0.5 \wedge -0.5 \leq \omega \leq 0.5 \}$ is the set of states
        for which we say the network has succeeded, namely when the pendulum is
        upright and not moving at an unsafe speed.
    \item The set of \emph{unsafe states} $S_U = \{ (\nu, \omega) \mid (\nu,
        \omega) \not\in S_S \}$ is the set of states for which we say the
        network has failed, namely when the pendulum dips below the horizontal
        or moves at an unsafe speed.
\end{enumerate}

We would now like to solve the following \emph{bounded model checking} problem:
\emph{Is there any initial state such that, after applying the neural
controller for $K$ time steps, the pendulum has ever reached an unsafe
state?}
This corresponds to verifying the specification:
\begin{equation}
    \label{eq:BMCSpecification}
    \big\{ (S_I, S_S), (\strongpost{T}{S_I}, S_S), (\strongpost{T}{S_I}^2, S_S),
    \ldots, (\strongpost{T}{S_I}^{K - 1}, S_S) \big\}
\end{equation}
where $\strongpost{T}{S_I}^n$ indicates the repeated application of
$\strongpost{T}{\cdot}$; for instance, $\strongpost{T}{S_I}^2$ $=$
$\strongpost{T}{\strongpost{T}{S_I}}$.

\subsection{Bounded Model Checking with DeepPoly}

One could attempt to use DeepPoly \cite{Singh:POPL2019} here as well, however,
as DeepPoly uses an \emph{over-approximation} for $\post{f}{x}$, we would expect
to run into many false-positives (where DeepPoly reports that the specification
cannot be verified, but is unable to state whether that is due to imprecision in
the abstraction used or because the specification is violated).

\subsection{Bounded Model Checking with ReluPlex}

Another approach we could take is to use the DNN-oriented SMT solver
ReluPlex~\cite{reluplex:CAV2017}. Because $T$ is piecewise-linear, ReluPlex can
handle queries such as: $\exists x \in S_I: T(x) \in S_U$ (with the caveat that,
as $S_U$ is non-convex, it will have to be partitioned into convex polytopes
before ReluPlex can be queried). Notably, however, ReluPlex does not support
universal quantifiers on iterated applications of $T$, i.e. there is no
meaningful way of encoding $\forall k \in \{ 1, 2, \ldots, K \}: \exists x \in
S_I: T^k(x) \in S_U \}$, where $T^k(x)$ is the repeated application of $T$ to
$x$ $k$ times, which is the actual query we wish to run. Instead, we need to
query ReluPlex \emph{separately} for each timestep, forcing it to repeat work
done on verifying properties of the behavior of the network on earlier
timesteps at each new timestep.

\subsection{Bounded Model Checking with $\hatr{f}{X}$ and $\strongpost{f}{X}$}

Alternatively, one can perform the bounded model checking iteratively using
$\strongpost{T}{X}$, computed as discussed in~\pref{sec:StrongestPost-WithHat}.
This works as follows:

Initially, we compute $\strongpost{T}{S_I}$ and check if $\strongpost{T}{S_I}
\cap S_U = \emptyset$. If the intersection is \emph{not} empty, then by
definition there must be some state in $S_I$ that transitions to an unsafe
state (in $S_U$) after applying the network for one timestep and we can report
that the model has violated the specification. Next, we can use the computed
$\strongpost{T}{S_I}$ and compute $   \strongpost{T}{\strongpost{T}{S_I}} =
\strongpost{T}{S_I}^2 $, which we can then check to see if any intersection
with $S_U$ is found. We can repeat this iteratively, effectively ``growing''
the ``reachable set'' of our model at each step. If, upon reaching
$\strongpost{T}{S_I}^{K}$, no intersection with $S_U$ has ever been identified,
we may safely say that the model satisfies the specification in
\pref{eq:BMCSpecification}.  Importantly, we can iteratively reuse the
computation of $\strongpost{T}{S_I}^j$ for computing
$\strongpost{T}{S_I}^{j+1}$, as opposed to the SMT-solver approach which does
not share information between verification of different step lengths.

There are multiple further optimizations that can be performed in this approach.
For example, $\hatr{f}{S_S}$ can be computed and stored once ahead-of-time, so
$\strongpost{T}{S_I}^j$ can be immediately computed without having to
repetitively re-compute our symbolic representation of $f$. Furthermore, if
something in one step has already been included in the post-set of a previous
step, it can be safely ignored when propagating to the next step. For an
extreme example, if $\strongpost{T}{S_I} \subseteq S_I$ (and $S_I \cap S_U =
\emptyset$) we could immediately verify the network for all $K$ timesteps (in
fact, \emph{all timesteps in general}) because it satisfies the inductive
invariant $\strongpost{T}{S_I}^j \subseteq S_I \wedge S_I \cap S_U = \emptyset$
for all $j$.

\section{Patching Deep Neural Networks}
\label{sec:PatchingNetworks}

\begin{figure}[t]
    \centering

    \tikzstyle{processblock} = [rectangle, draw, fill=blue!20, 
    text width=6em, text centered, rounded corners, minimum height=4em]
\tikzstyle{datablock} = [rectangle, draw, fill=red!20, 
    text width=6em, text centered, minimum height=3em]
\tikzstyle{line} = [draw, -latex']

\begin{tikzpicture}[node distance = 2cm, auto]
  \node [datablock] (relu) {\small ReLU network $f$};
  \node [datablock, below of=relu] (patchspec) {\small Patch specification (\pref{def:PatchSpecification})};

  \node [processblock, right of=patchspec, node distance=3cm] (pointconstructor) {\small Patch point
  construction using $\hatr{f}{X}$ (\pref{sec:NetPatch-Infinite})};

  \node [datablock, right of=pointconstructor, node distance=3cm] (patchpoints) {\small Patch points};

  \node [datablock, right of=relu, node distance=6cm] (masking) {\small Masking network $f_M$ (\pref{sec:MaskingNetworks})};

  \node [processblock, right of=masking, node distance=3cm] (patchalgo) {\small Patching using MAX SMT (\pref{sec:MAXSMT})};

  \node [datablock, right of=patchalgo, node distance=2.8cm] (patchedmasking) {\small Patched masking network};

  \path [line] (relu) -- (pointconstructor);
  \path [line] (relu) -- (masking);
  \path [line] (patchspec) -- (pointconstructor);
  \path [line] (pointconstructor) -- (patchpoints);
  \path [line] (patchpoints) -- (patchalgo);
  \path [line] (masking) -- (patchalgo);
  \path [line] (patchalgo) -- (patchedmasking);
\end{tikzpicture}
    \caption{Patching deep neural networks}
    \label{fig:patching-overview}
\end{figure}

In this section, we introduce and formalize the problem of \emph{network
patching}, then show how our symbolic representation of a DNN can lead to a
solution to the network patching problem (\pref{fig:patching-overview}). 
As a running example, consider the
expository neural network defined by function $f$ below:
\begin{equation} 
    \label{eq:NetPatch-network}
    f(x) =
    \begin{bmatrix}
        1 & 1 & 1 \\
        0 & -1 & -1
    \end{bmatrix}
    \relu{}\left(
    \begin{bmatrix}
        -1 & 1 \\
        1 & 0 \\
        0 & 1
    \end{bmatrix}
    \begin{bmatrix}
        x_1 \\
        x_2
    \end{bmatrix}
    +
    \begin{bmatrix}
        -0.5 \\ 0 \\ 0
    \end{bmatrix}
    \right)
    +
    \begin{bmatrix}
        0 \\ 1
    \end{bmatrix}
\end{equation}

This network is visualized in~\pref{fig:patch-pre}, with classification regions
$R_1 = \{ y_1 \geq y_2 \}$ and $R_2 = \{ y_2 \geq y_1 \}$. Now, suppose for
expository purposes that we want this network's decision boundaries to look
like that of~\pref{fig:patch-post}, and suppose one wanted to ``patch'' the
network, i.e. make small modifications to its weights, such that this is
accomplished.
Before approaching this problem, we must formalize a representation of the
desired network behavior:
\begin{definition}
    \label{def:PatchSpecification}
    Given a neural network $f : A \to B$, a \emph{patch specification} is a
    finite set of \emph{pairs of convex polytopes} $T = \{ (X_0, Y_0), \ldots,
    (X_n, Y_n) \}$ where each $X_i \in A$ and $Y_i \in B$.
\end{definition}
We can then formalize the concept of a network patch like so:
\begin{definition}
    \label{def:Patch}
    Given a neural network $f(x; \theta)$ parameterized by weights $\theta$ and
    a patch specification $T = \{ (X_0, Y_0), \ldots, (X_n, Y_n) \}$, we define
    a \emph{patch to $f$ satisfying $T$} to be a change $\delta$ to the weights
    $\theta$ such that, for all $x \in X_i$, $f(x; \theta + \delta)$ satisfies
    $f(x; \theta + \delta) \in Y_i$.
\end{definition}

In our running example, we have specified four polytopes
in~\pref{fig:patch-spec}, which can be used to build the patch specification:
$\{ (P_1, R_2), (P_2, R_1), (P_3, R_1), (P_4, R_1) \}$.  Note that the
polytopes in patch specifications are usually \emph{infinite}, meaning the
quantifications in~\pref{def:Patch} are over infinite sets. 

One approach would be to treat this as a normal network training problem, which
we can try to find a solution using gradient descent~\cite{patterson2017deep}.
However, because gradient descent works over finitely-many individual points,
even if the training loss goes to $0$ it is not guaranteed that the actual
constraints (which quantify over infinite sets of points) are all met.
This issue could be addressed using $\hatr{f}{X}$ to implement a train-verify
loop, where we train with gradient descent, then use $\prer{f}{X}{Y}$ to find
points that do not satisfy the patch set, which can then be placed back into the
training set for the next iteration of gradient descent. However, this iterative
process is unfortunately inefficient and in general not guaranteed to terminate.
In the proceeding sections, we will address these issues by formulating the
patching problem as a MAX-SMT instance.

\begin{figure}[t]
    \centering
    \begin{subfigure}[t]{.31\linewidth}
        \includegraphics[width=\linewidth]{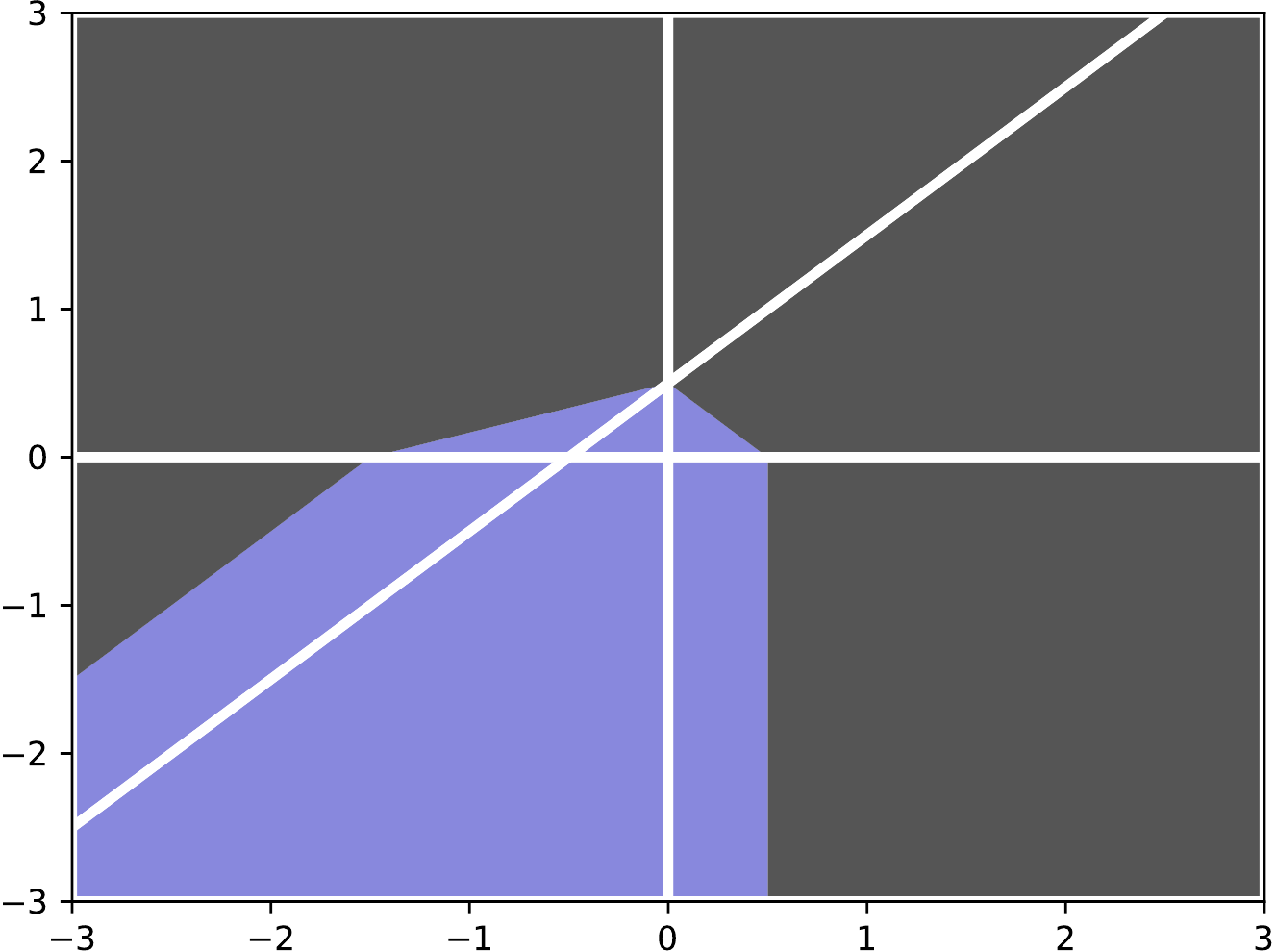}
        \caption{The network's linear partitions and decision boundaries before
        patching.}
        \label{fig:patch-pre}
    \end{subfigure}
\hspace{.5em}
    \begin{subfigure}[t]{.31\linewidth}
        \includegraphics[width=\linewidth]{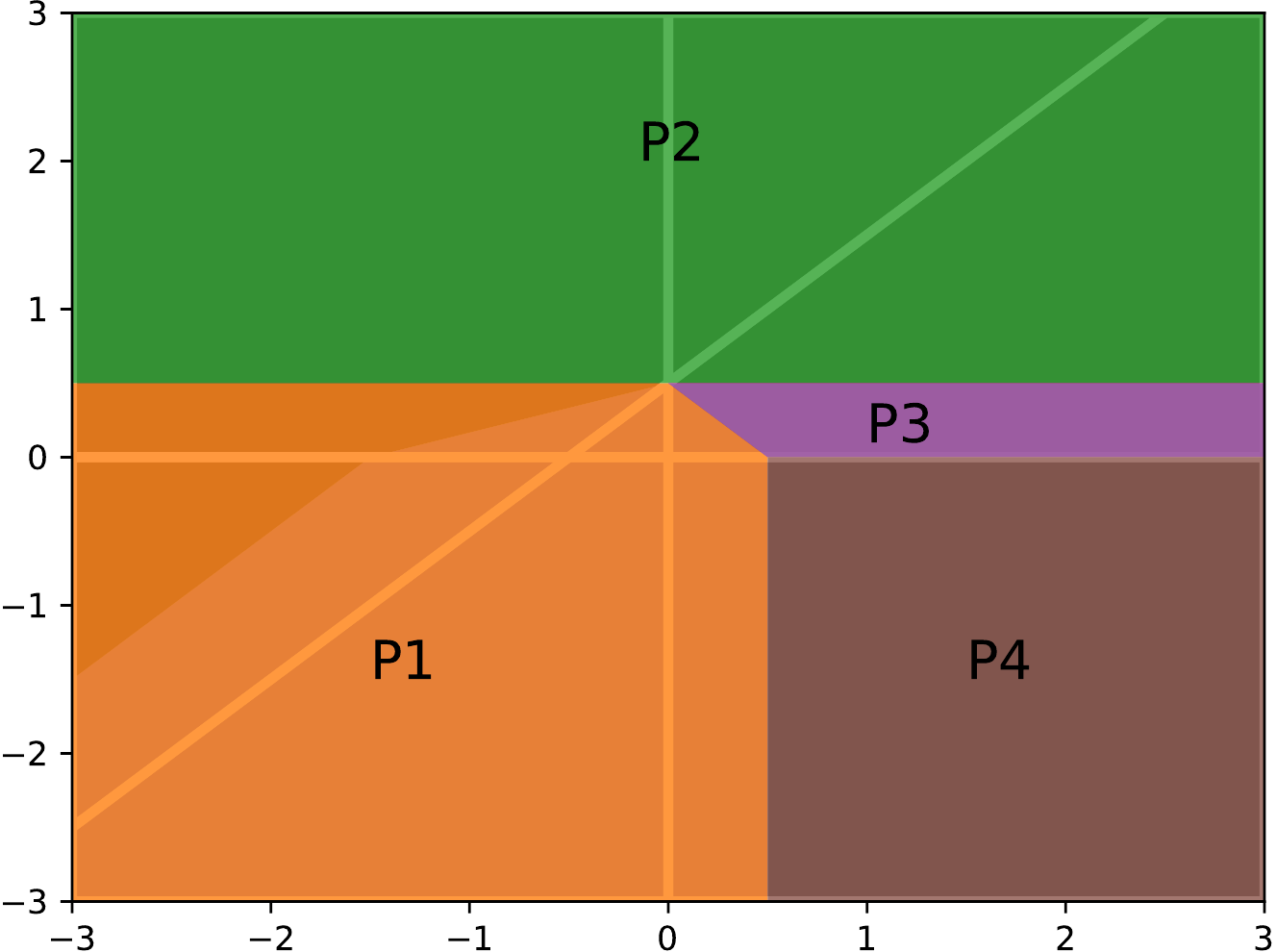}
        \caption{The partitions used in the \emph{patch specification}.}
        \label{fig:patch-spec}
    \end{subfigure}
    \hspace{.5em}
    \begin{subfigure}[t]{.31\linewidth}
        \includegraphics[width=\linewidth]{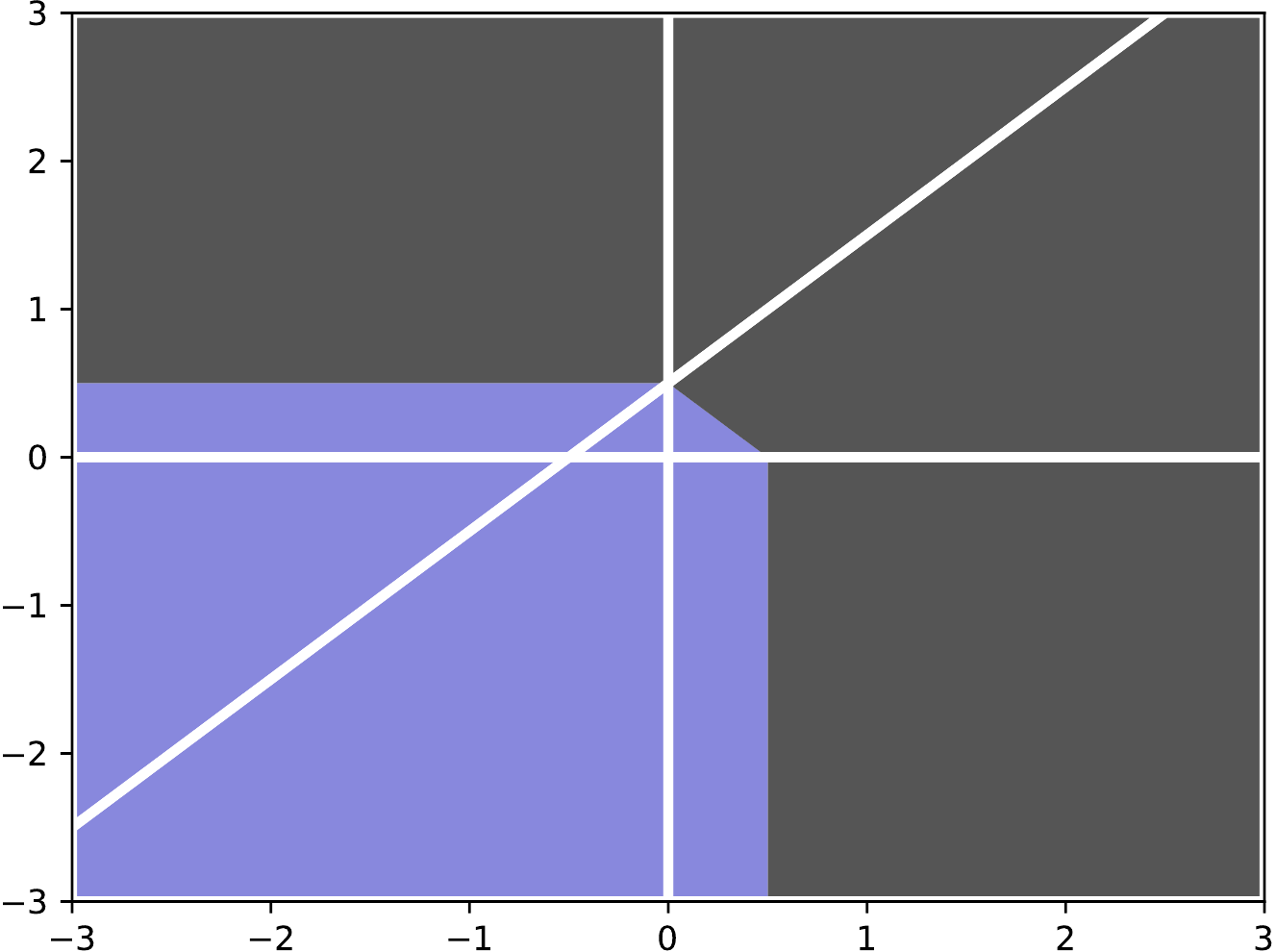}
        \caption{The patched network's linear partitions and decision
        boundaries.}
        \label{fig:patch-post}
    \end{subfigure}
    \caption{Networking patching on DNN $f$ in \pref{eq:NetPatch-network}}
    \label{fig:patch-example}
\end{figure}

\subsection{Deep Neural Network Patching as a Non-Linear MAX SMT}
\label{sec:MAXSMT}
The problem of finding such a patch can be seen as an instance of the more
general \emph{MAX SMT} problem, which looks to find values satisfying as many
given logical formulas as possible. We recall from~\pref{def:Patch} that we
wish to find a $\delta$ satisfying: $\forall (X_i, Y_i) \in T: \forall x \in
X_i: f(x; \theta + \delta) \in Y_i$, which can be directly translated to the
MAX-SMT instance defined by the conjunction $\bigwedge_{(X_i, Y_i) \in T}
\forall x \in X_i: f(x; \theta + \delta) \in Y_i$ (note that $x \in X_i$ and $y
\in Y_i$ can be encoded in any theory supporting rational linear inequalities).

Although \emph{in theory} this problem can be exactly solved by any modern SMT
solver (such as Z3~\cite{TACAS:deMB08}), the extreme non-linearity involved
makes it infeasible in practice. To make it feasible, we would like the problem
to be highly linear, eg. encodable as an LP. Unfortunately, there are numerous
reasons why this is not possible with the above formulation:
\begin{enumerate}
    \item The constraints require that the theory solver can encode the network
        $f$, which is usually highly-non-linear (eg. ReLU layers introduce
        exponentially many possible branches).
    \item It solves for $\delta$ while simultaneously quantifying over the
        infinite $x \in X_i$, resulting in significant non-linearity in the
        problem itself.
    \item If $\delta$ can change multiple weights over multiple layers or
        nodes, the interaction between all weight changes results in a
        high-order polynomial problem instead of a linear one (because the
        linear layers are applied sequentially, eg. $y = w_2(w_1x + b_1) +
        b_2$).
\end{enumerate}

To address the first two of these problems, we will introduce
in~\pref{sec:MaskingNetworks} a new neural network architecture termed
\emph{Masking Networks}. Our core result, \pref{thm:KeyPoints}, will show that
Masking Networks have desirable properties which allow us to (1) replace the
highly-non-linear $f$ with an affine map in each conjunct and (2) lower the
problem of patching on the infinitely-many $x \in X_i$ with patching on
finitely-many \emph{key points}. To address the final issue,
in~\pref{sec:LinearMAXSMT} we will limit the set of weights that can be patched
at the same time to ensure that the resulting problem is still linear. Finally,
in~\pref{sec:IntervalMAXSMT} we will show that extending this restriction to
only changing \emph{a single} weight at a time (referred to as ``weight-wise
patching'') results in an algorithm efficient enough for use on real-world
networks.

\subsection{Masking Networks}
\label{sec:MaskingNetworks}

Our central result in~\pref{thm:KeyPoints} relies on a new type of neural
network, which we propose here and term a \emph{Masking Network}. Masking
Networks strictly generalizes the concept of a feed-forward network with
\relu{} activations and are loosely inspired by the work of~\citet{galu}. We
note that the ideas in this section can be further extended to networks with
arbitrary piecewise linear activation functions (eg. \maxpool{}) but we focus
here on \relu{} for ease of exposition. The essential insight is to fully
separate the \emph{activation pattern} of the network (which defines the
partitioning of $\hat{f}$) from the \emph{result of network} (which defines the
affine maps of $\hat{f}$).

We first review the architecture of standard feed-forward neural networks,
described in~\pref{sec:Overview}. The entire network function $f$ is decomposed
into a series of sequentially-applied \emph{layers}, which we denote $L_1, L_2,
\ldots, L_n$. Given a particular \emph{input vector} to the network, each layer
is said to have an associated \emph{output vector}. Namely, if $x$ is the input
vector, then the output vector of layer $i$ is $L_i(\cdots L_1(x))$. Recall
from~\pref{sec:Overview-DNNs} that, in a standard feed-forward \relu{} network,
the layers alternate between \fullyconnected{} or \conv{} layers (which
correspond to affine maps) and piecewise-linear \relu{} layers, which are
defined component-wise by:
\[ \relu{}(x)_i =
    \begin{cases}
        0 & x_i \leq 0 \\
        x_i & x_i > 0 
    \end{cases}
\]
Meaning that the $i$th component of the output vector is $0$ when the $i$th
component of the input vector is non-positive, and left unchanged otherwise. An
equivalent logical definition would be $(x_i > 0 \wedge \relu{}(x)_i = x_i)
\vee (x_i \leq 0 \wedge \relu{}(x)_i = 0)$.

In a \emph{Masking Network}, four changes are made:

\begin{enumerate}
    \item The inputs and outputs to each layer are \emph{vectors of 2-tuples}
         $((x^a_1, x^v_2), \ldots, (x^a_d, x^v_d))$. In some
        scenarios it will be convenient to refer to the vector consisting of
        just the first-values, noted $x^a = (x^a_1, \ldots, x^a_d)$ and called
        the ``activation vector.'' Correspondingly, we will notate $x^v =
        (x^v_1, \ldots, x^v_d)$ and name this the ``values vector.''
    \item The input vector $x = (x_1, \ldots, x_k)$ to the entire network is
        converted to an input vector where the activation and values vectors
        are equal, i.e. $((x_1, x_1), (x_2, x_2), \ldots, (x_k, x_k))$ or,
        equivalently, $x^a = x^v = x$.
    \item The output of the network is taken to be \emph{only} the values
        vector $x^v$ outputted by the last layer.
    \item \relu{} layers are replaced by \emph{Masked \relu{}} (\mrelu{}) layers, defined below.
\end{enumerate}
In a masking network, \fullyconnected{} or \conv{} layers are associated with
\emph{two} parameters each, $\theta^a$ and $\theta^v$. Given an input with
activation vectors $x^a$ and value vectors $x^v$ to a \fullyconnected{} or
\conv{} layer, $\theta^a$ and $\theta^v$ are applied to the $x^a$ and $x^v$ vectors
independently as follows:
\begin{align*}
    \fullyconnected{}^a(x^a, x^v; \theta^a, \theta^v) &\eqdef \fullyconnected(x^a; \theta^a) \\
    \fullyconnected{}^v(x^a, x^v; \theta^a, \theta^v) &\eqdef \fullyconnected(x^v; \theta^v)
\end{align*}
Given an input vector-of-tuples $((x^a_1, x^v_1), \ldots, (x^a_d,
x^v_d))$ to a \mrelu{} layer, we define its $i$th output tuple to be:
\[
    \mrelu{}\left(\left((x^a_1, x^v_1), \ldots, (x^a_d, x^v_d)\right)\right)_i =
    \begin{cases}
        (0, 0) & x^a_i \leq 0 \\
        (x^a_i, x^v_i) & x^a_i > 0
    \end{cases}
\]

The following properties follow directly from these definitions and are
integral to understanding \maskrelu{} networks:

\begin{enumerate}
    \item A masking network is \emph{piecewise-linear}, and thus $\hatr{f}{X}$
        can be computed for any bounded polytope $X$ by~\pref{thm:PWL-Hat}.
    \item If $x^a = x^v$, then $\mrelu{}(x^a, x^v) = (\relu{}(x^v), \relu{}(x^v))$.
    \item If $x^a = x^v$ and $\theta^a = \theta^v$, then
        $\fullyconnected_{\theta^a, \theta^v}(x^a, x^v) =$
        $(\fullyconnected_{\theta^v}(x^v),$ $\fullyconnected_{\theta^v}(x^v))$
        and similarly for $\conv{}$.
    \item The value of the \emph{activation vectors} ($x^a$) only ever depends
        on the values of prior activation vectors and activation parameters
        ($\theta^a$), never prior value vectors ($x^v$) or value parameters
        ($\theta^v$).
    \item The \emph{non-linear} behavior of the \emph{values vectors} ($x^v$)
        is \emph{entirely} controlled by the activation vectors ($x^a$).  For
        example, if all activation vectors $x^a$ are strictly positive, then
        the entire masking network is affine (regardless of the values of the
        value vectors).
\end{enumerate}
The first property allows us to use the symbolic representation $\hat{f}$ of
the masking network. The next two properties show that we can always encode any
feed-forward \relu{} network as a \maskrelu{} network by setting $\theta^a =
\theta^v = \theta$ for corresponding affine layers (i.e., masking networks are
strictly more expressive than feed-forward \relu{} networks). The final two
properties ensure that:
\begin{enumerate}
    \item The partitioning of $\hat{f}$ is determined \emph{entirely} by the
        activation parameters $\theta^a$.
    \item The affine maps of $\hat{f}$ are determined \emph{entirely} by the
        value parameters $\theta^v$.
\end{enumerate}

Intuitively, now, we can think of changes to the activation parameters $\theta^a$
as \emph{moving the positions of the partitions} in $\hat{f}$ (i.e., changing
the position of the polytopes in~\pref{fig:habit-partitions} and changing the
position of the white lines in~\pref{fig:habit-classes}
and~\pref{fig:patch-pre}). Moreover, changes to the value parameters $\theta^v$
corresponds to changing the classification \emph{within} each partition (i.e.,
changing the position of the decision boundary \emph{within} each white-line
delineated region in~\pref{fig:habit-classes} and~\pref{fig:patch-pre}
\emph{without changing the position of the white lines themselves}).

\subsection{Patching Deep Neural Networks on Infinite Inputs}
\label{sec:NetPatch-Infinite}

The previous observations lead to the following theorem:

\begin{restatable}{theorem}{ThmKeyPoints}
    \label{thm:KeyPoints}
    Given a masking network $f(x; \theta^a, \theta^v)$ parameterized by
    activation parameters $\theta^a$ and value parameters $\theta^v$, 
    an input polytope $X$, output polytope $Y$, and $\hatr{f}{X} = \{ (P_1,
    F_1), \ldots, (P_n, F_n) \}$, then:

    A \emph{values-patched network} $f(x; \theta^a, \theta^v + \delta)$, modifying
    only $\theta^v$, satisfies the property $\forall x \in X: f(x; \theta^a,
    \theta^v + \delta) \in Y$ if and only if it satisfies the property $\forall
    (P_i, F_i) \in \hatr{f(x; \theta^a, \theta^v)}{X}: \forall v \in
    \mathrm{Vert}(P_i): F_i(v; \theta^v + \delta) \in Y$.
\end{restatable}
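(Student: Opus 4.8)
\emph{Proof proposal.}
The plan is to show that patching only the value parameters leaves the polytope partition of $\hatr{f}{X}$ unchanged, which lets the universal quantifier over the infinite set $X$ be replaced, one partition at a time, by a universal quantifier over the finite vertex set, using convexity of $Y$.

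The crux is the structural claim that, for every partition $(P_i,F_i)\in\hatr{f(x;\theta^a,\theta^v)}{X}$, the perturbed network agrees on $P_i$ with the value-perturbed affine map; i.e.\ $f(x;\theta^a,\theta^v+\delta)=F_i(x;\theta^v+\delta)$ for all $x\in P_i$, where $F_i(\cdot\,;\theta^v+\delta)$ denotes the affine map obtained from $F_i$ by substituting $\theta^v+\delta$ for $\theta^v$ (equivalently, $\hatr{f(x;\theta^a,\theta^v+\delta)}{X}=\{(P_1,F_1(\cdot\,;\theta^v+\delta)),\ldots,(P_n,F_n(\cdot\,;\theta^v+\delta))\}$). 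This is exactly where the design of \maskrelu{} networks is used: by the fourth property listed for such networks, the activation vector produced at every layer is a function of the input determined solely by $\theta^a$, never by $\theta^v$; and the only branching anywhere in the network occurs in the \mrelu{} layers, governed entirely by the signs of activation entries $x^a_i$. Hence for a given input the branch taken in each \mrelu{} layer — and therefore the collection of branch hyperplanes $x^a_i=0$ pulled back to the input domain, and the induced partition $\{P_i\}$ — does not move when $\theta^v\mapsto\theta^v+\delta$. Formally this is an induction over the layers $L_1,\ldots,L_n$ using the \extendname{}-based construction of $\hat f$: each intermediate symbolic representation has a partition depending only on $\theta^a$. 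Within a fixed $P_i$ every \mrelu{} layer acts as a fixed coordinate-zeroing affine mask, so the network restricted to $P_i$ is the composition of these fixed masks with the value-side affine layers, which is affine in the input $x$ for each fixed value of $\theta^v$; substituting $\theta^v+\delta$ gives a well-defined affine map on the unchanged domain $P_i$, namely $F_i(\cdot\,;\theta^v+\delta)$.

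Given this, the argument finishes routinely. Since the $P_i$ cover $X$ up to shared boundaries (by \pref{def:fhat}) and $f(x;\theta^a,\theta^v+\delta)=F_i(x;\theta^v+\delta)$ on each $P_i$, the condition $\forall x\in X: f(x;\theta^a,\theta^v+\delta)\in Y$ is equivalent to $\bigwedge_i\,\forall x\in P_i: F_i(x;\theta^v+\delta)\in Y$. It then suffices to show, for each $i$, that $\forall x\in P_i: F_i(x;\theta^v+\delta)\in Y$ iff $\forall v\in\Vert(P_i): F_i(v;\theta^v+\delta)\in Y$. The forward direction is immediate, since every vertex of $P_i$ lies in $P_i$. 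For the converse, $X$ is a bounded polytope, hence so is $P_i$, so $P_i=\Hull(\Vert(P_i))$; by the affine-image identity of \pref{sec:StrongestPost-WithHat}, $\{F_i(x;\theta^v+\delta)\mid x\in P_i\}=\Hull(\{F_i(v;\theta^v+\delta)\mid v\in\Vert(P_i)\})$, so the image of $P_i$ is the convex hull of the finitely many vertex images; as $Y$ is convex and contains each of those images, it contains their hull, giving $F_i(x;\theta^v+\delta)\in Y$ for all $x\in P_i$. Chaining the per-$i$ equivalences yields the theorem. The one genuinely nontrivial step is the structural claim of the middle paragraph — establishing that the partition is literally identical after the patch and pinning down the meaning of $F_i(\cdot\,;\theta^v+\delta)$; everything rests there on the separation-of-concerns properties of masking networks (activation flow depends only on $\theta^a$, value flow carries $\theta^v$) together with the layer-by-layer \extendname{} construction, and once that is in hand the rest is bookkeeping plus the already-proved polytope/convexity facts.
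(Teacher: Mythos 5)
Your proposal is correct and follows essentially the same route as the paper: the appendix proof is exactly your final convexity step (vertices in $Y$ iff the whole affine image of each $P_i$ is in $Y$), while your ``structural claim'' that value-only patching leaves the partition of $\hatr{f}{X}$ unchanged is precisely the observation the paper states in the prose of \pref{sec:NetPatch-Infinite} rather than inside the formal proof. Your write-up is simply a more explicit version of the same argument, making the partition-invariance and the identity $F_i^{\#}(P_i)=\Hull(\{F_i(v)\mid v\in\Vert(P_i)\})$ explicit where the paper leaves them implicit.
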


The above theorem, proved in~\pref{app:key-points}, allows us to translate the
problem of patching over an \emph{infinite} set of points to that of patching on
\emph{finitely-many} vertex points, \emph{as long as} we keep the activation
parameters $\theta^a$ constant. We will use this fact in~\pref{sec:LinearMAXSMT} to
phrase the patching problem as an instance of the linear MAX-SMT problem.

Finally, it is important to note that for any linear partition $(P_i, F_i)$ in
the \emph{unpatched} $\hatr{f(x; \theta^a, \theta^v)}{X}$ we can compute
$F_i(x; \theta^v + \delta)$ for any $\delta$, i.e. the affine map corresponding
to the \emph{values-patched} network $f(x; \theta^a, \theta^v + \delta)$ over
input region $P_i$. The existence of such a map follows from the
observation that the linear partitioning of $\hatr{f(x; \theta^a,
\theta^v)}{X}$ is identical to that of $\hatr{f(x; \theta^a, \theta^v +
\delta)}{X}$ because only the value parameters are modified (so linear
partitions of the unpatched network are still linear partitions of the patched
network.) The exact method of computing $F_i(x; \theta^v + \delta)$ is
unimportant to this discussion, however for expository purposes we provide an
example for the map corresponding to partition $P_4$ in~\pref{fig:patch-spec}:
\begingroup %
\setlength\arraycolsep{2pt}
\[ F(x \in P_4; \theta + \delta) =
    \left(
    \begin{bmatrix}
        1 & 1  & 1 \\
        0 & -1 & -1
    \end{bmatrix} + \delta^3
    \right)
    \begin{bmatrix}
        0 & 0 & 0 \\
        0 & 1 & 0 \\
        0 & 0 & 1
    \end{bmatrix}
    \left(
    \left(
    \begin{bmatrix}
        -1 & 1 \\
        1 & 0 \\
        0 & 1
    \end{bmatrix}
    + \delta^1
    \right)
    \begin{bmatrix}
        x_1 \\
        x_2
    \end{bmatrix}
    +
    \left(
    \begin{bmatrix}
        -0.5 \\
        0 \\
        0
    \end{bmatrix}
    + \delta^2
    \right)
    \right)
    +
    \left(
    \begin{bmatrix}
        0 \\
        1
    \end{bmatrix}
    + \delta^4
    \right)
\]
\endgroup

\subsection{Network Patching as a Linear MAX SMT}
\label{sec:LinearMAXSMT}

In this section, we will show how Masking Networks can be used to address the
issues identified in~\pref{sec:MAXSMT} and finally translate the
highly-non-linear MAX SMT formulation of the patching problem into a linear
one. To do this, we will restrict ourselves to only patching the $\theta^v$
value-parameters, and furthermore only patch the parameters for a \emph{single
node at a time}.

First, we translate the given \relu{} network into an equivalent Masking Network
(which can always be done as Masking Networks are strictly more powerful than
sequential feed-forward networks). We can then address the problem~(1)
from~\pref{sec:MAXSMT}, i.e.\ the non-linearity of $f$, by restricting ourselves
to \emph{only patching the value parameters $\theta^v$}. The value parameters do
not impact the partitioning of $\hat{f}$ (only the affine maps). Thus, this
restriction allows us to break the non-linear MAX SMT query into a series of MAX
SMT queries, one for each linear region, and within each the function is linear
with respect to the input (see the discussion about $F_i(x; \theta^v + \delta)$
in~\pref{sec:NetPatch-Infinite}).

To address problem~(2) from~\pref{sec:MAXSMT}, i.e.\ quantifying over the
infinitely many $x \in X_i$ while solving for $\delta$, we use the result
in~\pref{thm:KeyPoints} to quantify over the finitely-many \emph{vertices of
each $X_i$ intersected with the linear region} instead of over the
infinitely-many points in each $X_i$. \pref{thm:KeyPoints} ensures that the two
are equivalent.

To address the final issue from~\pref{sec:MAXSMT}, i.e. the non-linear
interaction when changing multiple weights in a network, we can limit $\delta$
to only modify the weights corresponding to a single node in the network, which
will prevent such non-linearity arising from sequential layers.  Once all three
of these changes are made, the corresponding satisfiability problem can be
encoded as an LP and more general solvers like Z3 can find solutions to the
maximization problem in some cases.

The corresponding satisfiability problem looks like:
\begin{equation}
    \label{eq:PatchMAXSMT}
    \bigwedge_{(X_i, Y_i) \in T}
    \bigwedge_{(P_j, F_j) \in \hatr{f_M}{X_i}}
    \bigwedge_{v \in \Vert(P_j)}
    F_j(v; \theta + \delta) \in Y_i
\end{equation}
Where $f_M(\cdot; \theta, \theta)$ is the Masking Network equivalent to $f$,
$F_j(v; \theta + \delta)$ is the affine map corresponding to $f_M(\cdot;
\theta, \theta + \delta)$ over linear partition $P_j$, and $\delta$ changes the
weights of at most a single node.

Notably, this involves finitely many conjunctions and $F_j(v; \theta + \delta)$
can be written as an affine function of $\delta$ when $v$ and $\theta$ are
fixed. For example, consider again the expository network visualized
in~\pref{fig:patch-example} and the key point $(3, 0.5)$ lying in polytope
$P_4$. Now, if we only patch the second of the three nodes in the first affine
layer, then $\delta$ has only three non-zero value (one for each of the $3$
inputs to that node along with its bias) which we can write $\delta_1,
\delta_2, \delta_3$. We can then write $F_1(v; \theta, \theta + \delta)$ as:
\begin{align*}
    F(x \in P_4; \theta + \delta) &=
    \begin{bmatrix}
        1 & 1  & 1 \\
        0 & -1 & -1
    \end{bmatrix}
    \begin{bmatrix}
        0 & 0 & 0 \\
        0 & 1 & 0 \\
        0 & 0 & 1
    \end{bmatrix}
    \left(
    \begin{bmatrix}
        -1 & 1 \\
        1 + \delta_1 & \delta_2 \\
        0 & 1
    \end{bmatrix}
    \begin{bmatrix}
        3 \\
        0.5
    \end{bmatrix}
    +
    \begin{bmatrix}
        -0.5 \\
        \delta_3 \\
        0
    \end{bmatrix}
    \right)
    +
    \begin{bmatrix}
        0 \\
        1
    \end{bmatrix} \\
    &=
    \begin{bmatrix}
        3\delta_1 + 0.5\delta_2 + \delta_3 \\
        -3\delta_1 - 0.5\delta_2 - \delta_3
    \end{bmatrix}
    +
    \begin{bmatrix}
        3.5 \\
        -2.5
    \end{bmatrix} \\
    &=
    \begin{bmatrix}
        3 & 0.5 & 1 \\
        -3 & -0.5 & -1
    \end{bmatrix}
    \begin{bmatrix}
        \delta_1 \\ \delta_2 \\ \delta_3
    \end{bmatrix}
    +
    \begin{bmatrix}
        3.5 \\
        -2.5
    \end{bmatrix}
\end{align*}
which is clearly affine in the $\delta_1, \delta_2, \delta_3$
potentially-non-zero components of $\delta$. The exact derivation of the above
matrices is not entirely important; instead, we wish to highlight that this
formulation can translate the corresponding satisfiability problem into a
linear one.

\subsection{Weight-Wise Network Patching as an Interval MAX SMT}
\label{sec:IntervalMAXSMT}

This linear formulation is \emph{still} prohibitively expensive in most
real-world scenarios, due primarily to the large number of ``key points''
necessary (often well over $30,000$). The last optimization we make to the
process is to limit $\delta$ to changing only \emph{a single weight}, which we
refer to as ``weight-wise patching.'' Now, for any given weight in $\theta$,
\pref{eq:PatchMAXSMT} corresponds to a \emph{conjunction of linear inequalities
in one variable}.  Thus, each of the conjuncts corresponds to an \emph{interval
on the corresponding patch $\delta$ in which that constraint is met}. We refer
to the set of all such intervals for the $k$th weight as $I_k$. Finding the
optimal patch for that weight now corresponds to finding an interval $M = [M^l,
M^u]$ that maximizes the number of intervals in $I_k$ that contain it. This can
be done in linear time using a ``linear sweep'' algorithm once the intervals are
sorted. We can repeat this process for every weight in a particular layer (or
the entire network if desired), picking the weight and update which will satisfy
the maximum number of constraints. Finally, we can greedily apply this algorithm
to update multiple weights, at each step making the optimal change to a single
weight in the network. Thus, this greedy algorithm is guaranteed to both
monotonically increase the number of constraints met as well as terminate in a
finite number of steps.

\subsection{Efficiency of Patched Masking Networks}

We close this section with a discussion of the inference-time efficiency of
Masking Networks, in the process identifying another major benefit to
weight-wise patching. On first glance, masking networks appear to double the
amount of computation necessary to perform inference, and indeed, when there is
no relation between the activation and value parameters ($\theta^a$ and
$\theta^v$) this is the case.  However, this is \emph{not} necessarily the case
when the parameterizations \emph{share} many weight values.
As an extreme example, we have already noted that when $\theta^a = \theta^v$,
the network exactly corresponds to a normal \relu{} network. As a slightly more
complex example, if only a single weight in the last affine layer before a
\relu{} layer is changed, then the value of the corresponding node's output
activation and value vector coefficients differ by a constant difference in the
weights multiplied by the corresponding input coefficient, incurring a cost of
only one multiply-accumulate operation.
In that way, if only a small number of weights differ between the values and
activation network, the output of the masking network can be computed by
keeping only ``diffs'' of the values of the nodes during inference.
In~\pref{sec:Evaluation-Netpatch}, on two of our patch specifications we
patched the last affine layer before the last \relu{} layer (so at most one
multiply-accumulate operation of overhead) and in another one we patched the
very last affine layer, which is not followed by a \relu{} layer, so there was
no overhead at all.

\section{Experimental Evaluations}
\label{sec:Evaluation}

In this section, we apply the techniques discussed in this paper to a number of
interesting problems, comparing to prior work where applicable.

Tests were carried out on a dedicated Amazon EC2 c5.metal instance, using
Benchexec~\cite{benchexec} to limit the number of CPU cores to 16 and RAM to
16GB.  Our code is available at\\
\href{https://github.com/95616ARG/SyReNN}{https://github.com/95616ARG/SyReNN}.

\subsection{Understanding Network Behavior using Weakest Precondition}
\label{sec:Evaluation-Precondition}

\begin{figure}
    \centering
    \begin{subfigure}[t]{.30\linewidth}
        \includegraphics[width=\linewidth]{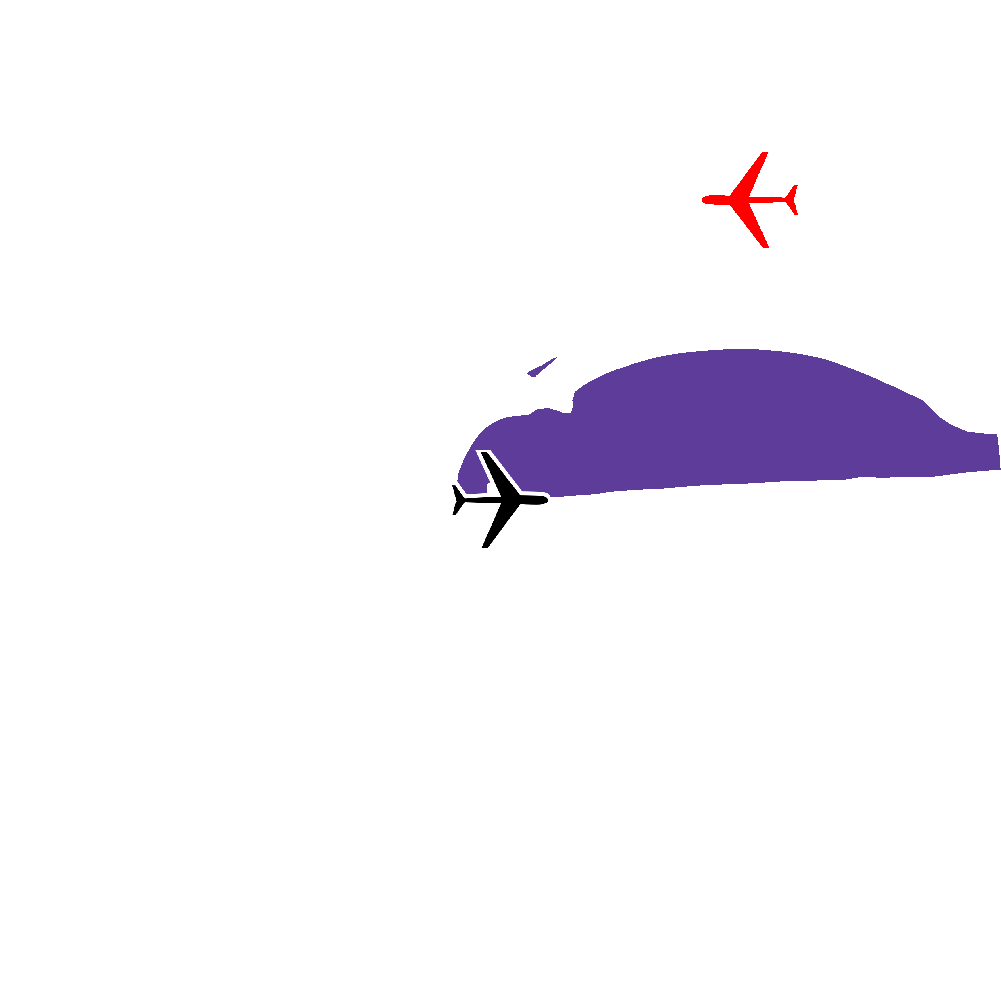}
        \caption{Weakest precondition for Strong Right
        using $\hatr{f}{X}$}
        \label{fig:netviz-fhat-sr}
    \end{subfigure}
    \begin{subfigure}[t]{.30\linewidth}
        \includegraphics[width=\linewidth]{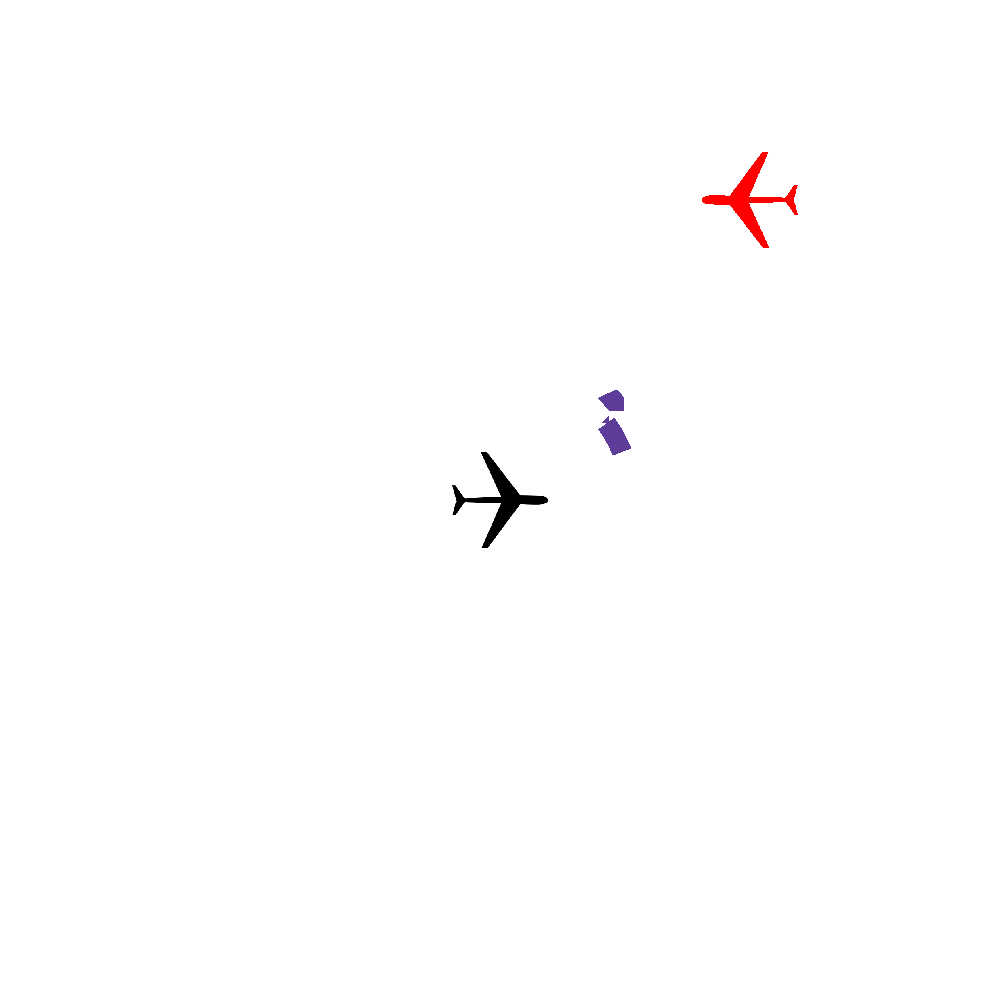}
        \caption{Precondition for Strong Right
        using $\DPPre[k=25]$}
        \label{fig:netviz-dp25-sr}
    \end{subfigure}
    \begin{subfigure}[t]{.30\linewidth}
        \includegraphics[width=\linewidth]{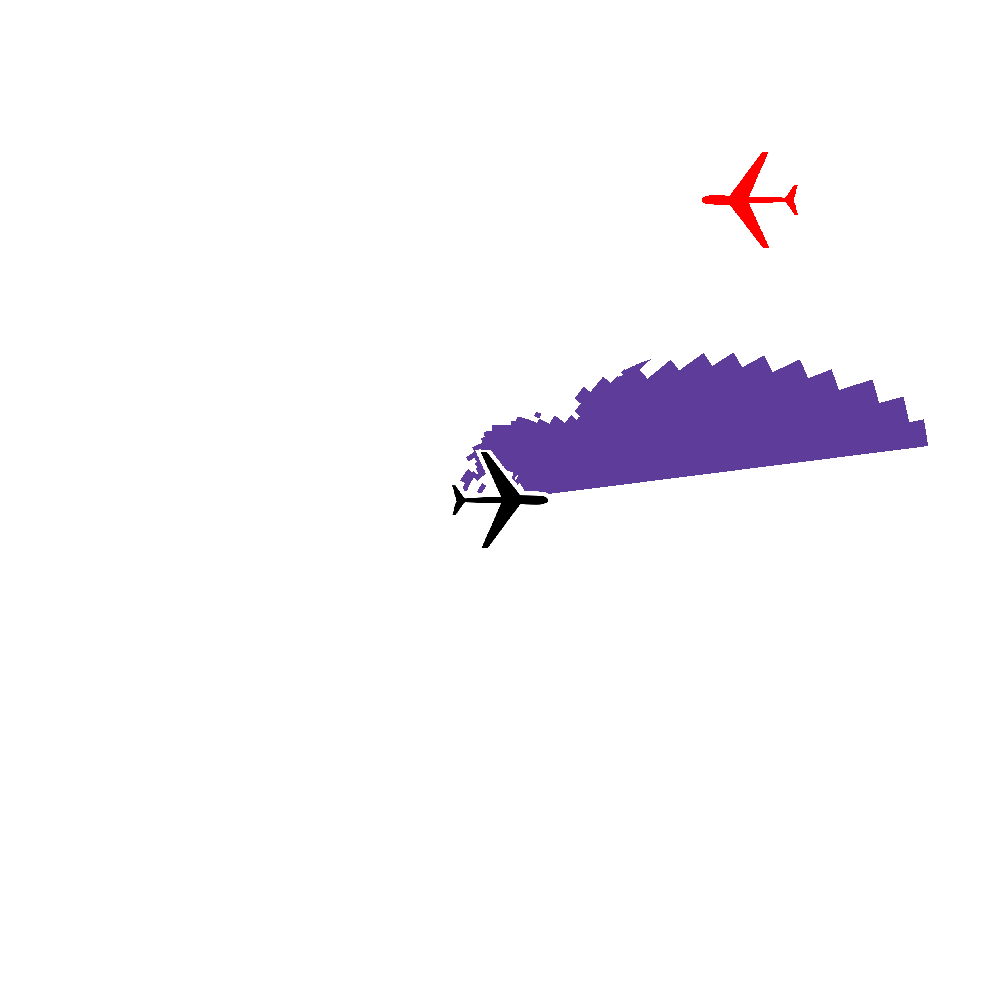}
        \caption{Precondition for Strong Right
        using $\DPPre[k=100]$}
        \label{fig:netviz-dp100-sr}
    \end{subfigure}
    \begin{subfigure}[t]{.30\linewidth}
        \includegraphics[width=\linewidth]{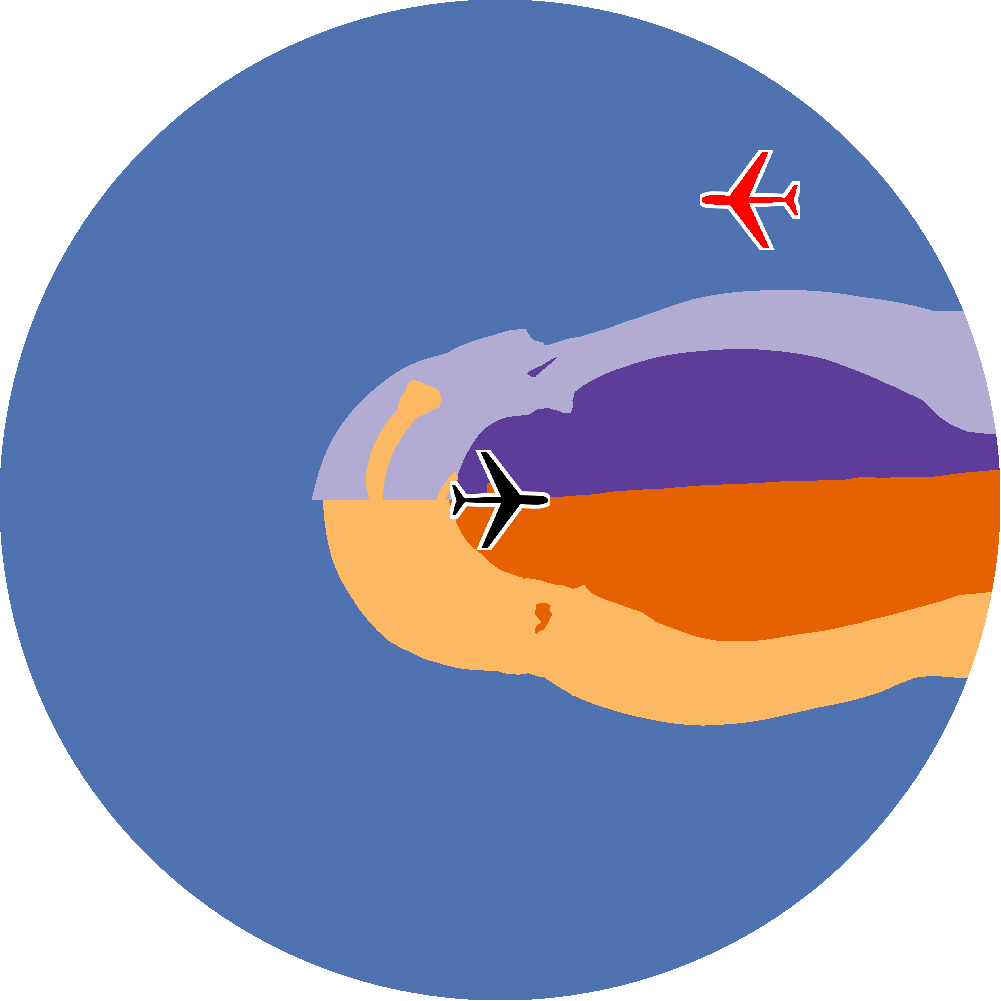}
        \caption{Decision boundaries computed using $\hatr{f}{X}$}
        \label{fig:netviz-fhat-all}
    \end{subfigure}
    \begin{subfigure}[t]{.30\linewidth}
        \includegraphics[width=\linewidth]{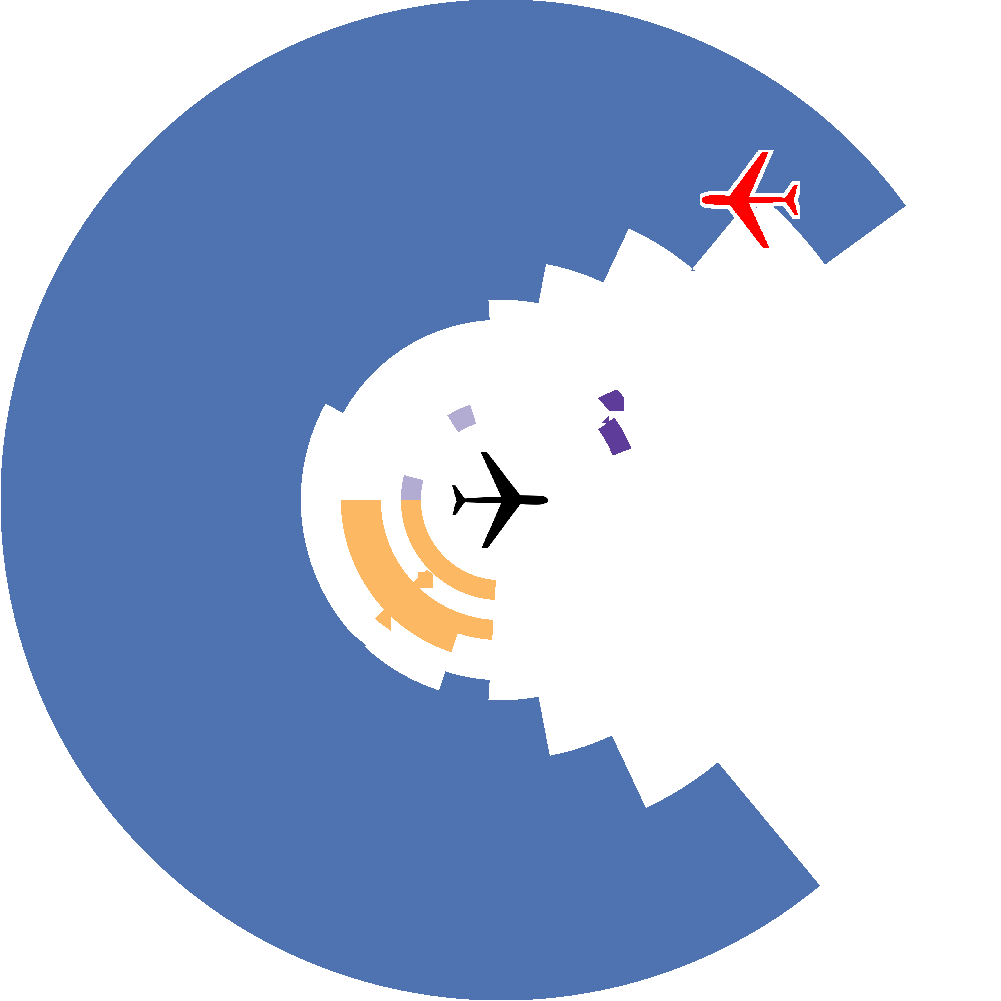}
        \caption{Decision boundaries computed using $\DPPre[k=25]$}
        \label{fig:netviz-dp25-all}
    \end{subfigure}
    \begin{subfigure}[t]{.30\linewidth}
        \includegraphics[width=\linewidth]{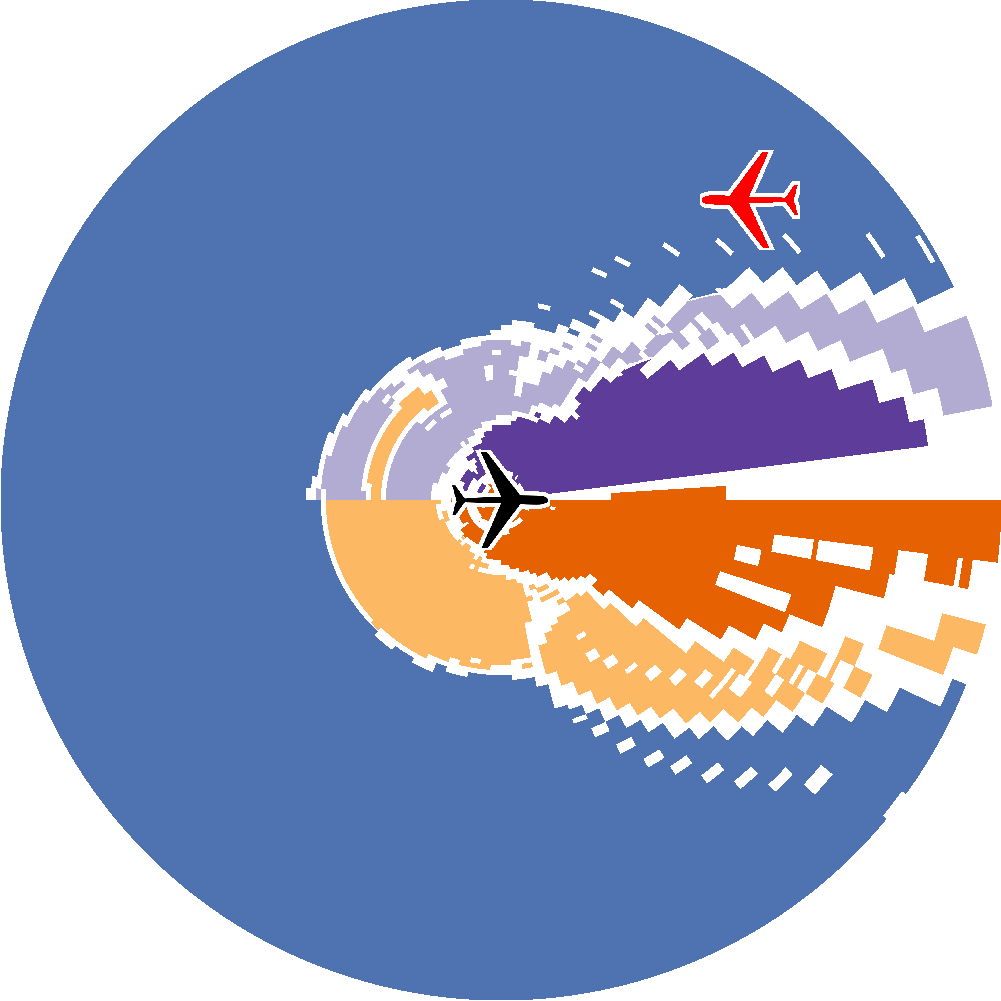}
        \caption{Decision boundaries computed using $\DPPre[k=100]$}
        \label{fig:netviz-dp100-all}
    \end{subfigure}
    \\
    Legend: \textcolor{COCcolor}{\rule[.2\baselineskip]{1em}{2pt}} Clear-of-Conflict,
    \textcolor{WRcolor}{\rule[.2\baselineskip]{1em}{2pt}} Weak Right, 
    \textcolor{SRcolor}{\rule[.2\baselineskip]{1em}{2pt}} Strong Right, 
    \textcolor{SLcolor}{\rule[.2\baselineskip]{1em}{2pt}} Strong Left, 
    \textcolor{WLcolor}{\rule[.2\baselineskip]{1em}{2pt}} Weak Left.
    \caption{Visualization of decision boundaries for the ACAS Xu network.}
    \label{fig:eval-netviz-acas}
\end{figure}

\begin{table}[t]
    \centering
    \caption{Performance of $\hatr{f}{X}$ and $\DPPre$ when used to compute
    weakest precondition and precondition for the ACAS Xu
    network~\citep{julian2018deep}.  $\hatr{f}{X}$ size is the number of
    partitions in the symbolic representation. $k$ is the number of splits used
    for $\DPPre$; the number of partitions used is then $k^2$. Each scenario
    represents a different two-dimensional slice of the input space; within
    each slice, the heading of the intruder relative to the ownship along with
    the speed of each involved plane is fixed.} {\small
    \begin{tabular}{@{}llllll@{}} \toprule
        & & & \multicolumn{3}{c}{$\DPPre$ time (secs)} \\
           \cmidrule{4-6}
        Scenario & $\hatr{f}{X}$ size & $\hatr{f}{X}$ time (secs) & k = 25 & k = 55 & k = 100 \\ \midrule 
        Head-On, Slow        & 33200 & 10.9 & 9.1 & 43.2 & 141.3 \\
        Head-On, Fast        & 30769 & 10.2 & 8.2 & 39.0 & 128.0 \\
        Perpendicular, Slow  & 37251 & 12.5 & 9.2 & 42.9 & 141.7 \\
        Perpendicular, Fast  & 33931 & 11.4 & 8.2 & 39.2 & 127.5 \\
        Opposite, Slow       & 36743 & 12.1 & 9.8 & 46.7 & 152.5 \\
        Opposite, Fast       & 38965 & 13.0 & 9.5 & 45.2 & 147.3 \\
        -Perpendicular, Slow & 36037 & 11.9 & 9.5 & 45.0 & 146.4 \\
        -Perpendicular, Fast & 33208 & 10.9 & 8.3 & 39.5 & 130.2 \\ \bottomrule
    \end{tabular}
    }
    \label{tab:eval-netviz-timing}
\end{table}

For this application, we wanted to investigate the following two research
questions:

\begin{enumerate}
    \item Can we efficiently visualize the decision boundaries of an ACAS Xu
        aircraft-avoidance network using $\hatr{f}{X}$?
    \item How does prior work ($\DPPre[k]$ using DeepPoly \cite{Singh:POPL2019})
        compare, both in efficiency and precision, when used to compute
        $\prer{f}{X}{Y}$?
\end{enumerate}

To investigate these concerns, we wrote programs to compute $\weakprer{f}{X}{Y}$
and $\prer{f}{X}{Y}$ for arbitrary polytopes $X$ and $Y$ using $\hatr{f}{X}$
(\pref{sec:WeakestPrecondition-WithHat}) and $\DPPre$ (\pref{sec:DPPre}),
respectively. Then, for each of the possible advisories produced by the
network, we plotted the precondition set in a human-visualizable form, with the
results shown in~\pref{fig:eval-netviz-acas}. Notably, the precision when using
$\DPPre[k]$ increases when the input domain is pre-partitioned (i.e., $k > 1$),
with the analysis being performed on each partition independently. Thus, we
have shown a progression of plots from the $\DPPre[k]$ approach, each one using
progressively more partitions $k$ for $\DPPre[k]$ (all converging to the
weakest precondition found using $\hatr{f}{X}$). Note that, as we used a
two-dimensional input region, $\DPPre[k]$ uses $k^2$ separate partitions and
calls to $\DPPre[1]$.

The imprecision from using DeepPoly in $\DPPre$ (see~\pref{fig:eval-netviz-acas})
is to be expected, as its symbolic representation was designed to efficiently
answer decision (yes/no) queries about relatively small regions of a
high-dimensional input space. By contrast, in this experiment we are stretching
the DeepPoly representation beyond that goal by extracting preconditions over
large regions of a low-dimensional input space.

\pref{tab:eval-netviz-timing} shows the time taken by 
each analysis. Note that the time taken does \emph{not} include plotting
time; only the time necessary to compute the precondition sets. 
As we increase the number of partitions (equal to $k^2$) used by $\DPPre$, the
precision goes up (as shown in Figures
\ref{fig:netviz-dp25-sr}-\ref{fig:netviz-dp100-sr}) but so does the time taken
for analysis. We find that, except for very small $k$ where the precondition
set is nearly-empty, the $\DPPre$ analysis is slower than that using $\hatr{f}{X}$
(which provides the \emph{exact} $\weakprer{f}{X}{Y}$).

Both of these results match our expectations; $\hatr{f}{X}$ is well-suited for
immediately determining $\weakprer{f}{X}{Y}$, while attempting to use the
DeepPoly representation (which is optimized for the decision procedure case) is
advisable only when precision is not particularly important.

\subsection{Bounded Model Checking of Safety Properties using Strongest Postcondition}
\label{sec:Evaluation-Postcondition}
For this application, we wanted to investigate the following two research
questions:

\begin{enumerate}
    \item Can we use $\hatr{f}{X}$ to efficiently perform bounded model
        checking on a neural-network controller?
    \item How efficient is prior work (ReluPlex \cite{reluplex:CAV2017}) at
        performing bounded model checking on a neural-network controller?
\end{enumerate}

\begin{figure}
    \centering
    \begin{subfigure}[t]{.30\linewidth}
        \begin{tikzpicture}
            \begin{axis}[scale=0.4, xlabel=Steps, ylabel=Time (s)]
                \addplot table [x=Step, y=Cumulative Time, col sep=comma] {data/netverify/pendulum-fhat.csv};
                \addplot table [x=Step, y=Cumulative Time, col sep=comma] {data/netverify/pendulum-rpx.csv};
            \end{axis}
        \end{tikzpicture}
        \caption{Pendulum}
        \label{fig:netverify-pendulum}
    \end{subfigure}
    \begin{subfigure}[t]{.30\linewidth}
        \begin{tikzpicture}
            \begin{axis}[scale=0.4, xlabel=Steps]
                \addplot table [x=Step, y=Cumulative Time, col sep=comma] {data/netverify/quadcopter-fhat.csv};
                \addplot table [x=Step, y=Cumulative Time, col sep=comma] {data/netverify/quadcopter-rpx.csv};
            \end{axis}
        \end{tikzpicture}
        \caption{Quadcopter}
        \label{fig:netverify-pendulum}
    \end{subfigure}
    \begin{subfigure}[t]{.30\linewidth}
        \begin{tikzpicture}
            \begin{axis}[scale=0.4, xlabel=Steps]
                \addplot table [x=Step, y=Cumulative Time, col sep=comma] {data/netverify/satelite-fhat.csv};
                \addplot table [x=Step, y=Cumulative Time, col sep=comma] {data/netverify/satelite-rpx.csv};
            \end{axis}
        \end{tikzpicture}
        \caption{Satelite}
        \label{fig:netverify-satelite}
    \end{subfigure}
    \caption{Performance of bounded model checking (BMC) for three neural-network controllers
    using $\hatr{f}{X}$ (blue line) and ReluPlex (red line).
    The x-axis is the number of steps used in BMC, y-axis is the time taken in
    seconds.  A timeout of 1 hour was used. For the Pendulum model, the
    $\hatr{f}{X}$ approach stopped after finding a counter-example on the 51st
    step (after approximately 30 minutes).}
    \label{fig:netverify-graph}
\end{figure}

\begin{table}[t]
    \centering
    \caption{Maximum number of steps verified by $\hatr{f}{X}$  and ReluPlex 
    approaches for BMC before timeout. Timeout was set to 1 hour. ``*''
    indicates a counter example was found after that many steps.}
    {\small
    \begin{tabular}{@{}lll@{}} \toprule
        Model & $\hatr{f}{X}$ Steps & ReluPlex Steps \\ \midrule 
        Pendulum & 51* & 2 \\
        Quadcopter & 25 & 6 \\
        Satelite & 13 & 4 \\ \bottomrule
    \end{tabular}
    }
    \label{tab:eval-netverify-summary}
\end{table}

To answer these questions, we chose a subset of neural-network controller
models from \cite{DBLP:conf/pldi/ZhuXMJ19} that have two-dimensional states and
affine state-transition functions. We utilized the initial, safe, and unsafe
sets from that work. As an initial optimization, we removed from the initial
set any states which could be guaranteed (based only on the environment
dynamics and maximum/minimum output bounds on the network controller) to map
into the initial set after one timestep. This resulted in a disjunctive initial
state.

We then performed bounded model checking of the models using ReluPlex \cite{reluplex:CAV2017} and
$\hatr{f}{X}$ as described in~\pref{sec:StrongestPostcondition}.
\pref{fig:netverify-graph} shows the time taken to perform the analysis for each
network up to a given timestep of the model, while \pref{tab:eval-netverify-summary} summarizes
these results.

As we can see, bounded model checking with $\hatr{f}{X}$  can be significantly more
efficient than using ReluPlex. This is primarily because the approach using
$\hatr{f}{X}$  \emph{directly} computes the strongest postcondition on network
output after each step, which can then be re-used efficiently in the
strongest-postcondition computation for the next timestep.  By contrast, re-use
between timestep computations for ReluPlex is not possible, meaning verifying
each progressive timestep becomes progressively more challenging.

\subsection{Patching Deep Neural Networks}
\label{sec:Evaluation-Netpatch}
\begin{figure}
    \centering
    \begin{subfigure}[t]{.30\linewidth}
        \begin{tikzpicture}
            \begin{axis}[scale=0.4, xlabel=Iteration, ylabel=\% Constraints Met]
                \addplot table [x=Iteration, y=Percent Constraints Met, col sep=comma] {data/netpatch/pockets.csv};
                \addplot table [x=Iteration, y=Percent Generalized Constraints Met, col sep=comma] {data/netpatch/pockets.csv};
            \end{axis}
        \end{tikzpicture}
        \caption{``Pockets'' Specification}
        \label{fig:netpatch-spec1}
    \end{subfigure}
    \begin{subfigure}[t]{.30\linewidth}
        \begin{tikzpicture}
            \begin{axis}[scale=0.4, xlabel=Iteration]
                \addplot table [x=Iteration, y=Percent Constraints Met, col sep=comma] {data/netpatch/bands.csv};
                \addplot table [x=Iteration, y=Percent Generalized Constraints Met, col sep=comma] {data/netpatch/bands.csv};
            \end{axis}
        \end{tikzpicture}
        \caption{``Bands'' Specification}
        \label{fig:netpatch-spec2}
    \end{subfigure}
    \begin{subfigure}[t]{.30\linewidth}
        \begin{tikzpicture}
            \begin{axis}[scale=0.4, xlabel=Iteration]
                \addplot table [x=Iteration, y=Percent Constraints Met, col sep=comma] {data/netpatch/symmetry.csv};
                \addplot table [x=Iteration, y=Percent Generalized Constraints Met, col sep=comma] {data/netpatch/symmetry.csv};
            \end{axis}
        \end{tikzpicture}
        \caption{``Symmetry'' specification.}
        \label{fig:netpatch-spec0}
    \end{subfigure}
    \caption{Weights changed vs. percent of constraints met. The blue line
    shows the percent of constraints met on the input slice that the patch
    targeted, while the red line shows the percent of constraints met on a
    slice at higher velocity.}
    \label{fig:netpatch-graph}
\end{figure}

\begin{table}[t]
    \centering
    \caption{Summarizes the percent of constraints met and time taken per
    iteration for patching three specifications. ``Patched Slice'' is the input
    domain of interest that the patch spec applies to, while ``Other Slice''
    shows how patching one slice can generalize to patches for another.}
    {\small
    \begin{tabular}{@{}lllllll@{}} \toprule
        & \multicolumn{4}{c}{Constraints Met} & & \\
           \cmidrule{2-5}
         & \multicolumn{2}{c}{Patched Slice} & \multicolumn{2}{c}{Other Slice} & \multicolumn{2}{c}{Time Per Iteration} \\
           \cmidrule{2-3} \cmidrule{4-5} \cmidrule{6-7}

        Patch    & Initial & Final    & Initial & Final  & Mean & Std. Dev. \\ \midrule 
        Pockets  & 99.6\%  & 99.994\% & 99.3\%  & 99.7\% & 23.6 & 15.2 \\
        Bands    & 99.6\%  & 99.98\%  & 99.7\%  & 99.8\% & 15.9 & 2.3 \\
        Symmetry & 88.5\%  & 97.0\%   & 88.9\%  & 95.6\% & 4.7  & 0.1 \\ \bottomrule
    \end{tabular}
    }
    \label{tab:eval-netpatch-summary}
\end{table}

\newcommand{\netpatchqualitative}[3]{
\begin{figure}
    \centering
    \begin{subfigure}[t]{.30\linewidth}
        \trimmedacas{figures/netpatch/before_patch}
        \caption{Before patching.}
        \label{fig:netpatch-#2-pre}
    \end{subfigure}
    \hfill
    \begin{subfigure}[t]{.30\linewidth}
        \trimmedacas{figures/netpatch/#1/patch_001}
        \caption{After one iteration (weight change).}
        \label{fig:netpatch-#2-iter0}
    \end{subfigure}
    \hfill
    \begin{subfigure}[t]{.30\linewidth}
        \trimmedacas{figures/netpatch/#1/patch_005}
        \caption{After five iterations (weight changes).}
        \label{fig:netpatch-#2-iter4}
    \end{subfigure}
    \hfill
    \begin{subfigure}[t]{.30\linewidth}
        \trimmedacas{figures/netpatch/before_patch-gen}
        \caption{Before patching (generalization region).}
        \label{fig:netpatch-#2-pre-gen}
    \end{subfigure}
    \hfill
    \begin{subfigure}[t]{.30\linewidth}
        \trimmedacas{figures/netpatch/#1/gen_001}
        \caption{Generalization after one iteration (weight change) on the patch region.}
        \label{fig:netpatch-#2-iter0-gen}
    \end{subfigure}
    \hfill
    \begin{subfigure}[t]{.30\linewidth}
        \trimmedacas{figures/netpatch/#1/gen_005}
        \caption{Generalization after five iterations (weight change) on the patch region.}
        \label{fig:netpatch-#2-iter4-gen}
    \end{subfigure}
    \\
    Legend: \textcolor{COCcolor}{\rule[.2\baselineskip]{1em}{2pt}} Clear-of-Conflict,
    \textcolor{WRcolor}{\rule[.2\baselineskip]{1em}{2pt}} Weak Right, 
    \textcolor{SRcolor}{\rule[.2\baselineskip]{1em}{2pt}} Strong Right, 
    \textcolor{SLcolor}{\rule[.2\baselineskip]{1em}{2pt}} Strong Left, 
    \textcolor{WLcolor}{\rule[.2\baselineskip]{1em}{2pt}} Weak Left.
    \caption{Network patching for the ``#2'' spec (#3).}
    \label{fig:netpatch-#2-images}
\end{figure}
}
\netpatchqualitative{spec_pockets}{Pockets}{Removing the ``pockets'' of strong-left and strong-right}
\netpatchqualitative{spec_bands}{Bands}{Removing the band of weak-left behind the origin}
\netpatchqualitative{spec_symmetry}{Symmetry}{Lowering the main decision boundary between strong-left and strong-right to become symmetrical}

For this application, we wanted to understand the following three research
questions:

\begin{enumerate}
    \item Can an ACAS Xu network be patched to correct a number of undesired
        behaviors?
    \item How well do patches on a single two-dimensional subsets of the input
        space generalize to (i.e. fix the same behavior on) other subsets of
        the input space?
    \item How well does the greedy MAX-SMT solver described in~\ref{sec:MAXSMT}
        work, both in terms of efficiency and optimization?
\end{enumerate}

To answer the first two questions, we took the ACAS Xu network visualized
in~\pref{fig:eval-netviz-acas} and attempted to patch three suspicious
behaviors of the network using the approach described
in~\pref{sec:PatchingNetworks}. The ``Pockets'' patch specification
(\pref{fig:netpatch-Pockets-images}) attempts to get rid of the ``pockets'' of
strong left/strong right in regions that are otherwise weak left/weak right.
The ``Bands'' specification (\pref{fig:netpatch-Bands-images}) attempts to get
rid of the weak-left region behind and to the left of the ownship (at the
origin). The ``Symmetry'' specification (\pref{fig:netpatch-Symmetry-images})
attempts to lower the decision boundary between strong left/strong right to be
symmetrical.

After patching, we plotted the patched network in the same two-dimensional
region it was originally plotted (and then patched) over, as well as another
two-dimensional region formed by increasing the velocities of the ownship and
attacking ship by $50$ kilometers per hour (to evaluate generalization to input
regions not explicitly patched). The before and after plots are presented
in~\pref{fig:netpatch-Pockets-images},~\pref{fig:netpatch-Bands-images},
and~\pref{fig:netpatch-Symmetry-images}.

To answer the third question quantitatively, we timed the performance on the
three evaluations shown above and present it
in~\pref{tab:eval-netpatch-summary} along with the percent of constraints met
at each step in~\pref{fig:netpatch-graph}.

We find that our proposed approach works very well; by changing only five
weights, we were able to near-perfectly apply most of the patches (eg. notice
the disappearance of the ``offending'' regions
between~\pref{fig:netpatch-Pockets-pre} and~\pref{fig:netpatch-Pockets-iter4}).
Furthermore, for the most part, the patches appear to \emph{generalize} to
regions not explicitly considered in the patching process (eg. notice the
disappearance of the out-of-place ``strong left'' region
between~\pref{fig:netpatch-Pockets-pre-gen}
and~\pref{fig:netpatch-Pockets-iter0-gen} after changing only a single weight).
The efficacy of the greedy MAX-SMT solver is further exemplified by the
quantitative results in~\pref{tab:eval-netpatch-summary}
and~\pref{fig:netpatch-graph}, which shows that it is very effective at quickly
meeting a large fraction of the desired constraints. However, patching was not,
in general, a ``silver bullet,'' and sometimes undesired changes were
introduced (cf.~\pref{fig:netpatch-Symmetry-iter4-gen}, which has introduced a
small ``strong right'' region into what was otherwise ``weak left'' on the
generalized region). Avoiding such issues is an interesting direction for
future work, but for the most part, we believe the results shown here to be
extremely promising for the application of network patching in practice.

\section{Related Work}
\label{sec:RelatedWork}

This section describes prior work relevant to each of the contributions of this
paper (\pref{sec:Contributions}).

\subsubsubsection{A symbolic representation of deep neural networks}
\citet{Xiang:arxiv2017} solve the problem of exactly computing the reach set of
a neural network given an arbitrary convex input polytope. 
However, the authors use an algorithm
that relies on explicitly enumerating all exponentially-many ($2^n$) possible
signs at each \relu{} layer. By contrast, our algorithm adapts to the actual
input polytopes, efficiently restricting its consideration to activations that
are actually possible.

\citet{DBLP:conf/nips/Thrun94} presents an early approach for extraction of
if-then-else rules from artificial neural networks.
\citet{DBLP:conf/nips/BastaniPS18} learn decision tree policies guided by a DNN
policy that was learned via reinforcement learning. This decision tree could be
seen as a particular form of symbolic representation of the underlying DNN.

\subsubsubsection{Understanding network behavior using weakest precondition.}
Because DNNs are difficult to meaningfully interpret, researchers have tried to
understand the behavior of such networks. For instance, networks have been
shown to be vulnerable to \emph{adversarial examples}---inputs perturbed in a
way imperceptible to humans but which are misclassified by the
network~\citep{Szegedy:ICLR2014,Goodfellow:ICLR2015,DeepFool:CVPR2016,carlini2018audio}--and
\emph{fooling examples}---inputs that are completely unrecognizable by humans
but recognized by DNNs~\citep{Nguyen:CVPR2015}. 
\citet{DBLP:conf/esann/BreutelMH03} presents an iterative refinement algorithm
that computes an overapproximation of the weakest precondition as a polytope
where the required output is also a polytope.

\subsubsubsection{Bounded model checking of safety properties using strongest postcondition.}
\citet{DBLP:conf/mbmv/ScheiblerWWB15} verify the safety of a machine-learning
controller with BMC using the SMT-solver iSAT3, but support small unrolling
depths and basic safety properties. \citet{DBLP:conf/pldi/ZhuXMJ19} use a
synthesis procedure to generate a safe deterministic program that can enforce
safety conditions by monitoring the deployed DNN and preventing potentially
unsafe actions.
The presence of adversarial and fooling inputs for DNNs as well as applications
of DNNs in safety-critical systems has led to efforts to verify and certify
DNNs~\cite{Bastani:NIPS2016,reluplex:CAV2017,Ehlers:ATVA2017,Huang:CAV2017,ai2:SP2018,Bunel:NIPS2018,Weng:ICML2018,Singh:POPL2019,Anderson:PLDI2019}.
\emph{Approximate reachability analysis} for neural networks safely
overapproximates the set of possible outputs
\citep{ai2:SP2018,Xiang:arxiv2017,Xiang:ACC2018,Weng:ICML2018,Dutta:NFM2018,ReluVal:Usenix2018}.

\subsubsubsection{Patching deep neural networks.}
Prior work focuses on enforcing constraints on the network during training.
DiffAI \citep{diffai2018} is an approach to train neural networks that are
certifiably robust to adversarial perturbations. DL2 \citep{fischer2019dl2}
allows for training and querying neural networks with logical constraints.

\section{Conclusion}
\label{sec:Conclusion}

We proposed a \emph{symbolic neural network representation}, denoted $\hat{f}$,
which decomposes a piecewise-linear neural network into its constituent affine
maps.
We applied this symbolic representation to three problems relating to trained
neural networks. First, we showed how the representation can be used to compute
\emph{weakest preconditions}, which allow one to exactly visualize network
decision boundaries. Next, we demonstrated the symbolic representation's use
for computing \emph{strongest postconditions}, allowing us to perform iterative
bounded model checking. Finally, we introduced the problem of \emph{patching}
neural networks, which can be interpreted as explicitly correcting their
decision boundaries. We presented an efficient and effective approach to
patching neural networks, relying on a new neural network architecture, the
symbolic representation, and a dedicated MAX SMT solver.

\begin{acks}
    We thank Nina Amenta, Yong Jae Lee, Mukund Sundararajan, and Cindy
    Rubio-Gonzalez for their feedback and suggestions on this work, along with
    Amazon Web Services Cloud credits for research.
\end{acks}

\bibliography{main}

\clearpage
\appendix
\section{\pref{thm:no-fhat}}
\label{app:no-fhat}
\ThmNoFHat*

\begin{proof}
    One such function is $f(x) = x^2$, which is clearly differentiable
    everywhere with $\frac{d}{dx}f = 2x$.

    Suppose there existed some $\hat{f} = \{ (P_1, F_1), (P_2, F_2), \ldots,
    (P_n, F_n) \}$. Then, as the domain of $f$ is infinite ($\mathbb{R}$) and
    is partitioned with finitely many polytopes, at least one $P_i$ must have
    infinitely many (and thus more than one) distinct points.

    Now, consider any two points $x_1 \neq x_2 \in P_i$. Then we have:

    \begin{equation}
        \begin{aligned}
            &\left(\frac{x_1 + x_2}{2}\right)^2 = F_i\frac{x_1 + x_2}{2} \\
            \implies &\frac{x_1^2 + 2x_1x_2 + x_2^2}{4} = \frac{F_ix_1 + F_ix_2}{2} \\
            \implies &\frac{x_1^2 + 2x_1x_2 + x_2^2}{4} = \frac{x_1^2 + x_2^2}{2} \\
            \implies &x_1^2 + 2x_1x_2 + x_2^2 = 2x_1^2 + 2x_2^2 \\
            \implies &2x_1x_2 = x_1^2 + x_2^2 \\
            \implies &0 = x_1^2 - 2x_1x_2 + x_2^2 \\
            \implies &0 = (x_1 - x_2)^2 \\
            \implies &x_1 = x_2 \\
        \end{aligned}
    \end{equation}

    Which contradicts our assumption that $x_1 \neq x_2$.
\end{proof}

\section{\pref{thm:no-restricted-fhat}}
\ThmNoRestrictedFHat*

\begin{proof}
    We observe that the argument from~\pref{thm:no-fhat} holds here as well;
    if $X$ is a non-singleton, non-empty polytope, then it contains infinitely
    many points and so any finite partitioning of it will have at least one
    partition containing more than one distinct point. From there, the proof is
    identical.
\end{proof}

\section{\pref{thm:only-restricted-fhat}}
\ThmOnlyRestrictedFHat*

\begin{proof}
    Consider the function $f(x_1, x_2) = x_1^2 - x_2^2 = (x_1 + x_2)(x_1 -
    x_2)$ and the bounded polytope $X$ defined by $\{ x \mid 0 \leq x_1 \leq 1
    \wedge x_1 - x_2 = 1 \}$.

    We first show that $\hat{f}$ does not exist. Suppose $\hat{f} = \{ (P_1,
    F_1), \ldots, (P_n, F_n) \}$ exists. Then, it holds that:

    \[
        \hatr{f}{Z} = \{ (P_1 \cap Z, F_1), \ldots, (P_n \cap Z, F_n) \}
    \]

    Also exists, where $Z$ is the convex polytope defined by $x_2 = 0$.
    However, $\restr{f}{Z} = f(x_1, 0) = x_1^2$, so this implies that
    $\hat{(x \mapsto x^2)}$ exists, contradicting the proof of non-existence
    given in~\pref{thm:no-fhat}.

    Finally, we show that $\hatr{f}{X}$ does exist, where $X = \{ x \mid x_1
    \leq 1 \wedge x_1 - x_2 = 1$. In that case, all $x$ values satisfy $x_1 -
    x_2 = 1$, so $f(x) = (x_1 + x_2)(x_1 - x_2) = x_1 + x_2$, which is affine.
    Thus $\hatr{f}{X} = \{ (X, x \mapsto x_1 + x_2) \}$ is a valid solution.
\end{proof}

\section{\pref{thm:PWL-Hat}}
\ThmPWLHat*

\begin{proof}
    Let our piecewise-linear function have linear regions $\{ R_1, \ldots, R_k
    \}$ with associated affine maps $\{ M_1, \ldots, M_k \}$.

    Let $\hat{g} = \{ (P_1, F_1), \ldots, (P_m, F_m) \}$.

    We note that, because the $P_i$s partition the domain of $g$, it suffices
    to show the construction on any of the $(P_i, F_i)$ pairs individually; the
    union of the results for each $(P_i, F_i)$ pairs will form $f \otimes
    \hat{g}$.

    Now, consider any $(P_i, F_i)$ pair in $\hat{g}$. We compute the
    \emph{post-set of $P_i$ under $F_i$}:

    \[
        Y_i = \{ F_ix \mid x \in P_i \}
    \]

    Which is a convex polytope, the vertices of which can be computed by
    applying $F_i$ to each of the finitely-many vertices of $P_i$ (this a
    well-known fact in convex geometry).

    Next, we compute $Y^j_i = Y_i \cap R_j$, where $R_j$ again is the $j$th
    linear region of $f$ associated with map $B_j$. This again is computable,
    as there are finitely many such $R_j$s and the intersection of two convex
    polytopes can be computed by conjunction of their half-space faces.

    We have now partitioned $Y_i$ according to the linear regions of $f$, all
    that remains is to find a corresponding partitioning of $P_i$ such that
    each partition $P^j_i$ maps to $Y^j_i$ under $F_i$.

    For each vertex $v_l$ of $Y^j_i$, we then compute $c_l$, a convex
    combination of the vertices of $Y_i$ that forms $v_l$, i.e.
    $\mathrm{Vert}(Y_i)\cdot c_l = v_l$. Notably, to deal with non-invertible
    $F_i$ maps, when multiple $Y^j_i$ share the same vertex $v_l$, we must pick
    the same combination $c_l$ (which can be accomplished via a memo dictionary
    of previously found $(v_l, c_l)$ pairs). Furthermore, when $v_l$ is also a
    vertex of $Y_i$, we must use the corresponding vertex from $P_i$.

    Now, for each of the $(v_l, c_l)$ pairs, we compute $p_l =
    \mathrm{Vert}(P_i)c_l$. We note that, because $F_i$ is affine:

    \[
        F_i(p_l) = F_i(\mathrm{Vert}(P_i)c_l) = \mathrm{Vert}(Y_i)c_l = v_l
    \]

    We note now that the $p_l$s partition $P_i$ because the $v_l$s partition
    $Y_i$ and we mapped each $v_l$ back uniquely to a $p_l$ (mapping vertices
    to vertices where applicable). We now define:

    \[
        P^j_i =
        \{ p_l \mid
        \text{ the corresponding } v_l \text{ is a vertex of } Y^j_i \}
    \]

    Finally, we claim that:

    \[
        \hat{f \circ g} = \{ (P^1_i, M_1 \circ A_i), (P^2_i, M_2 \circ A_i), \ldots, (P^k_i, M_k \circ A_i) \mid 1 \leq i \leq m \}
    \]

    This follows as all points in $P_i$ are first mapped to $F_ix \in Y^j_i$ by
    $g$ then each point $F_ix$ in $Y^j_i$ is mapped to $B_j(A_ix) = (B_jA_i)x$
    by the application of $f$.
\end{proof}

\section{Corollaries \ref{cor:fullyconnected-hat}---\ref{cor:maxpool-hat}}
These follow as all of the described functions are well-known to be
piecewise-linear.

\section{\pref{cor:network-hat}}
\label{app:network-hat}
\CorNetworkHat*

\begin{proof}
    Let the layers be $l_1, l_2, \ldots, l_n$ such that $f = l_n \circ l_{n-1}
    \circ \cdots \circ l_1$.

    Now, we have:

    \[
        \begin{aligned}
            l_n \otimes \cdots \otimes l_2 \otimes l_1 \otimes \hat{g}
            &= l_n \otimes \cdots \otimes l_2 \otimes \hat{(l_1 \circ g)} \\
            &= l_n \otimes \cdots \otimes \hat{(l_2 \circ l_1 \circ g)} \\
            \vdots \\
            &= \hat{(l_n \circ \cdots \circ l_1 \circ g)} \\
            &= \hat{(f \circ g)} \\
            &= f \otimes \hat{g} \\
        \end{aligned}
    \]

    Thus, $f \otimes \hat{g}$ is computable by repeated application of $l_i
    \otimes \cdot$ for descending $i > 0$, and $l_i \otimes \cdot$ was shown
    computable in Corollaries
    \ref{cor:fullyconnected-hat}---\ref{cor:maxpool-hat}.
\end{proof}

\section{\pref{thm:split-plane}}
\label{app:split-plane}
\ThmSplitPlane*

\begin{proof}
    It is well-known in computational geometry that the vertices of the
    intersection of a polytope with a hyperplane is the intersection of its
    edges with the plane. The algorithm computes the ratio along each edge that
    the intersection is reached using standard linear algebra, i.e.:

    \[
        \begin{aligned}
            &(Q + \alpha(R - Q))_d = 0 \\
            \implies &Q_d + \alpha(R_d - Q_d) = 0 \\
            \implies &\alpha = -\frac{Q_d}{R_d - Q_d} \\
        \end{aligned}
    \]
    Because $F$ is affine, the ratio holds when taking the preimage as well
    (which is what is returned).
\end{proof}

\section{\pref{thm:relu-extend}}
\label{app:relu-extend}
\ThmReluExtend*

\begin{proof}
    It suffices to show that the algorithm correctly partitions each input
    polytope $P$ such that the signs within a partition are constant. Notably,
    because of convexity, it suffices to show that the signs of the vertices of
    each partition are constant.

    We maintain two invariants every time we process some $P, F$ pair from the
    queue. The first is that the corresponding polytope will only be added to
    $Y$ if the signs of all vertices are constant (or zero). The second is that
    at each step, we partition the polytope into two new ones (using
    \SplitPlane) such that fewer sign switches happen in each than the original
    polytope. This follows from the correctness of the SplitPlane algorithm.

    The first invariant ensures that, if it halts, the algorithm is correct.
    The second ensures that it will halt, as there are only finitely many
    dimensions to consider.
\end{proof}

\section{\pref{thm:KeyPoints}}
\label{app:key-points}

\ThmKeyPoints*

\begin{proof}
    This follows as the function within each partition is affine, and thus
    convex. If all of the vertices fall inside of $Y$, then by definition of
    convexity all of the interior points must as well (and vice-versa).
\end{proof}

\end{document}